\documentclass[twoside,11pt]{article}
\usepackage{jair, theapa, rawfonts}
\jairheading{69}{2020}{923--961}{12/2019}{11/2020}
\ShortHeadings{Qualitative Numerical Planning: Reductions and Complexity}{Bonet \& Geffner}
\firstpageno{923}

\usepackage{theapa}
\usepackage{url} 
\usepackage[margin=1in]{geometry}

\usepackage{amsmath,amsthm,amssymb,mathtools}
\usepackage{booktabs}
\usepackage{enumerate}
\usepackage{xspace}

\usepackage[figure,vlined]{algorithm2e}
\setlength{\algomargin}{0em}

\usepackage{tikz}
\usetikzlibrary{arrows.meta, calc, positioning, backgrounds}

\tikzset{%
  diagonal fill/.style 2 args={fill=#2, path picture={
    \fill[#1, sharp corners] (path picture bounding box.south west) -|
                             (path picture bounding box.north east) -- cycle;}},
  reversed diagonal fill/.style 2 args={fill=#2, path picture={
    \fill[#1, sharp corners] (path picture bounding box.north west) |-
                             (path picture bounding box.south east) -- cycle;}}
}

\newcommand{\Omit}[1]{}
\newcommand{\tup}[1]{\langle #1 \rangle}

\newcommand{\citeay}[1]{\shortciteauthor{#1}\ \citeyear{#1}}

\newtheorem{definition}{Definition}
\newtheorem{theorem}[definition]{Theorem}

\newenvironment{example}{\bigskip\noindent\textbf{Example.}}{\qed}
\newtheorem*{claim*}{Claim}



\newcommand{\C}{\mathcal{C}}

\newcommand{\abst}[2]{\tup{#1; #2}}

\newcommand{\EQ}[1]{#1{\,=\,}0}
\newcommand{\GT}[1]{#1{\,>\,}0}
\newcommand{\GE}[1]{#1{\,\geq\,}0}
\newcommand{\DEC}[1]{#1\mminus}
\newcommand{\INC}[1]{#1\pplus}




\renewcommand{\S}{\mathcal{S}}
\newcommand{\Q}{\mathcal{Q}}
\newcommand{\R}{\mathcal{R}}
\newcommand{\G}{\mathcal{G}}

\newcommand{\pplus}{\hspace{-.05em}\raisebox{.15ex}{\footnotesize$\uparrow$}}
\newcommand{\mminus}{\hspace{-.05em}\raisebox{.15ex}{\footnotesize$\downarrow$}}

\newcommand{\Eff}{\mathit{Eff}\xspace}
\newcommand{\sieve}{{\sc Sieve}\xspace}

\newcommand{\qnptofond}{\texttt{qnp2fond}\xspace}
\newcommand{\Qnptofond}{\texttt{Qnp2fond}\xspace}

\title{Qualitative Numerical Planning: Reductions and Complexity}
\author{\name Blai Bonet \email bonet@usb.ve \\ 
        \addr Universidad Sim\'on Bol\'{\i}var \\
        Caracas, Venezuela
        \AND
        \name Hector Geffner \email hector.geffner@upf.edu \\
        \addr ICREA \& Universitat Pompeu Fabra \\
        Barcelona, Spain
}

\begin{document}
\allowdisplaybreaks

\maketitle

\begin{abstract}
Qualitative numerical planning is classical planning extended with
non-negative real variables that can be increased or decreased ``qualitatively'',
i.e., by positive indeterminate amounts. While deterministic planning with numerical variables
is undecidable in general, qualitative numerical planning is decidable and provides a convenient abstract model for
generalized planning.
The solutions to qualitative numerical problems (QNPs) were shown 
to correspond to the strong cyclic
solutions of an associated fully observable non-deterministic (FOND) problem that terminate.
This leads to a generate-and-test algorithm
for solving QNPs where solutions to a FOND problem are generated
one by one and tested for termination. The computational shortcomings of this
approach for solving QNPs, however, are that it is not simple to amend FOND planners to generate
all solutions, and that the number of solutions to check can be doubly exponential
in the number of variables. In this work we address these limitations while providing
additional insights on QNPs. More precisely, we introduce two polynomial-time reductions, one from QNPs to
FOND problems and the other from FOND problems to QNPs both of which do not involve termination tests.
A result of these reductions is that QNPs are shown to have the same expressive power
and the same complexity as FOND problems.
\end{abstract}

\section{Introduction}

Qualitative numerical  problems (QNPs)  are classical planning problems extended
with non-negative  numerical variables $X$ that can be  decreased or increased ``qualitatively'',
i.e., by positive indeterminate amounts. Since such numerical variables cannot  be used for counting,
QNP planning, unlike most other general forms of planning with numbers \shortcite{helmert:numeric}, turns out to be  decidable.
QNPs were introduced by \citeay{sid:aaai2011} as a useful model for \emph{generalized planning},
namely, the synthesis of plans that solve multiple classical planning instances
\shortcite{levesque:loops,bonet:icaps2009,srivastava:generalized,hu:generalized,bonet:ijcai2015,BelleL16,anders:review}. 
Basically, collections $\Q$ of planning instances $P$ that share the same set of
actions and state features may be often expressed as a single QNP problem $Q$
whose solutions, that map state features into actions,  solve all problems
$P$ in $\Q$ \shortcite{bonet:ijcai2018}.

QNPs can be  solved in two steps \shortcite{sid:aaai2011}. First,  the  QNP $Q$ is converted into a standard
fully observable non-deterministic (FOND) problem $P$  \shortcite{cimatti:fond}. Then, solutions of  $P$ obtained by an off-the-shelf FOND planner
are tested for \emph{termination}. This last step is required because the non-determinism in the FOND problem $P$ is not fair
in the usual sense: the trajectories that are not fair are  those in which a variable is decreased an infinite number of times
but  increased a finite number of times only    \shortcite{bonet:ijcai2017}. The policies that solve $P$  in which the (QNP) fair trajectories
all reach the goal, and which   correspond to the  terminating strong cyclic  policies for $P$,  are the policies that solve the QNP $Q$  \shortcite{sid:aaai2011}.

The computational shortcomings of solving QNPs following this  generate-and-test approach,
however, are two. First, it is not simple to amend  FOND planners to generate all the solutions
of a FOND problem because FOND plans are not action sequences but closed-loop policies.
Second, the number of policies that  need to be tested for termination may be huge:
exponential in the number of FOND states, and hence, doubly exponential in the number of variables.

In this work we address these limitations while providing
additional insights on   QNPs. We introduce  two polynomial-time
reductions, one from QNPs to FOND problems and the  other from FOND problems to
QNPs. As every (formal) reduction, both reduction are sound and complete, and hence
do not require termination tests. A result of these reductions is that QNPs
are shown to have the same expressive power and in particular, the plan-existence
decision problem for both have the same complexity EXP-Complete \shortcite{littman:fond,rintanen:po}.
The new QNP to FOND translator is implemented and available.
In combination with sound and complete FOND planners, the translation
yields   the only sound and complete QNP planner available (i.e., solver that
works directly on the factored QNP representation without explicitly flattening
the input QNP).

The structure of the paper is as follows. We review first classical planning, FOND planning, QNPs,
the direct  translation of QNPs into FOND problems, and the termination test. We follow
ideas from \citeay{sid:aaai2011} but in a slightly different and more expressive  formulation.
We then introduce the two new reductions: from  FOND problems into QNPs,  and  from QNPs into FOND problems.
This last reduction is very different from the one sketched  by \citeay{bonet:ijcai2017} which is incorrect
(we illustrate this with an example). We then consider variations and extensions of QNPs,
the use of QNPs for generalized planning,  experimental results, and  related work.

\section{Classical and FOND planning}

A classical planning problem is a  sequential decision problem
where a goal is to be reached by performing actions with
deterministic effects from a given initial state.
These problems are usually  expressed in compact
form in planning languages such as STRIPS \shortcite{fikes:strips,russell:book}.
A (grounded)  STRIPS planning problem (with negation)
is a tuple $P=\tup{F,I,O,G}$ where $F$ denotes a set of propositional variables,
$I$ and $G$ are  sets of $F$-literals representing the  initial and goal situation,
and $O$ is a set of actions $a$ with preconditions and effects
$Pre(a)$ and $\Eff(a)$ given by sets of $F$-literals.

The \emph{state model}  $\S(P)$ for the problem $P=\tup{F,I,O,G}$
is a tuple   $\S(P)=\tup{S,s_0,Act,A,f,S_G}$
where $S$ is the set of possible truth-valuations over the $F$ literals,
called the states, $s_0$ is the initial state, $Act=O$,
$A(s)$ represents the actions $a$ in $Act$ whose preconditions
are true in $s$,  $f(a,s)$ represents the state $s'$ that
follows action $a$ in $s$ for $a \in A(s)$,
and $S_G$ is the  set of goal states. It is assumed that
the problem $P$ is consistent in the sense that
$s_0$ and $f$ are well-defined and $S_G$ is not empty.
A solution to a classical  problem $P$ is an  action sequence
$a_0, \ldots, a_n$ that generates a state sequence $s_0,\ldots, s_{n+1}$
over the model $\S(P)$  that reaches the goal. In this sequence,
$a_i \in A(s_i)$ and $s_{i+1}=f(a_i,s_i)$ for $i=0, \ldots, n$,
and $s_{n+1} \in S_G$.

A \emph{fully-observable non-deterministic (FOND) problem} $P$ is like a classical planning problem
except that actions $a$ may have  non-deterministic effects
expressed as $\Eff_1(a) \, | \, \cdots \, | \Eff_n(a)$ where
$\Eff_i(a)$ is a set of $F$-literals as above \shortcite{cimatti:fond,geffner:book,ghallab:new-book}.
The state model  $\S(P)$ determined by a FOND problem $P=\tup{F,I,O,G}$
is a tuple   $\S(P)=\tup{S,s_0,Act,A,F,S_G}$ as above with the difference
that the state transition function $F$ is non-deterministic, and
maps an action $a$ and state $s$ into a non-empty set $F(a,s)$ of possible successor
states.
As usual, the non-deterministic transition function $F$ is given
in factored form. That is, for action $a$ made of multiple effects
$\Eff_1\,|\,\cdots\,|\,\Eff_n$ (possibly deterministic when $n=1$),
each outcome $s'$ in $F(a,s)$ results of the choice of one $\Eff_i$
for each non-deterministic effect of $a$.\footnote{\label{foot:1}As it is
  standard, any choice of effects is assumed to be consistent (i.e., any pair of
  choices for two different non-deterministic effects of the
  \emph{same action} contain no complementary literals).
  However, with some (polynomially bounded) extra work, our methods, algorithms
  and results still apply if the model is extended with \emph{constraints} that
  every outcome $s'$ must satisfy, when such constraints are given in
  suitable form; e.g.\ DNF formulas over $F$.
}

The solutions of FOND problems ensure that the goal is reached  with certainty
under certain  fairness assumptions. Policies or plans in the FOND setting are
partial functions $\pi$ mapping states $s$ into actions $\pi(s)$. A state
trajectory $s_0, s_1, \ldots, s_n$ (finite or infinite) is induced by $\pi$
over the model $\S(P)$ if the action $a_i=\pi(s_i)$ is defined,
it is applicable in the state $s_i$, i.e., $a_i \in A(s_i)$,
and $s_{i+1}$ is in $F(a_i,s_i)$, $i=1, \ldots, n-1$.
The trajectory is  said to be a \emph{$\pi$-trajectory.}
The trajectory is \emph{maximal} if A)~it is  infinite, i.e.,  $n=\infty$,
and does not include a goal state, B)~if $s_n$ is the first  goal state in the sequence, or
C)~the action $\pi(s_n)$ is not defined or not applicable in $s_n$.

A  policy $\pi$ is a solution of the FOND problem $P$ if all the \emph{fair}
maximal trajectories induced by $\pi$ over the model $\S(P)$ are goal reaching
\shortcite{strong-cyclic,cimatti:fond}. The so-called \emph{strong solutions} assume that all state trajectories are
fair. \emph{Strong-cyclic solutions}, on the other hand, assume  that all trajectories are fair \emph{except}
the infinite trajectories where a state $s$ occurs infinitely often but a  state transition $(s,s')$
for some $s' \in F(a,s)$ for $a=\pi(s)$, occurs finitely often.
The latter trajectories are deemed to be  \emph{unfair}.

Other \emph{equivalent} characterizations of strong and strong cyclic solutions are common.
For example, a strong cyclic solution $\pi$ for a FOND problem $P$ is also a policy
$\pi$ such that for each $\pi$-trajectory connecting an initial state to a state $s$,
there is a $\pi$-trajectory connecting $s$ to a goal state. Similarly, a strong solution is
a strong cyclic solution $\pi$  with no cycles; i.e., one where   no $\pi$-trajectory visits
the same state twice.

Strong solutions  can also  be thought as winning strategies against an adversary,
while strong cyclic solutions as winning strategies against nature. Indeed,
there is a well known relation between (proper)  policies that achieve the goal with probability
1 in goal-based MDPs (Markov Decision Processes) and the strong cyclic policies
that solve the FOND problem associated with the MDP, where the transition function
is such that $F(a,s)$ collects the states $s'$ that are possible
after action $a$ in  $s$, i.e., for which $P_a(s'|s) > 0$ \shortcite{geffner:book}.

From now, by solution of a FOND problem we mean a \emph{strong cyclic solution} or \emph{plan},
and by a FOND planner, we mean a strong cyclic planner, i.e., a planner that produces strong cyclic solutions.
There are some good FOND planners available, including PRP \shortcite{prp}, based on classical planners,
MyND \shortcite{mynd}, based on heuristic AND/OR search, and FOND-SAT \shortcite{geffner:fond-sat}, based on a reduction to SAT.

\section{Qualitative Numerical Problems}

\emph{Qualitative numerical problems  (QNPs)} are  classical
planning problems extended with numerical variables that can be
decremented or incremented ``qualitatively''. We make this formal below.

\subsection{QNPs: Syntax}

The syntax of QNPs is defined as an extension of the  STRIPS language
with negation. A QNP is a tuple $Q=\tup{F,V,I,O,G}$ where the new
component  is a set $V$ of \emph{non-negative numerical variables} $X \in V$.
These variables introduce the  non-propositional atoms $X=0$ and their negations,
denoted as $X > 0$. These  literals can appear in the initial situation, action preconditions, and goals
of $Q$. The effects of actions $a$ on a numerical variable $X$ can be  only qualitative increments
or qualitative decrements  denoted by the expressions $Inc(X)$ and $Dec(X)$, often abbreviated
as $\INC{X}$ and $\DEC{X}$ respectively.  We refer to $X=0$ and $X > 0$ as the $V$-literals for $X \in V$, and
to $p$ and $\neg p$ for $p \in F$, as the $F$-literals in $Q$.

\begin{definition}
  \label{def:qnp:syntax}
  A QNP is a tuple $Q=\tup{F,V,I,O,G}$ where $F$ and $V$ are  sets of propositional and numerical
  variables respectively, $I$ and $G$ denote the initial and goal situations, and $O$ is a set of
  actions $a$ with preconditions,  and propositional  and numerical effects that are denoted
  as $Pre(a)$, $\Eff(a)$, and $N(a)$ respectively. The $F$-literals can appear in
  $I$, $G$, $Pre(a)$, and $\Eff(a)$, while $V$-literals can appear in $I$, $G$, and $Pre(a)$.
  The numerical effects $N(a)$ only contain special atoms of the form $Inc(X)$ or $Dec(X)$ for
  the variables $X$ in $V$. Actions with the $Dec(X)$ effect must feature the precondition
  $\GT{X}$ for any variable $X$ in $V$.
\end{definition}


The preconditions and  effects  of  an action $a$ are denoted as  pairs $\abst{Pre(a)}{\Eff(a), N(a)}$
where $N(a)$ contains the numerical effects; namely, expressions like  $\INC{X}$ and $\DEC{X}$ that
stand for the increment and decrement of variable $X$ respectively.
QNPs are assumed to be syntactically consistent by requiring that no pair
of complementary literals or qualitative effects appears in the initial situation,
action effects, or goals. A pair of complementary literals or qualitative effects
has the form $\{p,\neg p\}$ for some $p$ in $F$, or $\{\EQ{X},\GT{X}\}$ or $\{\DEC{X},\INC{X}\}$
for some $X$ in $V$.

\begin{example}
An  \emph{abstraction}  that is suitable for expressing the generalized problem
of achieving the goal $clear(x)$ over an  arbitrary Blocksworld instance \shortcite{bonet:ijcai2018}
is given  in terms of the QNP $Q_{clear}=\tup{F,V,I,O,G}$ where   $F=\{H\}$
contains a boolean variable $H$  that represents if the  gripper is holding a block, $V=\{n(x)\}$  contains a numerical
variable  $n(x)$ that represents  the number of blocks above $x$, and  $I=\{\neg H, \GT{n(x)}\}$ and  $G=\{\EQ{n(x)}\}$
represent the initial and goal situations. The actions $O=\{a,b\}$ are
\begin{alignat}{1}
  \label{eq:ex1a}
  a\ &=\ \abst{\neg H, \GT{n(x)}}{H, \DEC{n(x)}}
  \intertext{and}
  \label{eq:ex1b}
  b\ &=\ \abst{H}{\neg H} \,.
\end{alignat}
\indent It is easy to see that the first action $a$ picks up blocks that are above $x$;
its first precondition $\neg H$ expresses that the gripper is holding no block,
while the second $\GT{n(x)}$ that there is at least one block above $x$.
The effects, on the other hand, make $H$ true (expressing that some block is being held)
and decrease the number $n(x)$ of blocks above $x$.
The other action $b$ puts the  block  being held away from block $x$ as expressed
by the  precondition $H$ and effect $\neg H$. The fact that $b$ puts blocks
away from block $x$ is reflected in that it does not affect the variable $n(x)$.

The QNP $Q_{clear}$  captures the relevant part of the infinite collection of Blocksworld instances
where the goal is to achieve the atom $clear(x)$ for some block $x$. The solution to $Q_{clear}$
provides the general strategy for solving all such instances.
For ways of learning such abstractions automatically; see the recent work of \citeay{bonet:aaai2019}.
\end{example}

\subsection{QNPs: Semantics}

A state $s$ for QNP $Q=\tup{F,V,I,O,G}$ is a valuation that assigns a truth
value $s[p]$  to each boolean variable $p \in F$, and a non-negative real value $s[X]$
to each numerical variable $X \in V$. Since the initial situation $I$ can only feature
atoms of the form $\EQ{X}$ or $\GT{X}$, there is a \emph{set} $S_0$ of possible initial states $s_0$
that correspond to the valuations that satisfy the literals in $I$.
For example, in $Q_{clear}$, $I$ is given by the literals $I=\{\neg H, \GT{n(x)}\}$,
meaning that $S_0$ contains all and only the valuations that make $H$ false and $n(x){\,=\,}r$
for some positive real number $r$. The use of variables that can take real values
for representing integer counters illustrates that the semantics of QNPs is coarse-grained,
and  for  this reason, decidable. Indeed, QNPs use just one qualitative property of numbers; namely,
that a non-negative  variable eventually must reach the value of zero if it keeps being decremented
and not incremented. This property is true for integers, and it also true for  reals,
as long as the magnitude of the decrements is bounded from below by some positive $\epsilon$-parameter.
More about this below.
The state model  $\S(Q)$ represented  by a QNP can be characterized as follows:

\begin{definition}
  \label{def:qnp:state-model}
  A QNP  $Q=\tup{F,V,I,O,G}$ determines a non-deterministic state model $\S(Q)=\tup{S,S_0,Act,A,F,S_G}$ where
  \begin{enumerate}[$\bullet$]
    \item the states $s$ in $S$ are the valuations that assign  a truth  value to
      the boolean variables in $F$ and a non-negative real value to the numerical variables in $V$,
    \item the initial states $s_0$ in $S_0$ are those that satisfy the literals in $I$ under a closed-world
      assumption ($s_0$ makes $p$ and $\EQ{X}$ false if the literals $p$ and $\EQ{X}$ are not in $I$),
    \item the actions in $Act$ are those in $O$; i.e.,  $Act=O$,
    \item the actions $A(s)$ applicable in $s$ are those in $Act$ such that $Pre(a)$ is true in $s$,
    \item the goal states in $S_G$ are those that satisfy $G$,
    \item the transition function $F$ is such that $s' \in F(a,s)$ for $a \in A(s)$ if
      \begin{enumerate}
        \item $s'[p]$ is $true$ (resp. $false$) if $p$ (resp. $\neg p)$ is in $\Eff(a)$,
        \item $s[X] < s'[X]$ if $Inc(X)$ is in $N(a)$,
        \item $s'[X] < s[X]$ if $Dec(X)$  is in $N(a)$,
        \item $s'[p] = s[p]$ if neither $p$ nor $\neg p$ in $\Eff(a)$,
        \item $s'[X] = s[X]$ if neither $\INC{X}$ nor $\DEC{X}$ in $N(a)$.
      \end{enumerate}
  \end{enumerate}
\end{definition}

A \emph{trajectory} $s_0, a_0, s_1, a_1, \ldots, s_n$
is compatible with the model $\S(Q)=\tup{S,S_0,Act,A,F,S_G}$ if
$s_0 \in S_0$, and $a_i \in A(s_i)$ and $s_{i+1} \in F(a_i,s_i)$ for each $a_i$ in the sequence.
The   trajectory is an $\epsilon$-bounded trajectory  or  \emph{$\epsilon$-trajectory}
if the  numerical changes are bounded from below by a parameter $\epsilon > 0$,
except when this would make the variable negative:

\begin{definition}
  \label{def:qnp:epsilon-trajectory}
  A trajectory   $s_0, a_0, s_1, a_1, \ldots, s_n$  is an \emph{$\epsilon$-trajectory}
  iff for any variable $X$ and time point $i$, with $i < n$,
  $s_{i+1}[X] \neq s_{i}[X]$ implies $|s_{i+1}[X] -  s_{i}[X]| \geq \epsilon$ or $s_{i+1}[X]=0$.
\end{definition}

Trajectories bounded by  $\epsilon > 0$  cannot decrease the value of a variable asymptotically
without ever reaching the value of zero. This is  in agreement with the key assumption in QNPs
by which  variables  that keep being decreased and not  increased  eventually must reach the
value  zero. From now, \textbf{trajectories over QNPs will refer to $\epsilon$-trajectories for some $\epsilon > 0$.}

\subsection{QNPs: Solutions}

Solutions to QNPs  take the form of partial functions or policies $\pi$ that  map  states into actions.
The choice of the action $\pi(s)$ to be done in a state $s$,  however, can only depend
on the truth  values $s[p]$  associated with the boolean variables $p$ in $F$
and the truth values of the expressions $s[X]=0$ associated with the numerical variables
$X$ in $V$. If we use the notation  $s[X=0]$ to refer to $s[X]=0$,
then $\pi(s)$ must depend solely  on the \emph{truth-valuation over the $F$-literals $p$ and the
  $V$-literals $X=0$} that are determined by the state $s$. There is indeed a finite number of  such truth valuations
but an  infinite number of states. We refer to such truth  valuations as the  \emph{boolean states}
of the QNP and denote the boolean state associated with a state $s$ as $\bar s$.

\begin{definition}[Policy]
  \label{def:qnp:policy}
  A policy $\pi$ for a QNP $Q=\tup{F,V,I,O,G}$  is a partial mapping of states  into actions
  such that $\pi(s)=\pi(s')$ if $\bar s=\bar s'$.
\end{definition}

A  trajectory $s_0, a_0, s_1, a_1, \ldots,s_n$ compatible with the model
$\S(Q)$ is said to be a  \emph{$\pi$-trajectory} for $Q$ if  $a_i = \pi(s_i)$.
Sometimes, a $\pi$-trajectory is simply denoted as a sequence of states since the actions are determined by $\pi$.
A $\pi$-trajectory is also said to be a \emph{trajectory induced by} $\pi$ or compatible with $\pi$.
As before, a $\pi$-trajectory is \emph{maximal} if A) the trajectory is infinite  and does not include a goal state,
B)~$s_n$ is the first goal state in the trajectory, or C)~$\pi(s_n)$ is undefined or denotes  an action that is not applicable in $s_n$.
The solutions to QNPs are defined then  as follows:

\begin{definition}[Solution]
  \label{def:qnp:solution}
  Let $Q$ be a QNP and let $\pi$ be a policy for $Q$.
  The policy  $\pi$ \emph{solves}  $Q$ iff for every $\epsilon>0$, all the maximal
  $\epsilon$-trajectories induced by $\pi$ reach a goal state.
\end{definition}

We will see that solutions to QNPs can be characterized equivalently  in terms of a suitable notion of \emph{QNP-fairness};
namely,  a policy $\pi$ solves $Q$ iff  every  maximal (QNP) fair trajectory induced by $\pi$ reaches the goal,
where the \emph{unfair} trajectories are those in which some variable $X$ is decreased infinitely often but increased
finitely often. For historical reasons,  such unfair trajectories are  called \emph{terminating} instead,
as indeed they cannot go on  forever if the decrements are bounded from below by some $\epsilon > 0$.

\begin{example}
Consider the QNP $Q_{clear}=\tup{F,V,I,O,G}$ from above with $F=\{H\}$, $V=\{n(x)\}$,
$I=\{\neg H,\GT{n(x)}\}$, $G=\{\EQ{n(x)}\}$, and $O=\{a,b\}$ where
$a=\abst{\neg H, \GT{n(x)}}{H, \DEC{n(x)}}$ and $b=\abst{H}{\neg H}$.
Let $\pi$ be the policy defined by the rules:
\begin{alignat}{1}
  &\text{if $\neg H$ and $\GT{n(x)}$, then do $a$} \,, \\
  &\text{if $H$ and $\GT{n(x)}$, then do $b$} \,.
\end{alignat}
All the maximal $\epsilon$-bounded trajectories that are induced by
the policy $\pi$ on $Q_{clear}$ have the form
\begin{alignat}{1}
  s_0,a,s_1,b,s_2,a,s_3,b,\ldots,s_{2m},a,s_{2m+1}
\end{alignat}
where $s_{m+1}$, for positive integer $m$, is the first state where $\EQ{n(x)}$ is true.
The actions $a$ and $b$ alternate because the first makes $H$ false and the second
makes it true.
In each transition $(s_i,s_{i+1})$ for a non-negative even integer $i$, the numerical
variable $n(x)$ decreases by $\epsilon$ or more,  unless $s_{i+1}[n(x)]=0$.
The former case cannot happen more than $s_0[n(x)]/2\epsilon$ times, as the numerical
variable $n(x)$ is decreased every two steps and is never increased.
Thus, in all cases and for any $\epsilon > 0$, any $\epsilon$-trajectory induced
by the policy $\pi$ reaches a goal state in a finite number of steps, regardless
of the initial value $s_0[n(x)]$ of $n(x)$, and regardless of the actual magnitude
of the changes $|s_{i+1}[n(x)]-s_{i}[n(x)]|$.
\end{example}

\medskip

\begin{example}
A more interesting QNP that requires ``nested loops'' is  $Q_{nest}=\tup{F,V,I,O,G}$ with
$F=\emptyset$, $V=\{X,Y\}$, $I=\{\GT{X},\GT{Y}\}$, $G=\{\EQ{X}\}$, and $O=\{a,b\}$ where
\begin{alignat}{1}
  a\ &=\ \abst{\GT{X}, \EQ{Y}}{\DEC{X}, \INC{Y}} \,, \\
  b\ &=\ \abst{\GT{Y}}{\DEC{Y}} \,.
\end{alignat}
The policy $\pi$ is given by the rules:
\begin{alignat}{1}
  &\text{if $\GT{X}$ and $\EQ{Y}$, then do $a$} \,, \\
  &\text{if $\GT{X}$ and $\GT{Y}$, then do $b$} \,.
\end{alignat}
The policy decrements $Y$ using action $b$ until the action $a$ that decreases $X$
and  increases $Y$ can be applied, and  the process is  repeated until $\EQ{X}$.
The $\epsilon$-trajectories  induced by $\pi$ have  the form
\begin{alignat}{1}
  s_0, b, \ldots, b, s^1_{k_0},  
      \quad  a, s^1_0, b, \ldots, b, s^1_{k_1},
      \quad  a, s^2_0, b, \ldots, b, s^2_{k_2},
      \quad  \ldots,
      \quad  a, s^m_0, b, \ldots, b, s^m_{k_m},
      \quad  a,s_G \,.
\end{alignat}
where there is an outer loop that  is  executed a number of times $m$
bounded by $s_0[X]/\epsilon$,  as  $X$ is decreased by $\epsilon$ or more,
but is not   increased. In the iteration $i$ of such a loop,
the action $b$ is executed a number  of times  $k_i$ bounded by $s^i_0[Y]/\epsilon$
as in such inner  loop $Y$ begins with value $s^i_0[Y]$ and it is decreased and not increased.
The result is that  all the $\epsilon$-trajectories induced by $\pi$ reach
a goal state in a finite number of steps that cannot be bounded a priori because the
increments of $Y$ produced by  the action $a$ are finite but  not bounded.
The policy $\pi$  thus solves $Q_{nest}$.
\end{example}

\section{Direct Translation and Termination Test}

The problem of deciding the existence of a policy that solves a given QNPs is decidable
as noted by \citeay{sid:aaai2011}. They hint a generate-and-test procedure to find such
a policy where the QNP is first translated into a FOND problem, and then all the possible
strong cyclic policies for the FOND problem are enumerated and tested for termination.
The translation runs in polynomial (linear) time in the number of boolean states for the
QNP while the termination test for a given strong cyclic solution is polynomial in the number
of FOND states.
However, the number of strong cyclic solutions that need to be tested is exponential
in the number of FOND states in the worst case.
The generate-and-test approach is not efficient but it is complete and runs in finite time.
In contrast, the plan-existence problem for \emph{numerical planning is undecidable} even
in the classical setting where there is a single initial state and the action effects are
deterministic; e.g., Post's correspondence problem can be reduced to a numerical planning
problem \shortcite{helmert:numeric}.
The decidability of plan existence for QNPs is due to the ``qualitative'' behaviour of
the numerical variables that cannot keep track of counts; in particular, the variables
cannot be incremented or decremented by specific amounts nor queried about specific values.
We review the translation and the termination test for QNPs before considering a novel polynomial
translation which does not require termination tests and which thus is a true reduction of QNPs
into FOND problems.

The translation $T_D$ from  a QNP $Q$ to a FOND $P=T_D(Q)$ by \citeay{sid:aaai2011} is simple and direct,
and it involves three steps: 1)~the literals $\EQ{X}$ and $\GT{X}$ are made propositional
with the numerical variables $X$ eliminated, 2)~$Inc(X)$ effects are converted into deterministic
boolean effects $\GT{X}$, and 3)~$Dec(X)$ effects are converted into \emph{non-deterministic}
boolean effects $\GT{X}\,|\,\EQ{X}$.

\begin{definition}[Direct Translation $T_D$]
  \label{def:td}
  For QNP $Q=\tup{F,V,I,O,G}$, the FOND problem $P=T_D(Q)$ is $P=\tup{F',I',O',G'}$ with
  \begin{enumerate}[1.]
    \item $F'=F \cup \{\EQ{X} \,:\, X\in V\}$, where $\EQ{X}$ stands for a new propositional
      symbol $p_{\EQ{X}}$ and $\GT{X}$ stands for $\neg p_{\EQ{X}}$,
    \item $I'=I$ but with $\EQ{X}$ and $\GT{X}$ denoting  $p_{\EQ{X}}$ and  $\neg p_{\EQ{X}}$,
    \item $O'=O$  but with $Inc(X)$ effects replaced by the deterministic propositional
      effects $\GT{X}$, and $Dec(X)$ effects replaced by non-deterministic propositional
      effects $\GT{X}\,|\,\EQ{X}$,
    \item $G'=G$ but with $\EQ{X}$ and $\GT{X}$ denoting $p_{\EQ{X}}$ and  $\neg p_{\EQ{X}}$.
  \end{enumerate}
\end{definition}

The problem $P=T_D(Q)$ is a special type of FOND problem. For example, from its definition,
there is no action in  $P$ that can achieve a  proposition $\EQ{X}$ deterministically.
We refer to actions in the FOND $P$ with effects $\GT{X}$ and $\GT{X}\,|\,\EQ{X}$ as
$Inc(X)$ and $Dec(X)$ actions, as such effects in $P$ may only come from $Inc(X)$ and
$Dec(X)$ effects in $Q$. Also, observe that the FOND $P$ has a unique initial state
even though the QNP $Q$ may have an infinite number of initial states.

The states of the FOND problem $P=T_D(Q)$ are related to the \emph{boolean states}
over  $Q$, i.e., the truth-assignments over the atoms $p$ and  $\EQ{X}$, the latter of which
stand for  (abbreviation of) symbols in $P$.
A policy $\pi$ for the QNP $Q$ thus induces a policy over the FOND problem $P$ and vice versa.\footnote{The policy $\pi$ over states $s$
  of $Q$ determines the policy $\pi'$ over the FOND $P$ where $\pi'(t)=\pi(s)$ if $t=\bar s$,
  and vice versa, a policy $\pi'$ for $P$ determines a  policy $\pi$ for $Q$
  where $\pi(s)=\pi'(t)$ if $\bar s=t$. For simplicity, we use the same notation $\pi$
  to refer to the policy $\pi$ over $Q$ and the policy $\pi'$ that it induces over $P=T_D(Q)$.}
Moreover, the  FOND problem $P=T_D(Q)$ captures the \emph{possible boolean state transitions} in $Q$
exactly. More precisely,  $(s,a,s')$ is a possible transition in $Q$ iff $(\bar s,a,\bar s')$ is
a possible transition in $P$.  Indeed, if  we extend  the notion of strong cyclic policies to QNPs:

\begin{definition}
  \label{def:qnp:strong-cyclic}
  Let $Q$ be a QNP and let $\pi$ be a policy for $Q$.
  $\pi$ is strong cyclic for $Q$ iff for every $\pi$-trajectory connecting $s_0$ with
  a state $s$, there is a $\pi$-trajectory connecting $s$ with a goal state.
\end{definition}

\noindent
The following correspondence between boolean states in $Q$ and the states of the boolean FOND problem $T_D(Q)$
results:

\begin{theorem}
  \label{thm:strong-cyclic}
  Let $Q$ be a QNP and let $\pi$ be a policy for $Q$.
  $\pi$ is strong cyclic solution for $Q$ iff $\pi$ is strong cyclic policy for the FOND problem $T_D(Q)$.
\end{theorem}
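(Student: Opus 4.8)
The plan is to prove both directions by exploiting the tight correspondence between $\pi$-trajectories in $Q$ and $\pi$-trajectories in $P=T_D(Q)$ at the level of boolean states. The key technical fact, which I would isolate as a preliminary observation, is the following: $(s,a,s')$ is a possible transition in the state model $\S(Q)$ if and only if $(\bar s, a, \bar s')$ is a possible transition in $\S(P)$. The ``only if'' direction is routine from Definitions~\ref{def:qnp:state-model} and~\ref{def:td}: a $Dec(X)$ effect forces $s'[X] < s[X]$, so $\bar s'$ may have $X=0$ true (if $s'[X]=0$) or false (if $s'[X]>0$), which is exactly the non-deterministic effect $\GT{X}\,|\,\EQ{X}$ in $P$; an $Inc(X)$ effect forces $s'[X] > s[X] \geq 0$, so $\bar s'$ has $X=0$ false, matching the deterministic $\GT{X}$ in $P$; propositional effects and frame conditions transfer verbatim. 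For the ``if'' direction, given a boolean transition $(\bar s, a, \bar s')$ in $P$ and any state $s$ with boolean projection $\bar s$, one must exhibit a real-valued successor $s'$ with $\bar s'$ as its projection: this just requires picking, for each variable, a suitable positive or zero real value consistent with the $Inc$/$Dec$/frame constraints, which is always possible since the reals are dense and unbounded above. Note that preconditions of $a$ are $F$-literals and $V$-literals of the form $X=0$ or $X>0$, so $a\in A(s)$ in $\S(Q)$ iff $a\in A(\bar s)$ in $\S(P)$; goal states also correspond since $G$ only mentions such literals.

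Given this observation, the forward direction of the theorem is almost immediate. Suppose $\pi$ is strong cyclic for $Q$ in the sense of Definition~\ref{def:qnp:strong-cyclic}. Take any $\pi$-trajectory in $P$ from the (unique) initial state $t_0 = \bar s_0$ to some boolean state $t$. By the observation, lifting transition by transition, this corresponds to a $\pi$-trajectory in $Q$ from some initial state $s_0$ to a state $s$ with $\bar s = t$ (at each step we have freedom in choosing the real successor, and we fix any consistent choice). Since $\pi$ is strong cyclic for $Q$, there is a $\pi$-trajectory in $Q$ from $s$ to a goal state; projecting it down via the observation gives a $\pi$-trajectory in $P$ from $t$ to a goal state. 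Hence $\pi$ is strong cyclic for $P$. The converse is symmetric: a $\pi$-trajectory in $Q$ from an initial state to $s$ projects to one in $P$ from $t_0$ to $\bar s$; strong cyclicity of $\pi$ for $P$ gives a $\pi$-trajectory in $P$ from $\bar s$ to a goal; lifting it back (starting from the already-fixed state $s$) yields a $\pi$-trajectory in $Q$ from $s$ to a goal state.

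The one point that needs a little care is the lifting step in the converse direction: when we lift a boolean $\pi$-trajectory $t_0, a_0, t_1, \dots$ of $P$ back into $Q$, we must start from a prescribed state $s$ with $\bar s = t_0$ (not an arbitrary one), because the strong-cyclic condition for $Q$ quantifies over the specific reachable state $s$. This is fine: the ``if'' part of the observation lets us build the lifted trajectory inductively, and at each step the choice of real successor is unconstrained except by the $Inc$/$Dec$/frame inequalities, which are always satisfiable regardless of the current real values. I expect this inductive lifting argument to be the main (though still routine) obstacle; everything else is bookkeeping that unwinds the definitions. It is also worth remarking explicitly that $\epsilon$-boundedness of trajectories plays no role here, since Definition~\ref{def:qnp:strong-cyclic} concerns reachability along finite $\pi$-trajectories and imposes no infinitary fairness condition; the distinction between this theorem and the (harder) termination-based Definition~\ref{def:qnp:solution} is precisely that strong cyclicity ignores whether the infinite trajectories terminate.
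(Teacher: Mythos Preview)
Your proposal is correct and follows essentially the same approach as the paper: both proofs hinge on the one-step correspondence $s'\in F(a,s)$ in $\S(Q)$ iff $\bar s'\in F'(a,\bar s)$ in $\S(P)$, and then transport finite $\pi$-trajectories back and forth to match the two strong-cyclic conditions. Your write-up is more explicit than the paper's---you spell out the lifting of a boolean trajectory in $P$ to a real-valued trajectory in $Q$ starting from a prescribed state $s$, and you correctly observe that $\epsilon$-boundedness is harmless for finite trajectories (one can always pick $\epsilon$ small enough a posteriori)---whereas the paper compresses all of this into a single sentence asserting the trajectory correspondence.
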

\begin{proof}
  Let $\S(Q)=\tup{S,S_0,Act,A,F,S_G}$ and $\S(P)=\tup{S',s'_0,Act',A',F',S'_{G}}$ be the state models
  for the QNP $Q$ and the FOND problem $P=T_D(Q)$. From the definition of the $T_D$ translation,
  the state $s$ is in $S_0$ (resp.\ in $S_G$) iff $\bar s=s'_0$ (resp.\ in $S'_{G}$),
  and  the state $s' \in F(a,s)$ for $a \in A(s)$ iff $\bar s' \in F'(a,\bar s)$ for $a \in A'(\bar s)$.
  This means that there is a  $\pi$-trajectory  connecting an initial state $s_0$ in $S_0$ with a  state $s$ in $S$
  iff there is a corresponding $\pi$-trajectory connecting $s'_0$  with $\bar s$ in $S'$, and similarly,
  there is a $\pi$-trajectory connecting $s$ with a goal state $s'$ iff there is a corresponding $\pi$-trajectory
  connecting $\bar s$ with $\bar s'$ in $\S(Q)$.
\end{proof}

The correspondence between the $\pi$-trajectories connecting states $s$ in $Q$ and the  $\pi$-trajectories connecting the states
$\bar s$ in  $P=T_D(Q)$  does not imply however  that the solutions of $P$ and $Q$ are the same.
Indeed, the $Dec(x)$ effects of an action $a$  in $Q$  are mapped into the  non-deterministic propositional
effects $\GT{X}\,|\,\EQ{X}$ in $P=T_D(Q)$ which  implies that $\EQ{X}$ will be true in $P$ if the action $a$ is repeated
infinitely often. On the other hand, a $Dec(X)$ effect in $Q$ ensures that $\EQ{X}$ will be true if $a$ is repeated
infinitely often \emph{as long as no $Inc(X)$ action is performed infinitely often as well}.

In other words, the correspondence between the state transitions $(s,a,s')$ in $Q$ and the state transitions $(\bar s,a,\bar s')$ in
$P=T_D(Q)$ does  not extend to \emph{infinite trajectories} \shortcite{bonet:ijcai2017}. Recall that trajectories in $Q$
refer to $\epsilon$-trajectories for some $\epsilon > 0$ that exclude ``infinitesimal'' changes. As a result:

\begin{theorem}
  \label{thm:td:gap}
  Let $Q$ be a QNP and let $\pi$ be a policy for $Q$.
  If $\tau = s_0, s_1 \ldots$ is an infinite $\pi$-trajectory in $Q$, then
  $\bar\tau = \bar s_0, \bar s_1, \ldots$ is an infinite $\pi$-trajectory in $P=T_D(Q)$.
  Yet, there may be infinite $\pi$-trajectories in $P=T(D)$ that do not correspond to any $\pi$-trajectory in $Q$.
\end{theorem}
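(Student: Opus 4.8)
The plan is to prove the first (positive) half directly from the state-transition correspondence already established for the translation $T_D$, and to prove the second (negative) half by exhibiting a single small counterexample.

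For the forward direction, let $\tau = s_0, a_0, s_1, a_1, \ldots$ be an infinite $\pi$-trajectory in $Q$, so $s_0 \in S_0$, $a_i = \pi(s_i) \in A(s_i)$, and $s_{i+1} \in F(a_i,s_i)$ for all $i$. I would invoke the correspondence recorded in (the proof of) Theorem~\ref{thm:strong-cyclic}: $\bar s_0$ is the (unique) initial state of $P = T_D(Q)$, and $s' \in F(a,s)$ with $a \in A(s)$ implies $\bar s' \in F'(a,\bar s)$ with $a \in A'(\bar s)$. Since a policy depends only on the boolean state, $\pi(\bar s_i) = \pi(s_i) = a_i$. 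Hence every consecutive pair $(\bar s_i, a_i, \bar s_{i+1})$ is a legal $\pi$-transition of $P$, and $\bar\tau = \bar s_0, a_0, \bar s_1, a_1, \ldots$ is again a sequence with infinitely many transitions (it remains infinite even when boolean states recur, e.g.\ when a $Dec(X)$ action lowers $X$ without making $\EQ{X}$ true). Thus $\bar\tau$ is an infinite $\pi$-trajectory in $P$; this part is pure bookkeeping.

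For the negative direction it suffices to produce one QNP witnessing the failure. I would take $Q = \tup{\emptyset, \{X\}, \{\GT{X}\}, \{a\}, \{\EQ{X}\}}$ with the single action $a = \abst{\GT{X}}{\DEC{X}}$, and the policy $\pi$ that does $a$ whenever $\GT{X}$ holds. In $P = T_D(Q)$ the action $a$ has the non-deterministic effect $\GT{X}\,|\,\EQ{X}$, so the sequence of boolean states that always selects the $\GT{X}$ outcome is an infinite $\pi$-trajectory of $P$ that never reaches the goal. In $Q$, by contrast, fix any $\epsilon > 0$: along any $\pi$-trajectory the action $a$ strictly decreases $X$, and by Definition~\ref{def:qnp:epsilon-trajectory} each such change either lowers $X$ by at least $\epsilon$ or sets $X$ to $0$; hence starting from $s_0[X] = r$ the value reaches $0$ after at most $\lceil r/\epsilon \rceil$ steps, at which point the trajectory is maximal and finite. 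So no infinite $\pi$-trajectory exists in $Q$ for any $\epsilon > 0$, which is exactly what the second half of the statement asserts.

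The only genuinely delicate point is the reading of ``trajectory in $Q$'': since the paper fixes trajectories over QNPs to be $\epsilon$-trajectories, the counterexample must exclude infinite trajectories for \emph{every} $\epsilon > 0$, not merely for one; the uniform bound $\lceil r/\epsilon \rceil$ handles this. Everything else is routine, and this mismatch between the infinitary behaviour of $Q$ and of $T_D(Q)$ is precisely what motivates the termination test discussed next.
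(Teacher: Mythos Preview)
Your proposal is correct and follows essentially the same approach as the paper: the forward direction is handled by the state-transition correspondence established for $T_D$, and the negative direction uses the identical counterexample of a single numerical variable $X$ with one action $a=\abst{\GT{X}}{\DEC{X}}$, together with the bound that any $\epsilon$-trajectory must reach $\EQ{X}$ within roughly $s_0[X]/\epsilon$ steps. Your version is slightly more careful in stressing that the counterexample excludes infinite trajectories for every $\epsilon>0$, but otherwise the argument coincides with the paper's.
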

\begin{proof}
  For the first part, if $\tau$ is an infinite $\pi$-trajectory over $Q$, then  $s_{i+1} \in F(a_i,s_i)$ for $a_i=\pi(s_i)$;
  therefore  $\bar s_{i+1} \in F(a_i,\bar s_i)$ for $a_i = \pi(\bar s_i)$, and hence
  $\bar\tau = \bar s_0, \bar s_1, \ldots$ is an infinite $\pi$-trajectory over $P$.

  For the second part, one example suffices. Let $Q$ be a QNP with a single  variable $X$ that is numerical,
  a single action $a$ with precondition $\GT{X}$ and effect $Dec(X)$, initial condition $\GT{X}$, and goal $\EQ{X}$.
  In the state model $\S(P)$ associated with the FOND problem $P=T_D(Q)$, there are two states $t$ and $t'$,
  the first where $\EQ{X}$ is true and the second where $\GT{X}$ is true, and there  is an  infinite trajectory
  $\bar s_0, \bar s_1, \ldots$ where all $\bar s_i=t'$ and $\pi(\bar s_i)=a$,
  but there is no infinite trajectory $s_0, s_1 \ldots$ in $Q$ where $\GT{X}$ stays true
  forever while being decremented. Indeed, for any $\epsilon > 0$ and any initial value of $X$, $s_0[X]>0$,
  it is the case that $s_n[X]=0$ for $n > s_0[X]/\epsilon$.
\end{proof}

The notion of \emph{termination} is aimed at capturing the infinite $\pi$-trajectories over the FOND problem
$P=T_D(Q)$ that do not map into infinite $\pi$-trajectories over $Q$.
Let
\begin{alignat}{1}
  \bar s_0, \bar s_1, \ldots, [\bar s_i, \ldots, \bar s_{m}]^*
\end{alignat}
denote \emph{any infinite} $\pi$-trajectory on the FOND $P$ where the states $\bar s_i, \ldots, \bar s_{m}$
in brackets make the non-empty set of \emph{recurring states}; namely those that occur infinitely
often in the trajectory (not necessarily in that order). We refer to such set of recurrent states
as the \emph{loop} of the trajectory. Observe that knowledge of the loop (and the policy $\pi$) is
sufficient to infer whether a variable $X$ is decremented or incremented infinitely often.
Termination imposes the following condition on loops:

\begin{definition}[Terminating Trajectories]
  \label{def:td:terminatig-trajectories}
  Let $Q$ be a QNP and let $\pi$ be a policy for $Q$.
  An infinite $\pi$-trajectory $\bar s_0, \ldots, [\bar s_i, \ldots, \bar s_{m}]^*$ is \emph{terminating} in
  $P=T_D(Q)$ if there is a variable $X$ in $Q$ that is decremented but not incremented in the
  loop; i.e., if $\pi(\bar s_k)$ is a $Dec(X)$ action for some $k \in [i,m]$,  and  $\pi(\bar s_j)$ is
  not an  $Inc(X)$ action for any $k \in [i,m]$.
\end{definition}

The notion of termination is a notion of fairness that is different from the one underlying
strong cyclic planning that says that infinite but terminating trajectories in $P$ are not
``fair'' and hence can be ignored.
Indeed, this  notion of termination closes the gap in Theorem~\ref{thm:td:gap}:

\begin{theorem}
  \label{thm:termination1}
  Let $Q$ be a QNP and let $\pi$ be a policy for $Q$.
  $\bar\tau = \bar s_0, \bar s_1, \ldots$ is an infinite \emph{non-terminating} $\pi$-trajectory in $P=T_D(Q)$
  iff there is  an infinite $\pi$-trajectory $\tau = s_0, s_1, \ldots$ in $Q$.
\end{theorem}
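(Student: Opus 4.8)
The plan is to prove the two implications separately: the direction from an $\epsilon$-trajectory in $Q$ to a non-terminating trajectory in $P=T_D(Q)$ is short and uses Theorem~\ref{thm:td:gap} together with the $Dec(X)$-precondition, whereas the converse requires an explicit construction of an $\epsilon$-trajectory.

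For the direction ($\Leftarrow$): suppose $\tau = s_0,s_1,\ldots$ is an infinite $\pi$-trajectory in $Q$, i.e.\ an $\epsilon$-trajectory for some fixed $\epsilon>0$. By the first part of Theorem~\ref{thm:td:gap} its projection $\bar\tau=\bar s_0,\bar s_1,\ldots$ is an infinite $\pi$-trajectory in $P$, so it only remains to show $\bar\tau$ is non-terminating. I argue by contradiction: if $\bar\tau$ were terminating, there would be a variable $X$ that is decremented but not incremented in the loop $L$ of recurring boolean states. Fix $N$ with $\bar s_i\in L$ for all $i\ge N$. Then past step $N$ the policy never selects an $Inc(X)$ action, while it selects a fixed $Dec(X)$ action at some recurring state, hence at infinitely many indices $i\ge N$. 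Along $\tau$ this forces $s_i[X]$ to be non-increasing for $i\ge N$ and to strictly decrease — by at least $\epsilon$ or by jumping to $0$, per Definition~\ref{def:qnp:epsilon-trajectory} — infinitely often; since the values are non-negative, after finitely many such decrements either $X$ reaches $0$, making the $Dec(X)$ action (precondition $\GT{X}$) inapplicable thereafter, or the value would go negative. Either way we contradict the fact that the $Dec(X)$ action is applied infinitely often, so $\bar\tau$ is non-terminating.

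For the direction ($\Rightarrow$): given an infinite non-terminating $\pi$-trajectory $\bar\tau$ in $P$, fix $\epsilon=1$, let $L$ be its loop, and fix $N$ with $\bar s_i\in L$ for $i\ge N$. I first construct the tail $s_N,s_{N+1},\ldots$. Each variable falls into one of two cases. If $X$ is not decremented in $L$, then the truth value of $\EQ{X}$ is constant over $L$ — if some $Inc(X)$ action were applied past $N$ then $\GT{X}$ would hold permanently afterwards, contradicting recurrence of any $L$-state satisfying $\EQ{X}$, and otherwise $X$ is untouched past $N$; accordingly set $s_i[X]=0$ when that constant value is $\EQ{X}$, and otherwise set $s_i[X]$ to a large base raised by $\epsilon$ at each $Inc(X)$ application. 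If $X$ is decremented in $L$, then by non-termination it is also incremented in $L$, so $Inc(X)$ actions occur infinitely often past $N$ and partition the tail into finite blocks between consecutive $Inc(X)$ occurrences; inside a block $X$ is only decremented or unchanged. I set the value assigned to $X$ immediately after each $Inc(X)$ to be at least $(d+1)\epsilon$ and at least $\epsilon$ above the current value, where $d$ is the number of decrements of $X$ to a positive value in the ensuing block (finite, and known since $\bar\tau$ is fixed), letting each such decrement drop $X$ by exactly $\epsilon$ while staying positive, and letting the at most one (necessarily last) decrement targeting $\EQ{X}$ jump straight to $0$; boolean consistency and the $\epsilon$-bound are then routine to check, using that a jump to $0$ always satisfies Definition~\ref{def:qnp:epsilon-trajectory}. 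Finally I lift the finite prefix $\bar s_0,\ldots,\bar s_{N-1}$ by starting every positive-valued variable at a value so large it survives the at most $N$ prefix decrements and still meets whatever the tail requires at step $N$, again resetting to large values at each prefix $Inc(X)$; since the prefix is finite this is straightforward, and concatenation yields the required infinite $\epsilon$-trajectory $\tau$ in $Q$.

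The main obstacle is the construction in the ($\Rightarrow$) direction, specifically the variables that are \emph{both} incremented and decremented inside the loop: here one needs the device of making each increment dominate all subsequent decrements within its block, with care that the block-decrement counts $d$ are finite (which relies on $Inc(X)$ occurring infinitely often, forcing every block to be finite) and that gluing the lifted prefix onto the lifted tail preserves boolean consistency and $\epsilon$-boundedness. The ($\Leftarrow$) direction, and the bookkeeping for variables that are untouched or only incremented in the loop, are comparatively easy.
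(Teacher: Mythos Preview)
Your proof is correct and takes essentially the same approach as the paper: the $(\Leftarrow)$ direction is identical, and for $(\Rightarrow)$ both you and the paper construct the numerical values by making each increment (and the initial value) large enough to cover all decrements until the next increment --- the paper via a uniform count $k(X,i)$ of decrements before the next $Inc(X)$, you via the equivalent block decomposition between consecutive increments together with a tail/prefix split. Your justification that the blocks are finite (non-termination forces $Inc(X)$ to recur in the loop whenever $Dec(X)$ does) is in fact crisper than the paper's corresponding step.
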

\begin{proof}
  Let $\tau = s_0, s_1, \ldots$ be an infinite $\pi$-trajectory in $Q$, and let us assume
  that the infinite trajectory $\bar\tau = \bar s_0, \bar s_1, \ldots$ is terminating.
  Then there must be a variable $X$ that is decremented by $\pi(s)$ in some recurring state
  $s$ in $\tau$ and which is not incremented by $\pi(s')$ on any recurrent state $s'$ in $\tau$.
  Let $s(t)$ denote the state at time point $t$ in $\tau$, let $t$ be the last time point where
  variable $X$ is increased in $\tau$ ($t=-1$ if $X$ is not increased in $\tau$), and let
  $X(t+1)$ be the value of variable $X$ at the next time point.
  The maximum number of times that $X$ can be decreased after $t+1$ is bounded by $X(t+1)/\epsilon$,
  and after this, $X$ must have zero value. But in $\tau$, $X$ is decreased an infinite number
  of times, in contradiction with the assumption that any action that decrements $X$ features $\GT{X}$
  as precondition. 

  For the converse, we show that one such trajectory $\tau$ in $Q$ can be constructed for any
  $\epsilon >0$, given that the trajectory $\bar\tau$ in $P$ is non-terminating. We do so
  by adjusting the non-deterministic increments and decrements of the actions, all of which
  have to be greater than or equal to $\epsilon$, except when this would result in negative
  values that are increased back to zero.
  We construct $\tau = s_0, s_1, \ldots$ from $\bar\tau = \bar s_0, \bar s_1, \ldots$ as follows.
  The value of the boolean variables is the same in $s_i$ as in $\bar s_i$, and in addition,
  $s_i[X]=0$ iff $\EQ{X}$ is true in $\bar s_i$ for all $i$.
  We just have to find exact values for the numerical variables $X$ in each of the states $s_i$
  in $\tau$, and this is a function of  their initial values $s_0[X]$ when $s_0[X]>0$, and the
  positive decrements or increments $\Delta(X,s_i)$ when $\pi(s_i)$ is a $Dec(X)$ or $Inc(X)$ action,
  and $\Delta(X,s_i) \ge \epsilon$. For simplicity and without loss of generality, let us assume
  that $\epsilon < 1$ (the case for $\epsilon\geq 1$ is an easy exercise).

  All the positive initial values of numerical variables, increments, and decrements are set
  to \emph{positive integers} by considering the sequence of actions $\pi(s_i)$, $i=1,\ldots$.
  The initial values $s_0[X]$ are set to $1 + k(X,0)$ where $k(X,i)$ stands for the number
  of $Dec(X)$ actions that occur between the state $s_i$ and the first state $s_j$ after $s_i$
  where an $Inc(X)$ action occurs (if no $Inc(X)$ action occurs after state $s_i$, $k(X,i)$ is
  the number of $Dec(X)$ actions after $s_i$). That is, $k(X,i)$ is the cardinality of the set
  \begin{alignat}{1}
    \{ j : \text{$i \leq j < ind(X,i)$ and $\pi(s_j)$ is $Dec(X)$ action} \}
  \end{alignat}
  where $ind(X,i)$ is the minimum index $j > i$ such that $\pi(s_j)$ is an $Inc(X)$ action,
  or $\infty$ if there is no such action after $s_i$. Observe that $k(X,i)$ is bounded.
  The only way it could be infinite is when no $Inc(X)$ action occurs after $s_i$ while at
  the same time an infinite number of $Dec(X)$ actions occur; yet, this is impossible since
  then $X$ eventually becomes zero after which no $Dec(X)$ action may occurs as such actions
  feature the precondition $\GT{X}$.
  Likewise, the increments $\Delta(X,s_i)$ are set to $1 + k(X,i)$, and the decrements
  $\Delta(X,s_i)$ are set to $s_i[X]$ if $\EQ{X}$ is true in $\bar s_{i+1}$  and to $1$
  is $\GT{X}$ if true in $\bar s_{i+1}$.
  It is not difficult to verify that these choices define a trajectory $\tau$ in $Q$ that
  corresponds to the assumed trajectory $\bar\tau$ in $P$.
\end{proof}

The full correspondence between infinite $\pi$-trajectories $\tau$ in $Q$ and infinite
non-terminating $\pi$-trajectories $\tau$ in $P=T_D(Q)$ suggests the following
definition of \emph{termination} in QNPs $Q$ and FOND problems $T_D(Q)$:

\begin{definition}[Termination in $Q$]
  \label{def:qnp:termination}
  A policy $\pi$ for the QNP $Q$ is terminating iff all the $\pi$-trajectories on $Q$ are of finite length.
  In such a case, we say that $\pi$ is $Q$-terminating.
\end{definition}

\begin{definition}[Termination in $P$]
  \label{def:td:termination}
  A policy $\pi$ for the FOND problem $P=T_D(Q)$ is terminating iff all the infinite $\pi$-trajectories
  on $P$ are terminating.
  In such a case, we say that $\pi$ is $P$-terminating.
\end{definition}

\noindent The correspondence between policies can then be expressed as:

\begin{theorem}
  \label{thm:termination2}
  Let $Q$ be a QNP, let $P=T_D(Q)$ be its direct translation, and let $\pi$ be a policy for $Q$ (and thus also for $P$).
  Then, $\pi$ is $Q$-terminating iff $\pi$ is $P$-terminating.
\end{theorem}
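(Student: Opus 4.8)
The plan is to derive this theorem directly from Theorem~\ref{thm:termination1}, which already establishes the precise correspondence between infinite $\pi$-trajectories in $Q$ and infinite non-terminating $\pi$-trajectories in $P=T_D(Q)$. The two properties in question are each naturally phrased as the \emph{nonexistence} of a certain kind of infinite trajectory: $\pi$ is $Q$-terminating iff there is no infinite $\pi$-trajectory in $Q$ (Definition~\ref{def:qnp:termination}), and $\pi$ is $P$-terminating iff there is no infinite \emph{non-terminating} $\pi$-trajectory in $P$ (Definition~\ref{def:td:termination}, unfolding Definition~\ref{def:td:terminatig-trajectories}). So the theorem is logically equivalent to the biconditional ``there exists an infinite $\pi$-trajectory in $Q$ iff there exists an infinite non-terminating $\pi$-trajectory in $P$,'' which is exactly Theorem~\ref{thm:termination1} applied existentially.

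Concretely, I would argue both directions by contraposition. For the forward direction, suppose $\pi$ is $Q$-terminating but not $P$-terminating; then $P$ has some infinite $\pi$-trajectory that is non-terminating, and by the ``converse'' half of Theorem~\ref{thm:termination1} there is an infinite $\pi$-trajectory in $Q$, contradicting $Q$-termination. For the backward direction, suppose $\pi$ is $P$-terminating but not $Q$-terminating; then $Q$ has an infinite $\pi$-trajectory $\tau$, and by the first half of Theorem~\ref{thm:termination1} the projected trajectory $\bar\tau$ is an infinite non-terminating $\pi$-trajectory in $P$, contradicting $P$-termination. (One small bookkeeping point: a policy $\pi$ over $Q$ induces a well-defined policy over $P$ and vice versa, as noted after Theorem~\ref{thm:strong-cyclic}, so ``a policy for $Q$ (and thus also for $P$)'' is unambiguous and the quantifier over trajectories on each side is the intended one.)

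There is essentially no obstacle here — the theorem is a clean corollary. The only thing worth being careful about is that Theorem~\ref{thm:termination1} is stated for a \emph{fixed} infinite trajectory $\bar\tau$ and its correspondence to \emph{some} infinite $\tau$ in $Q$, so when we use it we must make sure the existential quantifiers line up: the forward use needs ``exists non-terminating $\bar\tau$ $\Rightarrow$ exists $\tau$,'' and the backward use needs ``exists $\tau$ $\Rightarrow$ exists non-terminating $\bar\tau$,'' both of which are immediate from the iff in Theorem~\ref{thm:termination1} once read at the level of existence of trajectories rather than a single trajectory. With that observation the proof is a two-line contrapositive argument in each direction, and I would write it as such.
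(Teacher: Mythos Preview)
Your proposal is correct and takes essentially the same approach as the paper: both derive the result directly from Theorem~\ref{thm:termination1} by contraposition, observing that $Q$-termination and $P$-termination are each the nonexistence of the type of infinite trajectory that Theorem~\ref{thm:termination1} puts in correspondence. The paper's proof is just a terser version of what you wrote.
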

\begin{proof}
  Direct from Theorem~\ref{thm:termination1}.
  For one direction, assume that $\pi$ is $Q$-terminating and let $\bar\tau$ be a $\pi$-trajectory in $P$.
  If $\bar\tau$ is not terminating, by Theorem~\ref{thm:termination1}, $\tau$ is infinite and thus $\pi$
  would not be $Q$-terminating. Therefore, every $\pi$-trajectory $\bar\tau$ in $P$ is terminating and
  thus $\pi$ is $P$-terminating. The other direction is established similarly.
\end{proof}

\noindent The \emph{soundness and completeness}  of the direct translation extended with
termination can be expressed as following:

\begin{theorem}[Soundness and Completeness $T_D$]
  \label{thm:td:main}
  Let $Q$ be a QNP, let $P=T_D(Q)$ be its direct translation, and let $\pi$ be a policy for $Q$ (and thus also for $P$).
  The following are equivalent:
  \begin{enumerate}[1.]
    \item $\pi$ solves $Q$,
    \item $\pi$ is a strong cyclic solution of $Q$ and $\pi$ is $Q$-terminating,
    \item $\pi$ is a strong cyclic solution of $P$ and $\pi$ is $P$-terminating.
  \end{enumerate}
\end{theorem}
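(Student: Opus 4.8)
The plan is to prove the cycle of implications by first splitting off the easy equivalence. The substantive part is $(1)\Leftrightarrow(2)$, which I would derive directly from the semantic definitions (Definitions~\ref{def:qnp:solution} and~\ref{def:qnp:strong-cyclic}, together with the notion of $Q$-termination), staying entirely on the QNP side. Once that is in place, $(2)\Leftrightarrow(3)$ is immediate: $\pi$ is strong cyclic for $Q$ iff $\pi$ is a strong cyclic solution of $P=T_D(Q)$ by Theorem~\ref{thm:strong-cyclic}, and $\pi$ is $Q$-terminating iff $\pi$ is $P$-terminating by Theorem~\ref{thm:termination2}; conjoining the two biconditionals yields $(2)\Leftrightarrow(3)$.

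For $(1)\Rightarrow(2)$, assume $\pi$ solves $Q$. Termination is the easy half: an infinite maximal $\pi$-trajectory is, for some $\epsilon>0$, a maximal $\epsilon$-trajectory that does not reach a goal (case A of maximality), contradicting Definition~\ref{def:qnp:solution}; hence every maximal $\pi$-trajectory is finite and $\pi$ is $Q$-terminating. For strong-cyclicity, take any $\pi$-trajectory from $s_0$ to a state $s$. Being finite, it involves only finitely many numeric changes, so there is an $\epsilon_0>0$ for which it is an $\epsilon_0$-trajectory (note that a decrement of a variable to $0$ is $\epsilon$-legal for \emph{every} $\epsilon$, by the second disjunct of Definition~\ref{def:qnp:epsilon-trajectory}, so only the non-zero changes constrain $\epsilon_0$). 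Extend this prefix by repeatedly applying $\pi$, each time choosing the non-deterministic numeric outcomes so that every increment and every non-zero decrement has magnitude exactly $\epsilon_0$ and every decrement of a value below $\epsilon_0$ goes to $0$; this keeps the whole extension an $\epsilon_0$-trajectory and it is maximal by construction. By Definition~\ref{def:qnp:solution} applied with $\epsilon=\epsilon_0$, this maximal $\epsilon_0$-trajectory reaches a goal, so its segment from $s$ onward is a $\pi$-trajectory from $s$ to a goal state — exactly what Definition~\ref{def:qnp:strong-cyclic} requires.

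For $(2)\Rightarrow(1)$, assume $\pi$ is strong cyclic for $Q$ and $Q$-terminating. Fix an arbitrary $\epsilon>0$ and let $\tau=s_0,\ldots,s_n$ be a maximal $\epsilon$-trajectory; by $Q$-termination it is finite. Since $\tau$ connects $s_0$ to $s_n$, strong-cyclicity provides a $\pi$-trajectory from $s_n$ to a goal. If $s_n$ were not itself a goal, that trajectory would begin with the action $\pi(s_n)$, which would then be defined and applicable in $s_n$, contradicting the maximality of $\tau$. Hence $s_n$ is a goal and $\tau$ reaches it; as $\epsilon>0$ was arbitrary, $\pi$ solves $Q$.

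The one delicate point is the strong-cyclic half of $(1)\Rightarrow(2)$: reconciling the ``there exists $\epsilon$'' that makes a given finite prefix $\epsilon$-legal with the ``for all $\epsilon$'' quantifier built into Definition~\ref{def:qnp:solution}. The resolution is that, since solving quantifies over \emph{all} $\epsilon$, one may specialize to the particular $\epsilon_0$ dictated by the prefix and then exhibit a single maximal $\epsilon_0$-extension, which is all strong-cyclicity needs. A secondary bookkeeping point is that ``$\pi$-trajectory'' in the termination notions should be read up to truncation at the first goal state (equivalently, restricted to maximal trajectories), so that a trajectory is not penalized for wandering past a goal; this convention is harmless for all of the arguments above.
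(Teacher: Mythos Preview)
Your proof is correct and follows essentially the same structure as the paper's: establish $(1)\Leftrightarrow(2)$ directly from the semantic definitions, and obtain $(2)\Leftrightarrow(3)$ by conjoining Theorems~\ref{thm:strong-cyclic} and~\ref{thm:termination2}. Your treatment of the $\epsilon$ quantifier in the strong-cyclicity half of $(1)\Rightarrow(2)$ --- picking an $\epsilon_0$ that makes the given prefix legal and then exhibiting a maximal $\epsilon_0$-extension --- is more explicit than the paper's one-line assertion (``otherwise $\pi$ would not be a solution for $Q$''), but the underlying argument is the same.
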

\begin{proof}
  \textbf{($1 \Leftrightarrow 2$)}
  Assume that $\pi$ solves $Q$. If there is $\pi$-trajectory connecting an initial state
  with state $s$, there must be $\pi$-trajectory connecting $s$ with a goal state.
  Otherwise, $\pi$ would not be a solution for $Q$.
  Likewise, if $\tau$ is an infinite $\pi$-trajectory in $Q$, then $\tau$ does not
  reach a goal state and thus $\pi$ would not solve $Q$.
  For the converse direction, assume that $\pi$ is a strong cyclic solution for $Q$
  and that $\pi$ is $Q$-terminating, and suppose that $\pi$ does not solve $Q$.
  Then, there is a maximal $\pi$-trajectory $\tau$ in $Q$ that does not reach a goal state.
  It cannot be the case that $\tau$ ends in a state $s$ where $\pi(s)$ is undefined
  or non-applicable as $\pi$ then would not be a strong cyclic solution for $Q$.
  Hence, $\tau$ must be infinite but this contradicts the assumption that $\pi$ is $Q$-terminating.

  \textbf{($2 \Leftrightarrow 3$)}
  By Theorem~\ref{thm:strong-cyclic}, $\pi$ is a strong cyclic solution for $Q$ iff
  it is strong cyclic solution for $P$.
  By Theorem~\ref{thm:termination2}, $\pi$ is $Q$-terminating iff it is $P$-terminating.
\end{proof}

\section{Checking Termination with  \sieve}

\sieve is the  procedure introduced by \citeay{sid:aaai2011} to test whether a policy terminates.
It runs in time that is polynomial in the number of states of the FOND problem reached by the policy.\footnote{Our account
  of termination differs from the one of \citeay{sid:aaai2011} in two main aspects.
  First, our notion of termination is articulated independently of the algorithm for checking termination.
  Second, our version of the algorithm is developed and applied to QNPs that involve both numerical and boolean variables.}
For this, the algorithm takes as input  a policy  graph $\G(P,\pi) = \tup{V,E}$
constructed from the FOND problem $P=T_D(Q)$ and a strong cyclic policy $\pi$ for $P$.
The  nodes in the policy  graph are the states $\bar s$ in the state model $\S(P)$ that are reachable from the initial state
and the policy $\pi$, and the  directed  edges in $E$ are the pairs  $(\bar s,\bar s')$ for  $\bar s' \in F(a,\bar s)$ and $\pi(\bar s)=a$. These edges are
labeled with the action $a$. The algorithm  iteratively removes edges from the graph $\G(P,\pi)$  until
the graph becomes acyclic or no additional edge can be removed.

For incrementally  removing edges from the graph, \sieve
identifies first  its  \emph{strongly connected components}
by a single depth-first search traversal, following
Tarjan's algorithm \shortcite{tarjan:sccs}. A strongly connected component (SCC)
is a partition of the nodes of the graph  such that if a node $\bar s$
belongs to a partition,  any node $\bar s'$ that can be reached from $\bar s$
and that can reach $\bar s$ back in the graph, is placed in the same partition as $\bar s$.

The algorithm then picks a variable $X$ and a SCC such that
the variable $X$ is decremented but not incremented in the SCC.
That is, there must be  a state $\bar s$ in the SCC such that $\pi(\bar s)$
is a $Dec(X)$ action and no $\pi(\bar s')$ is an $Inc(X)$ action for any $\bar s'$ in the SCC.
The algorithm then removes all the edges $(\bar s,\bar s')$ in the SCC such that  $\pi(\bar s)$ is a $Dec(X)$ action.
We abbreviate this by saying that \emph{variable $X$ is removed from the SCC},
which means that the edges associated with $Dec(X)$ actions are removed.
Following the edge removals, the SCCs must be recomputed and the process
is repeated until the graph becomes acyclic or no more edges can be removed
in this manner.
The result is that:

\begin{theorem}
  \label{thm:sieve}
  Let $Q$ be a QNP.
  A policy $\pi$ for the FOND problem $P=T_D(Q)$ is $P$-terminating iff
  \sieve reduces the policy graph $\G(P,\pi)$ to an acyclic graph.
\end{theorem}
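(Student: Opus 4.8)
The plan is to establish the equivalence through the tight correspondence between infinite $\pi$-trajectories on $P=T_D(Q)$ and cycles surviving in the policy graph during the execution of \sieve. For an infinite $\pi$-trajectory $\bar\tau$ — which, starting at the unique initial state of $P$, lies entirely inside $\G(P,\pi)$ — let $L$ be its \emph{loop}, i.e.\ the finite nonempty set of states occurring infinitely often, and let $E_L$ be the set of edges of $\G(P,\pi)$ that $\bar\tau$ traverses infinitely often. A first routine observation is that every edge of $E_L$ connects two states of $L$ and that $(L,E_L)$ is strongly connected: past the finite time after which $\bar\tau$ only uses edges of $E_L$, the suffix of $\bar\tau$ is a walk inside $L$ that visits every state of $L$ infinitely often, so any two states of $L$ are mutually reachable through $E_L$.

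For the direction ``acyclic $\Rightarrow$ $P$-terminating'' I would argue the contrapositive. Suppose $\pi$ is not $P$-terminating, witnessed by an infinite \emph{non}-terminating $\pi$-trajectory $\bar\tau$ with loop $L$ and edge set $E_L$. I claim \sieve never removes an edge of $E_L$. Consider the first step at which some $e=(\bar s,\bar s')\in E_L$ is removed (if any such step exists). Just before that step, all of $E_L$ is still present, so $(L,E_L)$ is a strongly connected subgraph of the current graph and hence $L$ is contained in a single SCC $C$ of that graph. \sieve removes $e$ only because it picked a variable $X$ and the SCC $C$ with $\pi(\bar s)$ a $Dec(X)$ action and no $\pi(\bar s'')$ an $Inc(X)$ action for any $\bar s''\in C$. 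Since $\bar s\in L\subseteq C$, the variable $X$ is decremented at the recurring state $\bar s$, and since $L\subseteq C$ it is incremented at no state of $L$; by Definition~\ref{def:td:terminatig-trajectories} this makes $\bar\tau$ terminating, a contradiction. Hence no edge of $E_L$ is ever removed, the cycle $(L,E_L)$ persists forever, and \sieve cannot reduce $\G(P,\pi)$ to an acyclic graph.

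For the direction ``$P$-terminating $\Rightarrow$ acyclic'', again by contraposition, suppose \sieve halts with a graph $\G'$ that is not acyclic. Then $\G'$ has an SCC $C$ that contains a cycle (a genuine multi-node cycle, or a single node with a self-loop), and because \sieve has halted, no variable $X$ is decremented-but-not-incremented within $C$ — otherwise \sieve would still remove the corresponding $Dec(X)$ edges. Now build an infinite $\pi$-trajectory: first follow a path in $\G(P,\pi)$ from the initial state to some node of $C$ (possible since every node of $\G(P,\pi)$ is reachable under $\pi$), then follow a closed walk inside $C$ that traverses every edge of $C$ infinitely often (possible since $C$ is strongly connected with at least one edge); every edge used is an edge of $\G(P,\pi)$, so this is a genuine $\pi$-trajectory on $P$. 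Its loop is exactly the vertex set of $C$, and since no variable is decremented-but-not-incremented over that set, by Definition~\ref{def:td:terminatig-trajectories} this trajectory is non-terminating, contradicting that $\pi$ is $P$-terminating.

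The main obstacle is the invariant used in the first direction: that \sieve can never ``break into'' the loop of a non-terminating trajectory. The point to get right is that one need not track \sieve's entire run — it suffices to inspect the graph immediately before the first removal of a loop edge, where $(L,E_L)$ is still intact and hence confined to a single SCC $C$, so that \sieve's own removal criterion applied to $C$ instantly certifies a variable that is decremented but never incremented over $L$. The remaining bookkeeping — that infinite $\pi$-trajectories start at the unique initial state and thus live inside $\G(P,\pi)$, that self-loops count as cycles, and that walks in a subgraph of $\G(P,\pi)$ are still valid $\pi$-trajectories — is routine and can be handled in passing.
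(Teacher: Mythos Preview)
Your proof is correct and follows essentially the same approach as the paper's. Both directions rest on the same two observations: the loop of a non-terminating trajectory can never have an edge removed by \sieve (since any variable eligible for removal in the enclosing SCC would witness termination of the trajectory), and conversely any nontrivial SCC surviving when \sieve halts yields an infinite $\pi$-trajectory whose loop admits no decremented-but-not-incremented variable. Your write-up is in fact more careful than the paper's---you make the ``first removed loop edge'' invariant explicit and spell out why $L\subseteq C$ at that moment---whereas the paper's version of the second direction contains a small typo and leaves some of these steps implicit.
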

\begin{proof}
  We show the contrapositive of the two implications that make up the equivalence.
  First, let us assume that \sieve terminates with a cyclic graph $\G$ and, thus,
  let $C$ be a non-trivial SCC in the graph $\G$.
  Since every state in $\G$ is reachable from the initial state by $\pi$ and
  since every state in $C$ is also reachable by $\pi$ from any other state in $C$,
  there is a $\pi$-trajectory $\bar\tau$ in $P$ of the form
  $\bar s_0, \ldots, [\bar s_i, \ldots, \bar s_m]^*$
  where the recurrent states $\bar s_i, \ldots, \bar s_m$ in $\bar\tau$ are
  \emph{exactly} all the states in $C$. 
  Observe that \sieve is unable to remove any further edge from $\G$.
  If no variable is decremented in the loop, $\bar\tau$ is not $P$-terminating by definition.
  Similarly, if some variable $X$ is decremented in the loop (i.e., $C$),
  $X$ is also incremented in the loop and the trajectory is again not $P$-terminating.

  For the other implication, let us assume that $\pi$ is \emph{not} $P$-terminating.
  Then, there must be an infinite $\pi$-trajectory $\bar\tau$ in $P$ of the form
  $\bar s_0, \ldots, [\bar s_i, \ldots, \bar s_m]^*$ where every variable
  that is decremented by an  action $\pi(\bar s_j)$, $j\in[i,m]$,
  is incremented  by another  action $\pi(\bar s_j)$, $j\in[i,m]$.
  We want to show that \sieve terminates with a graph that has one SCC that
  includes all the states in the loop.
  Indeed, initially, all the states in the loop must be in one component $C$ as
  they are all reachable from each other.
  Observe that edges that correspond to actions that do not decrement variables are not removed by \sieve.
  Hence, if the loop does not decrement any variable $X$, the states in the loop cannot be
  separated into different SCCs.
  On the other hand, \sieve cannot remove a variable $X$ from the loop as the component $C$
  features actions that increment and decrement $X$. Therefore, as before, the states in the loop
  stay together within an SCC for  the whole execution of \sieve.
\end{proof}

\noindent The  \sieve  procedure,  slightly reformulated from \citeay{sid:aaai2011}, is
depicted in Figure~\ref{alg:sieve2}.

\begin{algorithm}[t]
  \SetKw{Break}{break}
  \SetKw{Continue}{continue}
  \DontPrintSemicolon
  \sieve(Graph $\G=\G(P,\pi)$): \\
  \Repeat{$\G$ is acyclic (terminating) or there is no SCC $C$ and variable $X$ to choose (non-terminating)}{
    Compute the strongly connected components (SCC)  of $\G$.\;
    \BlankLine
    Choose an SCC $C$ and a variable $X$ that is decreased in $C$ but is not increased in $C$;\; i.e.,
    for some $\bar s$ in $C$, $\pi(\bar s)$ is a $Dec(X)$ action, and for no $\bar s$ in $C$, $\pi(\bar s)$ is an $Inc(X)$ action.
    \BlankLine
    Remove the edges $(\bar s,\bar s')$ such that $\bar s$ and $\bar s'$ are in $C$, and $\pi(\bar s)$ is a $Dec(X)$ action.\;
  }
  \caption{\sieve procedure for testing whether policy $\pi$ for FOND problem  $P=T_D(Q)$  terminates \shortcite{sid:aaai2011}.}
  \label{alg:sieve2}
\end{algorithm}

\begin{example}
  The policy $\pi$ for $Q_{nest}$ above is given by the rules:
  \begin{alignat}{1}
    &\text{if $\GT{X}$ and $\EQ{Y}$, then do $a$} \,, \\
    &\text{if $\GT{X}$ and $\GT{Y}$, then do $b$}
  \end{alignat}
  where recall that $Q_{nest}=\tup{F,V,I,O,G}$ with $F=\emptyset$, $V=\{X,Y\}$,
  $I=\{\GT{X},\GT{Y}\}$, $G=\{\EQ{X}\}$, and $O=\{a,b\}$ where
  $a=\abst{\GT{X}, \EQ{Y}}{\DEC{X}, \INC{Y}}$ and $b=\abst{\GT{Y}}{\DEC{Y}}$.
  The policy decrements $Y$ using the action $b$ until the action $a$ that
  decreases $X$ and increases $Y$ can be applied. The process is repeated until $\EQ{X}$.
  The nested loops in the policy graph $\G(P,\pi)$ are shown in Figure~\ref{fig:qnest}.
  The policy graph contains three states: the leftmost one is the initial state, and
  the rightmost one is a goal state.
  The two states on the left are reachable from each other, and hence, define a strongly
  connected component (SCC). In this SCC, the $Dec(X)$ edges are removed by \sieve because
  $X$ is not increased anywhere. Once this is done, the $Dec(Y)$ edges are removed by
  \sieve because the edges associated with $Inc(Y)$ effects are gone.
  The resulting graph is acyclic establishing thus that the policy $\pi$ terminates
  in $P=T_D(Q_{nest})$.
\end{example}

\medskip

Using \sieve it is easy to see that the problem of checking the existence of plans
for QNPs can be decided in exponential space:

\begin{theorem}[\shortciteauthor{sid:aaai2011}, 2011] 
  \label{thm:plan-existence:expspace}
  Deciding plan existence for QNPs  is in \textup{EXPSPACE}.
\end{theorem}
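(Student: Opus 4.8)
The plan is to combine the characterization of QNP solutions in Theorem~\ref{thm:td:main} with the polynomial termination test of Theorem~\ref{thm:sieve}, paying for the exponential blow-up of the direct translation out of the exponential space budget. Given a QNP $Q=\tup{F,V,I,O,G}$, first form the FOND problem $P=T_D(Q)$. Its state model $\S(P)$ has at most $2^{|F|}\cdot 2^{|V|}$ states (one truth value per $p\in F$ and per atom $\EQ{X}$), i.e.\ exponentially many in the size of $Q$, and $\S(P)$ can be written down explicitly using exponential space. By Theorem~\ref{thm:td:main}, $Q$ admits a solution iff there is a policy $\pi$ for $P$ that is simultaneously a strong cyclic solution of $P$ and $P$-terminating.

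Next I would enumerate all candidate policies. A policy is a partial map from the states of $\S(P)$ into $O$; each one is described by $2^{O(|Q|)}$ bits, and although there are up to $(|O|+1)^{2^{O(|Q|)}}$ of them, a counter ranging over this set still fits in exponential space. For each candidate $\pi$ in turn I would: (i)~restrict attention to the states reachable from the initial state under $\pi$ and build the policy graph $\G(P,\pi)=\tup{V,E}$, whose size is polynomial in $|\S(P)|$ and hence exponential in $|Q|$; (ii)~test whether $\pi$ is a strong cyclic solution of $P$, which by the equivalent characterization of strong cyclic solutions amounts to the reachability check ``from every $\pi$-reachable state some $\pi$-path reaches a goal state'', computable in time polynomial in $|\G(P,\pi)|$; and (iii)~run \sieve on $\G(P,\pi)$, which by Theorem~\ref{thm:sieve} decides whether $\pi$ is $P$-terminating and runs in time polynomial in the number of nodes of $\G(P,\pi)$. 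The procedure accepts iff some $\pi$ passes both (ii) and (iii). (Equivalently, one may guess $\pi$ nondeterministically and run (i)--(iii), appealing to $\textup{NEXPSPACE}=\textup{EXPSPACE}$.)

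Correctness is immediate from Theorem~\ref{thm:td:main}. For the resource bound, the workspace is reused across the enumeration, so at any moment the algorithm stores only a description of $\S(P)$, the current $\pi$, the graph $\G(P,\pi)$, and \sieve's bookkeeping (the SCC decomposition and the edge sets it maintains) --- each object of size $2^{O(|Q|)}$. Hence the whole computation runs in exponential space, which is the claim. The one point needing care, rather than any deep argument, is checking that every intermediate object stays within exponential size: in particular that $T_D$ inflates the state space only exponentially (not worse), and that the strong-cyclic test and \sieve are genuinely polynomial in $|\S(P)|$, so that composing an exponential translation with polynomial-time checks lands in \textup{EXPSPACE} and not in \textup{2-EXPSPACE}.
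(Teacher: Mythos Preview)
Your proposal is correct and follows essentially the same approach as the paper: enumerate all exponentially-describable policies over the boolean states (equivalently, over $\S(T_D(Q))$), test each for strong cyclicity and for termination via \sieve, and observe that both tests are polynomial in the (exponential) state space so the whole procedure fits in exponential space. Your write-up is in fact more explicit about the size accounting and adds the optional \textup{NEXPSPACE}$=$\textup{EXPSPACE} shortcut, but the underlying argument is identical to the paper's.
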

\begin{proof}
  Let $Q=\tup{F,V,I,O,G}$ be a QNP. The number of boolean states for $Q$ is exponential
  in the number of fluents and variables; i.e., $|F|+|V|$. A policy $\pi$ for $Q$ can be described
  in exponential space as a mapping from boolean states into actions. A brute-force algorithm enumerates
  all policies one by one using exponential space. Each policy is tested for strong cyclicity and termination.
  The former is a straightforward test in a graph while the latter is done with \sieve.
  If the policy is strong cyclic and terminating, $Q$ is accepted.
  Otherwise, if no policy is found to be strong cyclic and terminating, $Q$ is rejected.
  Since testing strong cyclicity and running \sieve both require polynomial time in the size of the input
  policy, the whole algorithm can be implemented in space that is exponential in the size of $Q$.
\end{proof}

Below we improve this bound and show exponential time (EXP) solvability through a more
complex translation of QNPs into FOND problems that is also polynomial. The novelty of
the new translation is that  the strong-cyclic policies of the resulting FOND problems
do not need to be checked for termination.
QNPs are thus \emph{fully reduced} to FOND problems.
Since FOND problems can be reduced to QNPs as well, \emph{we will show indeed that FOND
problems and QNPs have the same expressive power}, and the complexity of plan existence
for FOND problems is known to be EXP-Complete \shortcite{littman:fond,rintanen:po}, then
these reductions show the EXP-Completeness of the plan existence decision problem for QNPs.
In addition to the establishment of novel theoretical results, these reductions are also
of practical importance as they permit the computation of solutions; once a QNP is reduced
to FOND, a solution for the QNP can be recovered in linear time from a solution to the
FOND problem, and the same for the reduction from FOND problems into QNPs.

The distinction between the classes EXP and EXPSPACE is important. EXPSPACE contains the
(decision) problems that can be solved with  Turing machines (TMs) that operate in
exponential space (as a function of the input size), yet such TMs may in fact run in
doubly exponential time as the running time is bounded, in the worst case, by an
exponential function of the bound in space \shortcite{sipser:book}.
On the other hand, EXP comprises the problems that can be solved with TMs that run
in exponential time. The difference is analogous to the difference between the
classes P (polynomial time) and PSPACE (polynomial space).

\begin{figure}[t]
  \centering
  \begin{tikzpicture}[thick,>={Stealth[inset=2pt,length=8pt,angle'=33,round]},font={\normalsize},qs/.style={draw=black,fill=gray!20!white},init/.style={qs,fill=yellow!50!white},goal/.style={qs,fill=green!50!white}]
    \node[init] (A) at  (0,0) { $\GT{X}, \GT{Y}$ };
    \node[qs]   (B) at  (5,0) { $\GT{X}, \EQ{Y}$ };
    \node[goal] (C) at (10,0) { $\EQ{X}, \GT{Y}$ };
    \path[->] (A) edge[out=140,in=40,looseness=4] node[above,xshift=1] { $b: \DEC{Y}$ } (A);
    \path[->] (A) edge[transform canvas={}] node[above,yshift=-1] { $b: \DEC{Y}$ } (B);
    \path[->] (B) edge[out=220,in=320,looseness=0.7] node[below,yshift=0] { $a: \DEC{X}, \INC{Y}$ } (A);
    \path[->] (B) edge[transform canvas={}] node[above,yshift=-2] { $a: \DEC{X}, \INC{Y}$ } (C);
  \end{tikzpicture}
  \caption{%
    Testing termination with \sieve. Policy graph $\G(P,\pi)$ for the FOND problem
    $P=T_D(Q_{nest})$ and policy $\pi$, from the example in text, containing three states:
    the leftmost is the initial state, and the rightmost is a goal state.
    The two states on the left are reachable from each other, and hence, define
    a strongly connected component (SCC). In this SCC, the $Dec(X)$ edges are removed
    by \sieve because $X$ is not increased anywhere. Once this is done, the $Dec(Y)$
    edges are removed because the edges associated with $Inc(Y)$ effects have been eliminated.
    The resulting graph is acyclic and hence $\pi$ terminates in $P$.
  }
  \label{fig:qnest}
\end{figure}
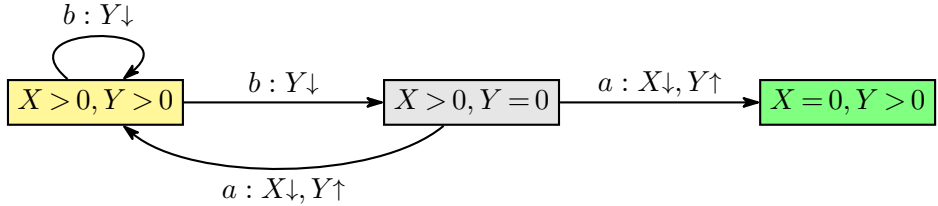

\section{First Reduction: FOND problems into QNPs}

The first reduction that we introduce is from FOND problems into QNPs.
It is a non-trivial reduction yet simpler than the inverse reduction from QNPs into FOND problems.
The main obstacle to overcome is that the non-deterministic effects in
FOND problems are over boolean variables, while those in QNPs are only
on numerical variables through  decrements.
Another important obstacle is that strong cyclic solutions in QNPs are
not QNP solutions unless they are terminating.

Let $P=\tup{F,I,O,G}$ be a FOND problem and let us denote the non-deterministic
effects of action $a$ as $E^a_1\,|\,E^a_2\,|\,\cdots\,|\,E^a_{k_a}$ where each
$E^a_i$ is a set (conjunction) of $F$-literals, and $k_a$ denotes the number of
non-deterministic effects of the action $a$.
If $k_a=1$, the action $a$ is deterministic, else it is non-deterministic.
For simplicity, we assume that the set of effects $\{E^a_i\}_i$ for action $a$
aggregates all the multiple non-deterministic effects in the description of $a$
in $P$, and the reduction below is presented under this assumption.
Afterwards, we discuss how to handle in polynomial time FOND problems whose transitions
are factorized.

We map $P$ into a QNP $Q=\tup{F',V',I',O',G'}$ that extends $P$ with numerical variables
$V'=\{X\}\cup\{Y_{a,i} : a\in O, 1\leq i\leq k_a\}$, extra boolean variables, and extra
actions; i.e., $F \subseteq  F'$, $I\subseteq I'$, $O \subseteq O'$, and $G'=G$.

The heart of the  reduction lies in the way in which the non-deterministic effects
of each action are captured in $Q$.
For this, the collection of non-deterministic effects of the action $a$ are replaced by
an $Inc(X)$ action, for the unique numerical variable $X$, followed by a \emph{fixed loop}
where the variable $X$ is decremented until becoming zero.
The alternative effects $E_i^a$ are then triggered when $\EQ{X}$ becomes true in
the corresponding part of the loop.
Intuitively, each non-deterministic action $a$ becomes a ``wheel of fortune'' that must
be spun to select the non-deterministic effect to be applied.
The increments and decrements of the extra variables $Y_{a,i}$ ensure that the strong
cyclic policies $\pi$ for $P$, and only those, induce policies $\pi'$ for $Q$ that are
terminating.
The fixed loop sequence associated with action $a$ performs the following steps, that
are implemented with the help of new auxiliary actions and propositional symbols:
\begin{enumerate}[1.]
  \item $Inc(X)$ (implemented by modified action $a$),
  \item $Dec(X)$ (implemented by new action $Spin$),
  \item If $X=0$, apply the effects in $E^a_1$, increment $Y_{a,1}$, decrement $Y_{a,j}$, $j\neq 1$,
    and break loop (implemented by new actions $Prep(a,i)$ and  $Exit(a,i)$ for $i=1$),
  \item $Dec(X)$ (implemented by new action $Next(a,i)$ for $i=1$)
  \item If $X=0$, apply the effects in $E^a_2$, increment $Y_{a,2}$, decrement $Y_{a,j}$, $j\neq 2$,
    and break loop (implemented by new actions $Prep(a,i)$ and  $Exit(a,i)$ for $i=2$),
  \item $Dec(X)$ (implemented by the new action $Next(a,i)$ for $i=2$)
  \item[\vdots]
  \item If $X=0$, apply the effects in $E^a_{k_a}$, increment $Y_{a,{k_a}}$, decrement $Y_{a,j}$, $j\neq k_a$,
    and break loop (implemented by new actions $Prep(a,i)$ and  $Exit(a,i)$ for $i=k_a$),
  \item $Dec(X)$ and go back to 3 (implemented by new action $Loop(a)$)
\end{enumerate}
If, throughout the loop, some variable $Y_{a,j}$ becomes zero, the next action to apply
is forced to be a new action that achieves the goal thus ending the execution.
This mechanism takes care of the non-fair trajectories that may exist in $P$.

To show that the resulting mapping is indeed a reduction (i.e., the mapping is sound and
complete), two things need to be established: that a policy $\pi$ that solves $Q$ induces
a policy $\pi'$ that solves $P$ (soundness), and  vice versa, that a policy $\pi'$ that
solves the FOND problem $P$ induces a policy $\pi$ that solves $Q$ (completeness).

The fixed sequence of effects for each action $a$ in $Q$ is implemented using the
following additional boolean variables:
\begin{enumerate}[$\bullet$]
  \item A boolean $normal$ that is false when the sequence is entered for some action $a$ and made
    true when the loop is exited.
  \item A boolean $ex(a)$ to express that the sequence for the action $a$ is being executed.
    The reduction is such that the atoms in $\{normal\}\cup\{ex(a):a\in O\}$ are pairwise mutex and one of them is always true.
  \item A counter from $0$ to $K$ encoded using mutex atoms $cnt(\ell)$, $\ell=0,1,\ldots,K+1$, that is set to $1$
    when the loop is entered and re-entered (step 8 above) and where $K$ is the maximum number of non-deterministic
    outcomes of any action in $P$. (An engineering trick that saves an extra boolean permits $cnt(0)$ and $cnt(i)$ to be both true after executing the action $Prep(a,i)$ below.)
\end{enumerate}
The actions that implement the fixed loop sequence (i.e., spin the wheel)  are the following:
\begin{enumerate}[$\bullet$]
  \item $Spin=\abst{\neg normal, cnt(0), \GT{X}}{\neg cnt(0), cnt(1), \DEC{X}}$.
  \item $Next(a,i)=\abst{ex(a), cnt(i), \GT{X}}{\neg cnt(i), cnt(1+i), \DEC{X}}$ to advance along the fixed loop sequence while $\GT{X}$ and decrementing $X$.
  \item $Loop(a)=\abst{ex(a), cnt(k_a), \GT{X}}{\neg cnt(k_a), cnt(1), \DEC{X}}$ to start a new iteration of the loop.
\end{enumerate}
These are the only actions that decrease the variable $X$, while the (modified) non-deterministic actions
being the only actions that increase $X$ (see below). Once $X$ becomes zero, the non-deterministic effect to apply is
determined by the value of the counter. 
The actions that apply the selected effect and capture the non-fair trajectories are:
\begin{enumerate}[$\bullet$]
  \item $Prep(a,i)=\abst{ex(a), cnt(i), \neg cnt(0), \EQ{X}, \GT{Y_{a,j}} }{cnt(0), \INC{Y_{a,i}}, \{ \DEC{Y_{a,j}}\}_{j\neq i} }$
    where the precondition $\GT{Y_{a,j}}$ is for all $j$, and the effect $\DEC{Y_{a,j}}$ for all $j\neq i$.
  \item $Exit(a,i)=\abst{ex(a), cnt(i), cnt(0), \EQ{X}, \GT{Y_{a,j}}}{\neg ex(a), \neg cnt(i), normal, E^a_i}$
    where the precondition $\GT{Y_{a,j}}$ is for all $j$.
  \item $Fin(a,j)=\abst{ex(a), \EQ{X}, \EQ{Y_{a,j}}}{\neg ex(a), \{\neg cnt(i)\}_{i\geq1}, normal, G}$ that is forced when $\EQ{Y_{a,i}}$ and reaches the goal $G$,
    where the effect $\neg cnt(i)$ is for all positive $i$.
\end{enumerate}
The only actions that affect the variables $Y_{a,j}$ are the $Prep(a,i)$ actions. After applying such
an action, some (one or more) variables $Y_{a,j}$ may become zero. If so, $Fin(a,j)$ becomes the only applicable
action and the execution terminates as the goal is reached.
If no such variable becomes zero, $Exit(a,i)$ becomes forced which applies the effect $E^a_i$ and restores ``normal'' operation.

The initial state of the QNP includes the atoms $normal$ and $cnt(0)$.
Deterministic actions in $P$ ``pass directly'' into the QNP $Q$ with these
two atoms added as extra preconditions.
Non-deterministic actions $a$, however, are handled differently by replacing their
effects $E^a_1\,|\,E^a_2\,|\,\cdots\,|\,E^a_{k_a}$ by deterministic effects
$\{\neg normal, ex(a),\INC{X}\}$ after which the only applicable action would be
the $Spin$ action from above.
The idea of the  construction is illustrated in Figure~\ref{fig:compilation-fond:2}.

\begin{figure}[t]
  \centering
  \begin{tikzpicture}[thick,>={Stealth[inset=2pt,length=8pt,angle'=33,round]},font={\footnotesize},qs/.style={draw=black,fill=gray!20!white},init/.style={qs,fill=yellow!50!white},goal/.style={qs,fill=green!50!white}]
    \node[qs]      (a)  at   (0,2) { $cnt(0),\EQ{X},\GT{Y_j}$ };
    \node[qs]  (entry)  at   (0,0) { $cnt(0),\GT{X},\GT{Y_j}$ };
    \node[qs] (count1)  at  (0,-2) { $cnt(1),\GT{X},\GT{Y_j}$ };
    \node[qs]  (exit1)  at  (6,-2) { $cnt(1),\EQ{X},\GT{Y_j}$ };
    \node[qs] (count2)  at  (0,-4) { $cnt(2),\GT{X},\GT{Y_j}$ };
    \node[qs]  (exit2)  at  (6,-4) { $cnt(2),\EQ{X},\GT{Y_j}$ };
    \node[qs] (countk)  at  (0,-7) { $cnt(k),\GT{X},\GT{Y_j}$ };
    \node[qs]  (exitk)  at  (6,-7) { $cnt(k),\EQ{X},\GT{Y_j}$ };

    \path[->]      (a) edge[transform canvas={xshift=-20}] node[right,yshift=0]  { (modified) $a: \INC{X}$ }     (entry);
    \path[->]  (entry) edge[transform canvas={xshift=-20}] node[right,yshift=0]  { $Spin: \DEC{X}$ }     (count1);
    \path[->]  (entry) edge[transform canvas={xshift=0}]   node[sloped,yshift=7] { $Spin: \DEC{X}$ }      (exit1);
    \path[->] (count1) edge[transform canvas={xshift=-20}] node[right,yshift=-6] { $Next(a,1): \DEC{X}$ } (count2);
    \path[->] (count1) edge[transform canvas={xshift=0}]   node[sloped,yshift=7] { $Next(a,1): \DEC{X}$ }  (exit2);

    \path[->]  (exit1) edge[dashed] node[above,yshift=-1] { $Prep(a,1): \INC{Y_1}, \{\DEC{Y_j}\}_{j\neq 1}$ } (12,-2);
    \path[->]  (exit2) edge[dashed] node[above,yshift=-1] { $Prep(a,2): \INC{Y_2}, \{\DEC{Y_j}\}_{j\neq 2}$ } (12,-4);
    \path[->]  (exitk) edge[dashed] node[above,yshift=-1] { $Prep(a,k): \INC{Y_k}, \{\DEC{Y_j}\}_{j\neq k}$ } (12,-7);

    \path[-]    (count2) edge[dotted,transform canvas={xshift=-20}] (0,-4.80);

    \path[->]   (0,-5.6) edge[transform canvas={xshift=-20}]        node[right,yshift=-2] { $Next(a,k{-}1): \DEC{X}$ } (countk);
    \path[->] (1.6,-5.6) edge[transform canvas={xshift=-5}]         node[sloped,yshift=7] { $Next(a,k{-}1): \DEC{X}$ }  (exitk);

    \path[-] (-2.0,-7) edge[out=0,in=180] (countk);
    \path[-] (-2.0,-7) edge[] node [sloped,yshift=6] { $Loop(a): \DEC{X}$ } (-2.0,-2);
    \path[->] (-2.0,-2) edge[out=0,in=180] (count1);
  \end{tikzpicture}
  \caption{%
    Encoding the non-deterministic boolean effects $E_1\,|\,\cdots\,|\,E_k$ of an action $a$
    in the FOND problem $P$ as a fixed sequence of effects that loops while decrementing the
    variable $X$ in the QNP $Q=R(P)$.
    The variable $X$ implements the loop while the counter $cnt(i)$ determines which effect
    $E_i$ obtains when $X$ becomes zero. When $\EQ{X}$ and $cnt(i)$ hold, $Prep(a,i)$ is applied
    whose function is to increment $Y_i=Y_{a,i}$ and decrement the other $Y_j$ variables; if no
    such variable becomes zero, the effect $E_i$ is applied with the action $Exit(a,i)$ (not shown),
    otherwise $Fin(a,j)$ is applied when $Y_j$ becomes zero (not shown).
    The variables $Y_i$  are used to map \emph{unfair} trajectories in $P$ into
    goal-reaching trajectories in $Q$, while forcing  solutions of $Q$ to induce solutions for $P$.
    Indeed, if one unfair trajectory contains the action $a$ infinitely often but neglects
    (starves) the effect $E_i$, the variable $Y_i$ eventually becomes zero, since the only
    action that increments it is $Prep(a,i)$, and then the trajectory forcibly applies the
    action $Fin(a,i)$ that terminates the execution by reaching the goal
    (see text for details).
  }
  \label{fig:compilation-fond:2}
\end{figure}
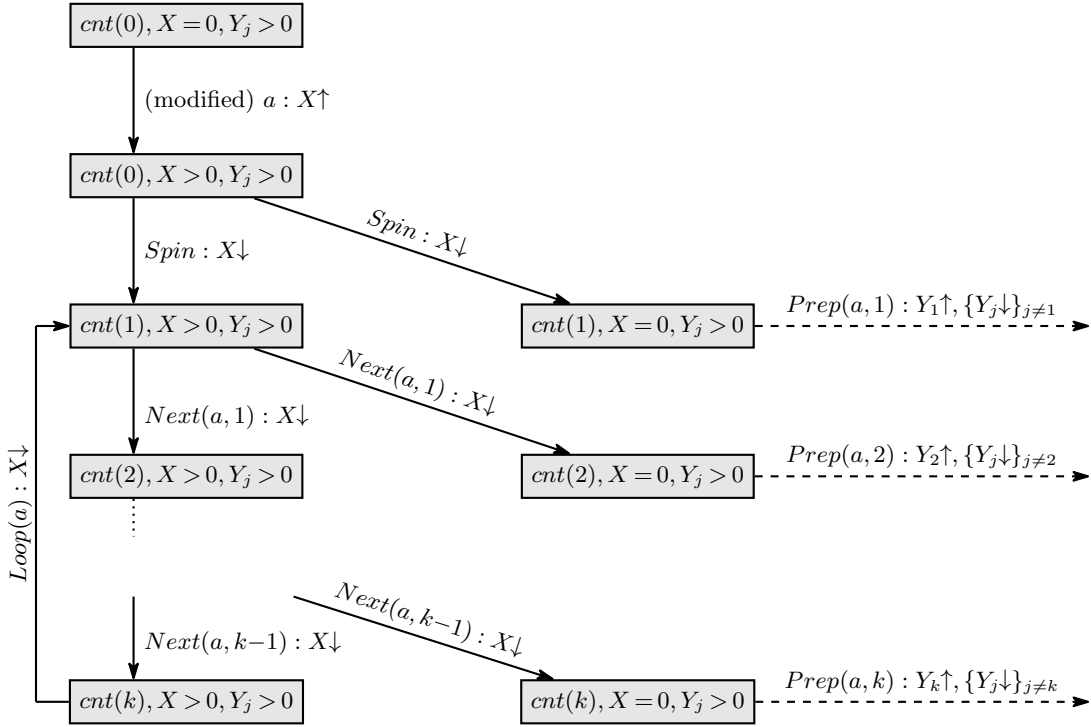

\begin{definition}
  \label{def:reduction:2:fond->qnp}
  Let $P=\tup{F,I,O,G}$ be a FOND problem.
  The reduction $R$ maps $P$ into the QNP $Q=\tup{F',V',I',O',G'}$ given by
  \begin{enumerate}[1.]
    \item $F'=F \cup \{normal\} \cup \{ ex(a) : a \in O \} \cup \{ cnt(\ell) : \ell\in[0,K] \}$ where $K=\max_{a\in O} k_a$,
    \item $V'=\{X\} \cup \{Y_{a,i} : a \in O, i\in[1,k_a] \}$,
    \item $I' = I \cup \{normal,cnt(0)\} \cup \{\EQ{X}\} \cup \{ \GT{Y_{a,i}} : a \in O, i\in[1,k_a] \}$,
    \item $O' = O \cup \{Spin\} \cup \{ \mathcal{A}(a,i) : \mathcal{A}\in\{Prep,Exit,Fin,Next\}, a \in O, i\in[1,k_a] \} \cup \{ Loop(a) : a \in O \}$,
    \item $G' = G$.
  \end{enumerate}
\end{definition}

For stating the formal properties of the reduction, we need to specify how a policy $\pi$ for $Q$
induces a policy $\pi'$ for $P$, and vice versa, as $Q$ involves extra variables and actions.

Let us split the \emph{non-goal states} in $Q$ into two sets: the normal states, where the booleans
$normal$ and $cnt(0)$ are true, and the rest of non-goal states where $normal$ or $cnt(0)$ is false.
(Observe that there cannot be a normal state where some variable $Y_{a,j}$ is zero as when that
happens, the loop must exit with $Fin(a,j)$ that reaches a goal state.)
In the normal states, the (qualitative) value of all the extra variables is the same: all the extra
boolean variables are false except for $normal$ and $cnt(0)$ that are true, $\EQ{X}$, and
$\GT{Y_{a,i}}$ for $a\in O$ and $i=1,2,\ldots,k_a$.
Hence, any policy for $P$ can be converted into a policy for $Q$, and vice versa. For example,
a policy $\pi$ for $P$ translates directly into a policy for $Q$ on normal states, where the
actions for non-normal states is determined (the only real choice is when more that one variable
$Y_{a,j}$ is zero, but then any choice of the $Fin(a,j)$ actions is equivalent as such actions yield goal states).
Conversely, any policy for $Q$ induces a unique policy for $P$ as each non-goal state in $P$
is associated with normal states in $Q$ that get the same action from the policy.
%
%
%
The main properties of the reduction are expressed as follows:

\begin{theorem}[Reduction FOND to QNP]
  \label{thm:reduction:fond->qnp}
  The mapping $R$ is a polynomial-time reduction from FOND problems into QNPs.
  That is, a FOND problem $P$ has a solution iff the QNP $Q=R(P)$ has a solution.
  Moreover, a solution for $P$ (resp.\ $Q$) can be recovered in polynomial time
  from a solution for $Q$ (resp.\ $P$).
\end{theorem}
\begin{proof}
  It is straightforward to see that $R$ is computable in time that is polynomial in
  the size of the input $P$ and the maximum value $k_a$ for any action $a$.
  (See below for a discussion on how to deal with FOND problems that may have
  multiple non-deterministic effects per action.)
  Likewise, as explained above, policies for $P$ map into policies for $Q$,
  and vice versa.

  We first show how to map infinite fair trajectories in $P$ into trajectories in $Q$ that visits normal states infinitely often, and vice versa.
  Indeed, for an infinite trajectory $\tau$ in $P$, we can construct a corresponding trajectory $\tau'$ in $Q$
  by augmenting the states in $\tau$ to give value to the extra booleans and numerical variables in $Q$, and
  by ``inserting'' sub-trajectories over non-normal states for the triplets $(s,a,s')$ in $\tau$ for non-deterministic actions $a$.
  Indeed, for such a triplet, it is enough to determine the effect $E^a_i$ that produces the successor state $s'$ in order
  to obtain a finite sequence of non-normal states that map $s$ into $s'$ in $Q$.
  For the converse, if $\tau'$ is a trajectory in $Q$ that visits normal states infinitely often, then $\tau'$ does not contain an action
  $Fin(a,i)$. Thus, we can construct a trajectory over $P$ by just projecting $\tau'$ over its normal states.
  This trajectory is fair since if some effect $E^a_i$ is starved from an action $a$ that is
  applied infinitely often, the variable $Y_{a,i}$ would become zero forcing the application of $Fin(a,i)$.

  For showing the first implication, let us consider a policy $\pi$ for $P$. We want to show that the policy $\pi'$ obtained from $\pi$ solves $Q$.
  Indeed, suppose that $\pi'$ does not solve $Q$. Then, $\pi'$ is either not strong cyclic or non terminating in $Q$.
  The former is impossible since $\pi$ connects each reachable normal state to a goal state.
  Thus, $\pi'$ is non terminating in $Q$ and there is a non-terminating trajectory $\bar s_0,\ldots[\bar s_i,\ldots,\bar s_m]^*$ in $T_D(Q)$.
  The loop increments and decrements the variable $X$, and all the variables $Y_{a,j}$ for the actions $a$ that are applied in the loop,
  and thus it must contain some normal state.
  Therefore, we can construct an infinite fair $\pi$-trajectory in $P$ that is non-goal reaching contradicting the assumption that $\pi$ solves $P$.

  For showing the second implication, let $\pi'$ be a policy for $Q$ and let us suppose that the policy $\pi$ induced from $\pi'$ does not solve $P$.
  Then, either $\pi'$ is not defined over its reachable states or there is an infinite fair $\pi$-trajectory in $P$, one that does not reach the goal.
  The first case is impossible since $\pi'$ solves $Q$. For the second case, if such trajectory exists, then
  there is a $\pi'$-trajectory $\tau$ in $Q$ that visits normal states infinitely often.
  This trajectory is associated with a non-terminating trajectory in $T_D(Q)$ in contradiction with $\pi'$ being a solution for $Q$.
\Omit{
  \bigskip
  We establish a correspondence between the FOND problem $P$ and the FOND problem
  $P'= T_D(Q)$ associated with $Q$, by focusing on the normal states over $Q$ and $P'$,
  \emph{leaving the values of the extra variables out}. 
  That is, we focus on the sequences of normal states along the trajectories in $P'$
  by defining a (projected) transition function $\tilde F$ defined on pairs $(a,s)$
  where $s$ is a normal state in $P'$ (i.e., where $normal$ and $cnt(0)$ hold) and
  $\tilde F(a,s)$ is a subset of normal states.
  Also, abusing notation, for a state $s$ in $P$, we treat $s$ as a state in $P'$ (by completing
  $s$ with determined qualitative values for the extra booleans in $P'$), and vice versa.

  The transition function $\tilde F$ is defined as follows: $s'\in \tilde F(a,s)$ iff
  $s'$ is the first normal state that follows after applying the action $a$ in the normal state $s$.
  Observe that the action $a$ must be an action from $P$ (possibly modified) since the new actions in
  $Q$ (and $P'$) are only applicable in non-normal states. It is then not difficult to show
  that $s'\in F(a,s)$ iff $s'\in \tilde F(a,s)$.
  Since the initial states of $P$ and $P'$ coincide, this means that infinite trajectories
  over $P$ induce infinite trajectories over $P'$ (and hence $Q$), and that infinite
  trajectories over $P'$ (and $Q$) induce infinite trajectories over $P$.

  The problem $P'$ has indeed an associated state model $\S(P')$ with a transition
  function where $s' \in F'(a,s)$ if $s'$ is a first normal state that may follow
  the normal state $s$ after an action $a$ common to $P$ and $P'$ (i.e., $a$ is not
  an extra action from the translation).
  There must be such first normal states as for non-deterministic actions $a$, the
  loop through non-normal states, as shown in Fig.~\ref{fig:compilation-fond}, must
  eventually terminate either in a normal state or in a non-normal \emph{goal state}
  of $P'$. 

  If $a$ is a deterministic action in $P$, $F'(a,s)$ is given, but for non-deterministic
  actions $a$, $F'(a,s)$ follows from the definition of $R(Q)$.
  It is not difficult to show that 1)~$a \in A(s)$ iff $a \in A'(s)$ where
  $A$ and $A'$ denote the applicable actions in the models associated with $P$ and $P'$
  in normal states $s$ that are common to both $P$ and $P$',
  and 2)~$s' \in F(a,s)$ iff $s' \in F'(a,s)$, where $F$ and $F'$ are
  the transition functions associated with $P$ and $P'$ respectively, and $a$ is an
  action common to $P$ and $P'$.
  Since the initial states of $P$ and $P'$ coincide, this means that infinite trajectories
  over $P$ induce infinite trajectories over $P'$ (and hence $Q$), and that infinite
  trajectories over $P'$ (and $Q$) induce infinite trajectories over $P$.

  For proving the two implications of the theorem, we need to show that
  1)~infinite fair trajectories over $P$ yield non-terminating trajectories over $P'$
  (and hence $Q$), and that
  2)~non-terminating trajectories over $P'$ yield infinite fair trajectories over $P$.
  Then, if $\pi$ solves $P$ but the policy $\pi'$ induced by $\pi$ over $P'$ does not
  solve $P'$, it would mean that there must be a non-terminating $\pi'$-trajectory over
  $P'$, and hence, due to 2), that there must be an infinite fair $\pi$-trajectory over $P$,
  in contradiction with $\pi$ solving $P$.
  Similarly, if $\pi'$ solves $P'$ but the induced policy $\pi$ does not solve $P$,
  then by 1), there must be an infinite $\pi$-trajectory over $P$ that is fair, and
  hence a non-terminating trajectory over $P'$, in contradiction with $\pi'$ solving $P'$.
  We are thus left to show 1) and 2).

  For proving 1), if $\tau$ is an infinite fair $\pi$-trajectory over $P$, then the policy
  graph determined by $\pi$ over the states of $P$, has a strongly connected component $C$
  such that for each non-deterministic action $a$ applied in a state $s \in C$, all its
  successor states $s' \in F(a,s)$ are also in $C$.
  This implies that in the policy graph determined by the policy $\pi'$ over the (normal)
  states $s'$ of $P'$  has a corresponding strongly connected component $C'$ that includes
  $s$ and  all the (normal) successor states $s' \in F(a,s)$. The component $C'$ is
  non-terminating as all the variables $X$ and $Y_{a,i}$ that are used to capture the
  non-determinism of $a$, are all decremented and incremented in $C'$. Then, there is
  a non-terminating trajectory over $P'$, one that enters the component $C'$ infinitely
  often.

  The converse, required for proving 2), is also direct.
  If $C'$ is a non-terminating loop that includes a state $s$ where $a=\pi'(s)$ is a
  non-deterministic action, then $C'$ must include all the (normal) successor states
  $s' \in F(a,s)$, as otherwise, variable $Y_{a,i}$ would be decremented in $C'$ but
  not incremented, and hence the loop would be terminating.
  But then the loop $C$ corresponding to $C'$ in $P$ is also closed in this sense, and
  represents an infinite $\pi$-trajectory in $P$ that is thus fair.
}
\end{proof}

Theorem~\ref{thm:reduction:fond->qnp} states that the strong cyclic policies for a FOND problem $P$
correspond exactly with the policies of the QNP $Q=R(P)$ that can be obtained from $P$ in polynomial time.
There is an analogous translation for computing the \emph{strong policies} of $P$; namely, the strong cyclic
policies that are actually acyclic (no state visited twice along any trajectory).
For this, let $R'$ be the translation that is like $R$ above but with the numerical variables $Y_{a,i}$ removed;
i.e., initial and goal conditions on $Y_{a,i}$ are removed from $R'(P)$ as well as the conditions
and effects on $Y_{a,i}$ in the actions $Prep(a,i)$ and $Exit(a,i)$, and with the actions $Fin(a,j)$ removed.

\begin{theorem}[Reduction Strong FOND to QNP]
  \label{thm:reduction:fond->qnp:strong}
  The mapping $R'$ is a polynomial time reduction from FOND problems with strong solutions
  into QNPs. That is, a FOND problem $P$ has a strong solution iff the QNP $Q=R'(P)$ has
  solution.
  Moreover, a strong solution for $P$ (resp.\ solution for $Q$) can be recovered in polynomial
  time from a solution for $Q$ (resp.\ strong solution for $P$).
\end{theorem}
\begin{proof}
  In the absence of the variables $Y_{a,i}$, the only solutions to $Q=R'(P)$ must be acyclic,
  as any cycle would involve increments and decrements of the single numerical variable $X$,
  and thus, would not be terminating. The rest of the proof follows from the correspondence
  laid out in the previous proof between the trajectories over the normal states in $Q$
  and the trajectories in $P$.
\end{proof}

Let us now consider the case when the FOND problem $P$ contains actions with
multiple non-deterministic effects.
Let $a$ be one such action, and let $n$ be the number of non-deterministic
effects in $a$, each one denoted by $E^j_1\,|\,\cdots\,|\,E^j_{k_j}$ where
$j\in[1,n]$ and $k_j$ is the number of outcomes for the $j$-th non-deterministic
effect of $a$.
By assumption, every choice of outcomes for the effects yields a \emph{consistent}
(aggregated) outcome for $a$ (cf.\ footnote \ref{foot:1}).
The easiest way to accommodate such actions is to ``preprocess'' $P$ by
replacing the action $a$ by the \emph{sequence} $\tup{a(1),a(2),\ldots,a(n)}$
of $n$ non-deterministic actions, each one featuring exactly one effect;
i.e., the effect of $a(j)$ is $E^j_1\,|\,\cdots\,|\,E^j_{k_j}$.
For this, the precondition of $a(1)$ is set to the precondition of $a$,
that of $a(j)$ to $\{seq(a),next(j)\}$, $j\in[2,n]$, and the precondition
of every other action is extended with $\neg seq(a)$.
Likewise, the effect of $a(1)$ is extended with $\{seq(a),next(2)\}$,
that of $a(j)$ with $\{\neg next(j),next(j+1)\}$, $j\in[2,n-1]$,
and that of $a(n)$ with $\{\neg next(n),\neg seq(a)\}$.
The new atoms $seq(a)$ and $next(j)$ denote that the sequence for $a$ is
being applied and the next action to apply in the sequence is $a(j)$
respectively. In this way, the FOND problem $P$ is converted in linear
time into an \emph{equivalent} FOND problem where each action has exactly
one non-deterministic effect.
In other words, we may assume without loss of generality that the FOND
problem $P$ is such that each action has at most one non-deterministic
effect since if this is not the case, $P$ can be converted into an
equivalent FOND problem $P'$ in linear time. By equivalent, we mean
that any solution $\pi$ for $P$ can be converted in polynomial time into
a solution $\pi'$ for $P'$, and vice versa.

\medskip

We have shown how strong cyclic and strong planning over a FOND problem $P$ translates
into QNPs:  in one case, including  the extra  variables
$X$ and $\{Y_{a,i}\}_{a,i}$ in place, in the second,  only $X$.
The translation with these variables offers the possibility of capturing more subtle
forms of fairness.
For example, if we just remove from the translation a variable $Y_{a,i}$
along with the effects on it, the resulting QNP would assume that all the outcomes $E_j$, $j\neq i$,
of action $a$ are fair (i.e., they cannot be skipped forever in a fair trajectory)
but that the outcome $E_i$ is not.
In other words, while in strong cyclic planning, all the non-deterministic actions are assumed
to be fair, and in strong planning, all of them to be unfair, in QNP planning,
it is possible to handle a combination of fair and unfair actions (as in dual FOND planning, Geffner \& Geffner, 2018), 
as well as a combination of fair and unfair outcomes of the same action.

\Omit{
\section{OLD: First Reduction: FOND problems into QNPs}

The first reduction that we introduce is from FOND problems into QNPs.
It is a non-trivial reduction yet simpler than the inverse reduction from QNPs into FOND problems.
The main obstacle to overcome is that the non-deterministic effects in
FOND problems are over boolean variables, while those in QNPs are only
on numerical variables through  decrements.
Another important obstacle is that strong cyclic solutions in QNPs are
not QNP solutions unless they are terminating.

Let $P=\tup{F,I,O,G}$ be a FOND problem and let us denote the non-deterministic
effects of action $a$ as $E^a_1\,|\,E^a_2\,|\,\cdots\,|\,E^a_{k_a}$ where each
$E^a_i$ is a set (conjunction) of $F$-literals, and $k_a$ denotes the number of
non-deterministic effects of the action $a$.
If $k_a=1$, the action $a$ is deterministic, else it is non-deterministic.
For simplicity, we assume that the set of effects $\{E^a_i\}_i$ for action $a$
aggregates all the multiple non-deterministic effects in the description of $a$
in $P$, and the reduction below is presented under this assumption.
Afterwards, we discuss how to remove this requirement for handling in polynomial
time FOND problems whose transitions are factorized.

We map $P$ into a QNP $Q=\tup{F',V',I',O',G'}$ that extends $P$ with numerical variables
$V'=\{X\}\cup\{Y_{a,i} : a\in O, 1\leq i\leq k_a\}$, extra boolean variables, and extra
actions; i.e., $F \subseteq  F'$, $I\subseteq I'$, $O \subseteq O'$, and $G'=G$.

The heart of the  reduction lies in the way in which the non-deterministic effects
of each action are captured in $Q$.
For this, the collection of non-deterministic effects of the action $a$ are replaced by
an $Inc(X)$ action, for the unique numerical variable $X$, followed by a \emph{fixed loop}
where the variable $X$ is decremented until becoming zero.
The alternative effects $E_i^a$ are then triggered when $\EQ{X}$ becomes true in
the corresponding part of the loop. The increments and decrements of the extra variables
$Y_{a,i}$ ensure that the strong cyclic policies $\pi$ for $P$, and only those,
induce policies $\pi'$ for $Q$ that are terminating.
%
%
The fixed loop sequence associated with action $a$ performs the following steps, that
are implemented with the help of new auxiliary actions and propositional symbols:
\begin{enumerate}[1.]
  \item $Inc(X)$ (implemented by modified action $a$),
  \item $Dec(X)$ (implemented by new action $Start$),
  \item If $X=0$, apply the effects in $E^a_1$, increment $Y_{a,1}$, decrement $Y_{a,j}$, $j\neq 1$,
    and break loop (implemented by new action $Exit(a,i)$ for $i=1$),
  \item $Dec(X)$ (implemented by new action $Cont(a,i)$ for $i=1$)
  \item If $X=0$, apply the effects in $E^a_2$, increment $Y_{a,2}$, decrement $Y_{a,j}$, $j\neq 2$,
    and break loop (implemented by new action $Exit(a,i)$ for $i=2$),
  \item $Dec(X)$ (implemented by the new action $Cont(a,i)$ for $i=2$)
  \item[\vdots]
  \item If $X=0$, apply the effects in $E^a_{k_a}$, increment $Y_{a,{k_a}}$, decrement $Y_{a,j}$, $j\neq k_a$,
    and break loop (implemented by new action $Exit(a,i)$ for $i=k_a$),
  \item $Dec(X)$ and go back to 3 (implemented by new action $Loop(a)$)
\end{enumerate}

To show that the resulting mapping is indeed a reduction (i.e., the mapping is sound and
complete), two things need to be established: that a policy $\pi$ that solves $Q$ induces
a policy $\pi'$ that solves $P$ (soundness), and  vice versa, that a policy $\pi'$ that
solves the FOND problem $P$ induces a policy $\pi$ that solves $Q$ (completeness).

The fixed sequence of effects for each action $a$ in $Q$ is implemented using the
following additional boolean variables:
\begin{enumerate}[$\bullet$]
  \item A boolean $normal$ that is false when the sequence is entered for some action $a$ and made
    true when the loop is exited.
  \item A boolean $ex(a)$ to express that the sequence for the action $a$ is being executed.
    The reduction is such that the atoms in $\{normal\}\cup\{ex(a):a\in O\}$ are pairwise mutex and one of them is always true.
  \item A counter from $0$ to $K$ encoded using mutex atoms $cnt(\ell)$, $\ell=0,1,\ldots,K+1$, that is set to $1$
    when the loop is entered and re-entered (step 8 above) and where $K$ is the maximum number of non-deterministic
    outcomes of any action in $P$.
\end{enumerate}
The actions that implement the fixed loop sequence are the following:
\begin{enumerate}[$\bullet$]
  \item $Start=\abst{\neg normal, cnt(0), \GT{X}}{\neg cnt(0), cnt(1), \DEC{X}}$.
  \item $Cont(a,i)=\abst{ex(a), cnt(i), \GT{X}}{\neg cnt(i), cnt(1+i), \DEC{X}}$ to advance along the fixed loop sequence by decrementing variable $X$.
  \item $Loop(a)=\abst{ex(a), cnt(k_a), \GT{X}}{\neg cnt(k_a), cnt(1)}$ to start a new iteration of the loop.
  \item $Exit(a,i)=\abst{ex(a), cnt(i), \EQ{X}, \GT{Y_{a,i}}}{\neg ex(a), \neg cnt(i), normal, cnt(0), E^a_i, \INC{Y_{a,i}}, \DEC{Y_{a,j}}}$
    where the decrement is for every variable $Y_{a,j}$ with $j\neq i$.
  \item $ExitG(a,i)=\abst{ex(a), cnt(i), \EQ{X}, \EQ{Y_{a,i}}}{\neg ex(a), \neg cnt(i), normal, cnt(0), G}$
    that may be used to achieve the goal $G$ when $\EQ{X}$ and $\EQ{Y_{a,i}}$.
\end{enumerate}

The initial state of the QNP includes the atoms $normal$ and $cnt(0)$.
Deterministic actions $a$ in $P$ ``pass directly'' into the QNP $Q$ with these
two atoms added as extra preconditions.
Non-deterministic actions, however, are handled differently by replacing their
effects $E^a_1\,|\,E^a_2\,|\,\cdots\,|\,E^a_{k_a}$ by deterministic effects
$\{\neg normal, ex(a),\INC{X}\}$ after which the only applicable action would be
the $Start$ action from above.
The idea of the  construction is illustrated in Figure~\ref{fig:compilation-fond}.

\begin{figure}[t]
  \centering
  \begin{tikzpicture}[thick,>={Stealth[inset=2pt,length=8pt,angle'=33,round]},font={\footnotesize},qs/.style={draw=black,fill=gray!20!white},init/.style={qs,fill=yellow!50!white},goal/.style={qs,fill=green!50!white}]
    \node[qs]  (entry)  at   (0,0) { $cnt(0),\GT{X},\GT{Y_i}$ };
    \node[qs] (count1)  at  (0,-2) { $cnt(1),\GT{X},\GT{Y_i}$ };
    \node[qs]  (exit1)  at  (6,-2) { $cnt(1),\EQ{X},\GT{Y_i}$ };
    \node[qs] (count2)  at  (0,-4) { $cnt(2),\GT{X},\GT{Y_i}$ };
    \node[qs]  (exit2)  at  (6,-4) { $cnt(2),\EQ{X},\GT{Y_i}$ };
    \node[qs] (countk)  at  (0,-7) { $cnt(k),\GT{X},\GT{Y_i}$ };
    \node[qs]  (exitk)  at  (6,-7) { $cnt(k),\EQ{X},\GT{Y_i}$ };

    \path[->]  (entry) edge[transform canvas={xshift=-20}] node[right,yshift=0]  { $Start: \DEC{X}$ }     (count1);
    \path[->]  (entry) edge[transform canvas={xshift=0}]   node[sloped,yshift=7] { $Start: \DEC{X}$ }      (exit1);
    \path[->] (count1) edge[transform canvas={xshift=-20}] node[right,yshift=-6] { $Cont(a,1): \DEC{X}$ } (count2);
    \path[->] (count1) edge[transform canvas={xshift=0}]   node[sloped,yshift=7] { $Cont(a,1): \DEC{X}$ }  (exit2);

    \path[->]  (exit1) edge[dashed] node[above,yshift=-1] { $Exit(a,1): \INC{Y_1}, \{\DEC{Y_j}\}_{j\neq 1}$ } (12,-2);
    \path[->]  (exit2) edge[dashed] node[above,yshift=-1] { $Exit(a,2): \INC{Y_2}, \{\DEC{Y_j}\}_{j\neq 2}$ } (12,-4);
    \path[->]  (exitk) edge[dashed] node[above,yshift=-1] { $Exit(a,k): \INC{Y_k}, \{\DEC{Y_j}\}_{j\neq k}$ } (12,-7);

    \path[-]    (count2) edge[dotted,transform canvas={xshift=-20}] (0,-4.80);

    \path[->]   (0,-5.6) edge[transform canvas={xshift=-20}]        node[right,yshift=-2] { $Cont(a,k{-}1): \DEC{X}$ } (countk);
    \path[->] (1.6,-5.6) edge[transform canvas={xshift=-5}]         node[sloped,yshift=7] { $Cont(a,k{-}1): \DEC{X}$ }  (exitk);

    \path[-] (-2.0,-7) edge[out=0,in=180] (countk);
    \path[-] (-2.0,-7) edge[] node [sloped,yshift=6] { $Loop(a)$ } (-2.0,-2);
    \path[->] (-2.0,-2) edge[out=0,in=180] (count1);
  \end{tikzpicture}
  \caption{%
    Encoding the non-deterministic boolean effects $E_1\,|\,\cdots\,|\,E_k$ of an action $a$
    in the FOND problem $P$ as a fixed sequence of effects that loops while decrementing the
    variable $X$ in the QNP $Q=R(P)$.
    The variable $X$ implements the loop while the counter $cnt(i)$ determines which effect
    $E_i$ obtains when $X$ becomes zero.
    The variables $Y_i = Y_{a,i}$  are used to map \emph{unfair} trajectories in $P$ into
    goal-reaching trajectories in $Q$, while forcing  solutions of $Q$ to induce solutions for $P$.
    Indeed, if one unfair trajectory contains the action $a$ infinitely often but neglects
    (starves) the effect $E_i$, the variable $Y_i$ eventually becomes zero, since the only
    action that increments it is $Exit(a,i)$, and then the trajectory forcibly applies the
    action $ExitG(a,i)$ that terminates the execution by reaching the goal
    (see text for details).
  }
  \label{fig:compilation-fond}
\end{figure}


\begin{definition}
  \label{def:reduction:fond->qnp}
  Let $P=\tup{F,I,O,G}$ be a FOND problem.
  The reduction $R$ maps $P$ into the QNP $Q=\tup{F',V',I',O',G'}$ given by
  \begin{enumerate}[1.]
    \item $F'=F \cup \{normal\} \cup \{ ex(a) : a \in O \} \cup \{ cnt(\ell) : \ell\in[0,K] \}$ where $K=\max_{a\in O} k_a$,
    \item $V'=\{X\} \cup \{Y_{a,i} : a \in O, i\in[1,k_a] \}$,
    \item $I' = I \cup \{normal,cnt(0)\} \cup \{\EQ{X}\} \cup \{ \GT{Y_{a,i}} : a \in O, i\in[1,k_a] \}$,
    \item $O' = O \cup \{Start\} \cup \{ \mathcal{A}(a,i) : \mathcal{A}\in\{Exit,ExitG,Cont\}, a \in O, i\in[1,k_a] \} \cup \{ Loop(a) : a \in O \}$,
    \item $G' = G$.
  \end{enumerate}
\end{definition}

%
For stating the formal properties of the reduction, we need to specify how a policy $\pi$ for $Q$
induces a policy $\pi'$ for $P$, and vice versa, as $Q$ involves extra variables and actions.

Let us split the \emph{non-goal states} in $Q$ into two sets: the normal states,
where the booleans $normal$ and $cnt(0)$ are true, and the rest of non-goal states where
$normal$ or $cnt(0)$ is false.
(Observe that there cannot be a normal state where some variable $Y_{a,i}$ is zero as when
that happens, the loop must exit with the $ExitG(a,i)$ that reaches a goal state.)
In the normal states, the (qualitative) value of all the extra variables is the same:
all the extra boolean variables are false except for $normal$ and $cnt(0)$ that are true,
$\EQ{X}$, and $\GT{Y_{a,i}}$ for $a\in O$ and $i=1,2,\ldots,k_a$.
Moreover, for every state $s$ over the FOND $P$, there is a unique normal state $\tilde s$
over the QNP $Q$ that extends $s$ with this fixed value of the extra variables in $Q$, and
vice versa, for any normal state $\tilde s$ over $Q$ there is a unique state $s$ over $P$.
\textcolor{red}{\bf (This is not true: for state $s$, the QNP state $\tilde s$ must assign some positive value to variables $Y_{a,i}$, a value that is not defined)}
Taking advantage of this relation and leaving the part of the normal states over $Q$ that
is fixed implicit, we obtain that a policy $\pi$ for $P$ represents a unique policy $\pi$
over the normal states in $Q=R(P)$, and vice versa, a policy $\pi$ over the normal states
represents a unique policy for $P$.
The policy over the non-normal states $t$ in $Q$ is determined as for each such state there
is only one (new) action applicable; i.e., non-normal states $t$ are mapped to one and only
one of the actions in $O'\setminus O$ according to the action precondition that is true in
$t$; observe that the preconditions of these actions are mutually exclusive.
The main properties of the reduction can be then expressed as follows:

\begin{theorem}[Reduction FOND to QNP]
  \label{thm:reduction:fond->qnp}
  The mapping $R$ is a polynomial-time reduction from FOND problems into QNPs.
  That is, a FOND problem $P$ has a solution iff the QNP $Q=R(P)$ has a solution.
  Moreover, a solution for $P$ (resp.\ $Q$) can be recovered in polynomial time
  from a solution for $Q$ (resp.\ $P$).
\end{theorem}
\begin{proof}
  It is straightforward to see that $R$ is computable in time that is polynomial in
  the size of the input $P$ and the maximum value $k_a$ for any action $a$.
  (See below for a discussion on how to deal with FOND problems that may have
  multiple non-deterministic effects per action.)
  In the following we show that $R$ is indeed a reduction.

  We establish a correspondence between the FOND problem $P$ and the FOND problem
  $P'= T_D(Q)$ associated with $Q$, by focusing on the normal states over $Q$ and $P'$,
  leaving the values of the extra variables out, so that such reduced states over $P'$
  are states over $P$ and vice versa.

  The problem $P'$ has indeed an associated state model $\S(P')$ with a transition
  function where $s' \in F'(a,s)$ if $s'$ is a first normal state that may follow
  the normal state $s$ after an action $a$ common to $P$ and $P'$ (i.e., $a$ is not
  an extra action from the translation).
  There must be such first normal states as for non-deterministic actions $a$, the
  loop through non-normal states, as shown in Fig.~\ref{fig:compilation-fond}, must
  eventually terminate either in a normal state or in a non-normal \emph{goal state}
  of $P'$. 

  If $a$ is a deterministic action in $P$, $F'(a,s)$ is given, but for non-deterministic
  actions $a$, $F'(a,s)$ follows from the definition of $R(Q)$.
  It is not difficult to show that 1)~$a \in A(s)$ iff $a \in A'(s)$ where
  $A$ and $A'$ denote the applicable actions in the models associated with $P$ and $P'$
  in normal states $s$ that are common to both $P$ and $P$',
  and 2)~$s' \in F(a,s)$ iff $s' \in F'(a,s)$, where $F$ and $F'$ are
  the transition functions associated with $P$ and $P'$ respectively, and $a$ is an
  action common to $P$ and $P'$.
  Since the initial states of $P$ and $P'$ coincide, this means that infinite trajectories
  over $P$ induce infinite trajectories over $P'$ (and hence $Q$), and that infinite
  trajectories over $P'$ (and $Q$) induce infinite trajectories over $P$.

  For proving the two implications of the theorem, we need to show that
  1)~infinite fair trajectories over $P$ yield non-terminating trajectories over $P'$
  (and hence $Q$), and that
  2)~non-terminating trajectories over $P'$ yield infinite fair trajectories over $P$.
  Then, if $\pi$ solves $P$ but the policy $\pi'$ induced by $\pi$ over $P'$ does not
  solve $P'$, it would mean that there must be a non-terminating $\pi'$-trajectory over
  $P'$, and hence, due to 2), that there must be an infinite fair $\pi$-trajectory over $P$,
  in contradiction with $\pi$ solving $P$.
  Similarly, if $\pi'$ solves $P'$ but the induced policy $\pi$ does not solve $P$,
  then by 1), there must be an infinite $\pi$-trajectory over $P$ that is fair, and
  hence a non-terminating trajectory over $P'$, in contradiction with $\pi'$ solving $P'$.
  We are thus left to show 1) and 2).

  For proving 1), if $\tau$ is an infinite fair $\pi$-trajectory over $P$, then the policy
  graph determined by $\pi$ over the states of $P$, has a strongly connected component $C$
  such that for each non-deterministic action $a$ applied in a state $s \in C$, all its
  successor states $s' \in F(a,s)$ are also in $C$.
  This implies that in the policy graph determined by the policy $\pi'$ over the (normal)
  states $s'$ of $P'$  has a corresponding strongly connected component $C'$ that includes
  $s$ and  all the (normal) successor states $s' \in F(a,s)$. The component $C'$ is
  non-terminating as all the variables $X$ and $Y_{a,i}$ that are used to capture the
  non-determinism of $a$, are all decremented and incremented in $C'$. Then, there is
  a non-terminating trajectory over $P'$, one that enters the component $C'$ infinitely
  often.

  The converse, required for proving 2), is also direct.
  If $C'$ is a non-terminating loop that includes a state $s$ where $a=\pi'(s)$ is a
  non-deterministic action, then $C'$ must include all the (normal) successor states
  $s' \in F(a,s)$, as otherwise, variable $Y_{a,i}$ would be decremented in $C'$ but
  not incremented, and hence the loop would be terminating.
  But then the loop $C$ corresponding to $C'$ in $P$ is also closed in this sense, and
  represents an infinite $\pi$-trajectory in $P$ that is thus fair.
\end{proof}

Theorem~\ref{thm:reduction:fond->qnp} states that the strong cyclic policies for a FOND problem $P$
correspond exactly with the policies of the QNP $Q=R(P)$ that can be obtained from $P$ in polynomial time.
There is an analogous translation for computing the \emph{strong policies} of $P$; namely, the strong cyclic
policies that are actually acyclic (no state visited twice along any trajectory).
For this, let $R'$ be the translation that is like $R$ above but with the numerical variables $Y_{a,i}$ removed;
i.e., initial and goal conditions on $Y_{a,i}$ are removed from $R'(P)$ as well as the conditions
and effects on $Y_{a,i}$ in the actions $Exit(a,i)$, and also the actions $ExitG(a,i)$ are removed.

\begin{theorem}[Reduction Strong FOND to QNP]
  \label{thm:reduction:fond->qnp:strong}
  The mapping $R'$ is a polynomial time reduction from FOND problems with strong solutions
  into QNPs. That is, a FOND problem $P$ has a strong solution iff the QNP $Q=R'(P)$ has
  solution.
  Moreover, a strong solution for $P$ (resp.\ solution for $Q$) can be recovered in polynomial
  time from a solution for $Q$ (resp.\ strong solution for $P$).
\end{theorem}
\begin{proof}
  In the absence of the variables $Y_{a,i}$, the only solutions to $Q=R'(P)$ must be acyclic,
  as any cycle would involve increments and decrements of the single numerical variable $X$,
  and thus, would not be terminating. The rest of the proof follows from the correspondence
  laid out in the previous proof between the trajectories over the normal states in $Q$
  and the trajectories in $P$.
\end{proof}

Let us now consider the case when the FOND problem $P$ contains actions with
multiple non-deterministic effects.
Let $a$ be one such action, and let $n$ be the number of non-deterministic
effects in $a$, each one denoted by $E^j_1\,|\,\cdots\,|\,E^j_{k_j}$ where
$j\in[1,n]$ and $k_j$ is the number of outcomes for the $j$-th non-deterministic
effect of $a$.
By assumption, every choice of outcomes for the effects yields a \emph{consistent}
(aggregated) outcome for $a$ (cf.\ footnote \ref{foot:1}).
The easiest way to accommodate such actions is to ``preprocess'' $P$ by
replacing the action $a$ by the \emph{sequence} $\tup{a(1),a(2),\ldots,a(n)}$
of $n$ non-deterministic actions, each one featuring exactly one effect;
i.e., the effect of $a(j)$ is $E^j_1\,|\,\cdots\,|\,E^j_{k_j}$.
For this, the precondition of $a(1)$ is set to the precondition of $a$,
that of $a(j)$ to $\{seq(a),next(j)\}$, $j\in[2,n]$, and the precondition
of every other action is extended with $\neg seq(a)$.
Likewise, the effect of $a(1)$ is extended with $\{seq(a),next(2)\}$,
that of $a(j)$ with $\{\neg next(j),next(j+1)\}$, $j\in[2,n-1]$,
and that of $a(n)$ with $\{\neg next(n),\neg seq(a)\}$.
The new atoms $seq(a)$ and $next(j)$ denote that the sequence for $a$ is
being applied and the next action to apply in the sequence is $a(j)$
respectively. In this way, the FOND problem $P$ is converted in linear
time into an \emph{equivalent} FOND problem where each action has exactly
one non-deterministic effect.
In other words, we may assume without loss of generality that the FOND
problem $P$ is such that each action has at most one non-deterministic
effect since if this is not the case, $P$ can be converted into an
equivalent FOND problem $P'$ in linear time. By equivalent, we mean
that any solution $\pi$ for $P$ can be converted in polynomial time into
a solution $\pi'$ for $P'$, and vice versa.

\medskip

We have shown how strong cyclic and strong planning over a FOND problem $P$ translates
into QNPs:  in one case, including  the extra  variables
$X$ and $\{Y_{a,i}\}_{a,i}$ in place, in the second,  only $X$.
The translation with these variables offers the possibility of capturing more subtle
forms of fairness.
For example, if we just remove from the translation a variable $Y_{a,i}$
along with the effects on it, the resulting QNP would assume that all the outcomes $E_j$, $j\neq i$,
of action $a$ are fair (i.e., they cannot be skipped forever in a fair trajectory)
but that the outcome $E_i$ is not.
In other words, while in strong cyclic planning, all the non-deterministic actions are assumed
to be fair, and in strong planning, all of them to be unfair, in QNP planning,
it is possible to handle a combination of fair and unfair actions (as in dual FOND planning
\shortcite{geffner:fond-sat}), as well as a combination of fair and unfair outcomes of the same action.
}

\section{Second Reduction: QNPs into FOND problems}

We have shown that FOND problems  can be reduced in polynomial time to QNPs.
We show now the other direction: QNPs  can be reduced in polynomial time to FOND problems.
The two  results imply a  new complexity result; namely, that QNPs have the same expressive
power as FOND problem and that the plan existence decision problem for both models have the
same complexity.
This second translation $T$ is more subtle than the first and unlike the direct translation
$T_D$ above, it is a full reduction which does not require termination tests.

The first attempt at such a translation was sketched by \citeay{bonet:ijcai2017} but the reduction
is buggy as  it is not always sound.
The intuition, however, is useful and we build on it. Basically, that reduction introduces boolean
variables $q_X$ that when set to true preclude increments of variable $X$, hence making the
decrements of $X$ ``fair''. The variable $q_X$ can be reset to false when the loop ``finishes'',
i.e., when  $\EQ{X}$ is true. This idea, however, does not fully avoid non-terminating loops and
hence, by itself, does not produce a sound reduction.\footnote{Consider a QNP $Q=\tup{F,V,I,O,G}$
  with a single numerical variable $X$ and four actions $a$, $b$, $c$, and $d$ that result in a
  loop where $a=\abst{p_1, \GT{X}}{\neg p_1, p_2, \DEC{X}}$, $b=\abst{p_2}{p_3, \neg p_2}$,
  $c=\abst{p_3, \GT{X}}{\DEC{X}}$, $d=\abst{p_3, \EQ{X}}{\neg p_3, p_1, \INC{X}}$.
  Let us assume that $I=\{p_1,\GT{X}\}$ and $G=\{\EQ{X},p_2\}$.
  There is a single policy $\pi$ for $Q$, as in all the (non-goal) states that can be reached from $I$,
  there is a single applicable action. This policy $\pi$ is strongly cyclic but is not terminating.
  The reason is that one of the trajectories determined by the policy is a non-terminating loop
  $s_0,a,s_1,b,s_2,c,s_3,d,s_0,\ldots$ where the single variable that is decremented ($X$) is
  also incremented, and where $\bar s_0=\{p_1,\GT{X}\}$, $\bar s_1=\{p_2,\GT{X}\}$, $\bar s_2=\{p_3,\GT{X}\}$, and $\bar s_3=\{p_3,\EQ{X}\}$.
  The FOND problem that results from the translation sketched by \citeay{bonet:ijcai2017} accepts
  this policy $\pi$ as a solution, which is incorrect. This happens because the variable $q_X$ can be
  set and reset an infinite number of times; indeed, right before and right after the action $c$
  in the loop, respectively.
  The new translation excludes such non-terminating loops via a stack and counters.
}
The \emph{new translation} replaces the $q_X$ variables by a \emph{bounded stack} that keeps
the variables $X$ that are being decremented \emph{in order}, and suitable \emph{counters}.
The new variables and actions enforce that solutions of the FOND problem $P=T(Q)$, unlike the
solutions of the direct translation $T_D(Q)$, are all terminating.
For this, the new translation introduces conditions that mirror those captured by the \sieve
procedure. In particular, for capturing policies that terminate, the variables are to be placed
on the stack following the order by which \sieve removes them.

\subsection{Extra Variables and Actions}

The reduction $T(Q)$ introduces a bounded stack $\alpha$ where numerical variables from $V$ can
be pushed and popped, and bounded  counters $c(d)$, for $d=0, \ldots, |V|$ that are  associated
with the possible levels (depths)  $d=|\alpha|$  of the stack.
There is also a top counter $c_T$ that may only increase.
The stack starts empty and may grow to contain all the variables in $V$, but no variable can appear
in the stack more than once.
The stack is represented as growing from left to right; e.g., $\alpha X$ is the stack that results
of pushing the variable $X$ in the stack $\alpha$.
The $c$ counters start at $0$ and may grow up to a $Max$ number, that for completeness must be set
to $1+2^n$ where $n$ is the total number of boolean and numerical variables in $Q$.
In practice, $Max$ can be set to a much small number.\footnote{For structured policies that result in
  loops that can be entered and left through single entry and exit points, $Max$ is the bound on the
  number of consecutive loops (blocks), possibly with other loops nested, that the policy can generate
  at the same level.
}
In any case, the counters and the stack are captured in terms of a polynomial number of boolean
variables and the whole reduction $T(Q)$ is computed in polynomial time.
The state of the stack $\alpha$ is captured by the atoms $in(X)$, $depth(d)$, and $index(X,d)$
that represent whether $X$ is in the stack, the depth of the stack, and the depth at which $X$
is in the stack, respectively.
$X$ is the top element in the stack when $index(X,d)$ and $depth(d)$ are true (i.e., the stack
is $\alpha X$ and $|\alpha|=d-1$), and it is bottom element when $index(X,1)$ is true (i.e.,
the stack is $X$). The stack is empty when $depth(0)$ holds.

The extra actions in $P=T(Q)$ are those for pushing and popping variables to and from the stack,
and for advancing the top counter $c_T$.


\begin{enumerate}[1.]
  \item \textbf{Actions $Push(X,d)$} for variable $X$ and depth $d\in[0,|V|-1]$ have preconditions
    $\neg in(X)$, $depth(d)$ and $c(d) < Max$, and effects:
    \begin{enumerate}[$a)$]
      \item $in(X)$, $index(X,d+1)$, $depth(d+1)$ and $\neg depth(d)$ to push $X$ and increase stack depth,
      \item $c(d) := c(d) + 1$ to increment counter for old level,
      \item $c(d+1):= 0$ to initialize counter for new level.
    \end{enumerate}
  \item \textbf{Actions $Pop(X,d)$} for variable $X$ and depth $d\in[1,|V|]$ have preconditions
    $in(X)$, $index(X,d)$ and $depth(d)$, and effects:
    \begin{enumerate}[$a)$]
      \item $\neg in(X)$, $\neg index(X,d)$, $\neg depth(d)$, $depth(d-1)$ to pop $X$ and decrease stack depth.
    \end{enumerate}
  \item \textbf{Action $Move$} advances the top counter $c_T$  by 1 when the stack is empty;
    i.e., it has preconditions $depth(0)$ and $c_T < Max$, and effect $c_T := c_T+1$.
\end{enumerate}

For simplicity, we assume that the language of our FOND problems $P$ makes room for the integer counters
$c(d)$, $d=0,\ldots,|V|$, that may be increased by 1, from $0$ up to a fixed number $Max$ and that may be
reset back to $0$. In the implementation, these counters are represented in terms of a linear number of
boolean variables.\footnote{In the actual encoding, the counters $c(d)$ are represented with $1+n$ atoms
  (bits), $b_i(d)$, $i=0,\ldots,n$, where $n$ is total number of variables in $Q$, i.e., $n=|F|+|V|$.
  A preconditions such as $c(d)<Max$ then translates into the precondition $\neg b_n(d)$; the least and
  most significant bits for $c(d)$ are $b_0(d)$ and $b_n(d)$ respectively.
  Increments of $c(d)$ by $1$ may be translated in two different ways, either by using conditional effects,
  or by increasing the number of actions. For the former, conditional effects of the form
  $b_0(d), \ldots b_{i-1}(d), \neg b_i(d) \rightarrow \neg b_0(d), \ldots, \neg b_{i-1}(d), b_i(d)$,
  for $i\in[0,n]$, are used. For the latter, each action $act$ that increases $c(d)$ is replaced by
  $n$ actions $act(i)$, $i\in[0,n]$, that are like $act$ but have the extra precondition
  $b_0(d), \ldots b_{i-1}(d), \neg b_i(d)$ and the extra effects $b_0(d), \ldots, \neg b_{i-1}(d), b_i(d)$.
  The first translation, however, when compiled into STRIPS introduces additional actions as well
  \shortcite{nebel:expressiveness}.
  Finally, a reset effect $c(d) := 0$ is obtained by setting all atoms $b_0(d)$, $i\in[0,n]$, to false.
}

The actions $a$ that belong to $Q$ are split into two classes. Actions $a$ \emph{that do not decrement any variable},
keep their names in $P=T(Q)$ but replace their $Inc(X)$ effects by propositional effects $\GT{X}$, and add the
precondition $\neg in(X)$ that disables $a$ when $X$ is in the stack:

\begin{enumerate}[1.]
  \item[4.] \textbf{Actions $a$ in $Q$ that decrement no variable}, keep the same names in $T(Q)$, the same preconditions
    and same effects, except that the effects $Inc(X)$ are replaced by propositional effects $\GT{X}$,
    if any, and in such a case, the precondition $\neg in(X)$ is added.
\end{enumerate}

Actions $a$ from $Q$ \emph{that decrement variables} (and hence introduce non-determinism)
map into actions in $P$ of type $a(X,d)$ where $X$ is a variable decremented by $X$
that is in the stack at depth $d$:
more of the variables that are decremented are in the stack when the action is applied:

\Omit{
\begin{enumerate}[1.]
  \item[5.] \textbf{Actions $a(d)$ for actions $a$ in $Q$ that decrement variables $X_1,\ldots,X_k$, $k\ge 1$,
    none of which is in the stack} inherit propositional preconditions and effects of $a$, and for each variable
    $Y$ that is increased by $a$, they include the precondition $\neg in(Y)$ and effect $\GT{Y}$.
    The parameter $d\in[0,|V|]$ stands for the current stack depth. The action $a(d)$ also has:
    \begin{enumerate}[$a)$]
      \item extra preconditions $depth(d)$ and $\neg in(X_i)$, $i\in[1,k]$, and $c(d) < Max$,
      \item extra non-deterministic effects $\GT{X_i}\,|\,\EQ{X_i}$, $i\in[1,k]$, and
      \item extra effect $c(d):= c(d)+1$ to increase the counter for level $d$.
    \end{enumerate}
\end{enumerate}

Similarly, the actions $a$ from $Q$ that decrement a variable $X$ that is in the stack are captured
through $a(X,d)$ actions:
}

\begin{enumerate}[1.]
  \item[5.] \textbf{Actions $a(X,d)$ for $a$ in $Q$ that decrement a variable $X$ in the stack at level $d$
    (and possibly others)} inherit propositional preconditions and effects of $a$, and for each variable $Y$
    that is increased by $a$, they include the precondition $\neg in(Y)$ and the effect $\GT{Y}$.
    The parameter $d\in[1,|V|]$ stands for the current stack depth. The action $a(X,d)$ also has:
    \begin{enumerate}[$a)$]
      \item extra precondition $index(X,d)$ (i.e., $d$ is level at which $X$ appears in stack),
      \item extra non-deterministic effects $\GT{X_i}\,|\,\EQ{X_i}$ for each $Dec(X_i)$ effect, and
      \item extra effects $c(d'):=0$ for each $d'$ such that $d \leq d' \leq |V|$ to reset the counters for the levels above or equal to $d$.
    \end{enumerate}
\end{enumerate}

In words, actions $a$ from $Q$ that do not decrement any variable map into a single action of the form $a$ in $P=T(Q)$,
while actions $a$ from $Q$ that decrement  variables map into actions
$a(X,d)$ applicable only when a variable $X$ decremented by $a$ is in the
stack at level $d$. The actions of the form $a$ in $P=T(Q)$ are deterministic, and only  actions $a(X,d)$ can generate
cycles in a strong cyclic policy for $P$.
The reduction $P=T(Q)$ can be summarized as follows:

\begin{definition}[Reduction QNP to FOND]
  \label{def:reduction:qnp->fond}
  Let $Q\tup{F,V,I,O,G}$ be a QNP.
  The FOND problem $P=T(Q)$ is $P=\tup{F',I',O',G'}$ with:
  \begin{enumerate}[1.]
    \item $F' = F \cup \{c_T\} \cup \{ depth(d), c(d) \} \cup \{ in(X) \} \cup \{ index(X,d') \}$,
    \item $I' = I \cup \{ depth(0), \EQ{c_T}, \EQ{c(0)} \}$,
    \item $G' = G$,
    \item $O'=  \{a : a \in O^+\} \cup \{ a(Y,d') : a \in O^- \} \cup \{ Push(X,d'-1), Pop(X,d') \} \cup \{ Move \}$
  \end{enumerate}
  where $X$ ranges over $V$, $d$ and $d'$ range over $[0,|V|]$ and $[1,|V|]$ respectively,
  and $O^-$ and $O^+$ stand for the sets of actions in $O$ that decrement and do not decrement
  a variable respectively, and the variable $Y$ in $a(Y,d')$ ranges among the variables
  decremented by the action $a$ in $Q$.
  Preconditions and effects of the actions in $O'$ are described above in the text.
\end{definition}

\section{Properties}

\noindent Clearly, the reduction $P=T(Q)$  can be computed  in polynomial time:

\begin{theorem}
  \label{thm:translation:poly}
  Let $Q=\tup{F,V,I,O,G}$ be a QNP.
  The reduction $P=T(Q)$ can be computed in time that is polynomial in the size of $Q$.
\end{theorem}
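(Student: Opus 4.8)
The plan is to bound, one at a time, the number and size of the fresh fluents, the actions, and the individual action descriptions that make up $P=T(Q)$, and then observe that each is polynomial in $n=|F|+|V|$ and $|O|$, so that $P$ can be written down (hence computed) in polynomial time by mechanically following Definition~\ref{def:reduction:qnp->fond} and the action schemas given in the text.

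First I would count the fluents in $F'$. Beyond the original set $F$, the reduction adds the top counter $c_T$, the depth atoms $depth(d)$ for $d\in[0,|V|]$, the per-level counters $c(d)$ for $d\in[0,|V|]$, the membership atoms $in(X)$ for $X\in V$, and the index atoms $index(X,d)$ for $X\in V$ and $d\in[1,|V|]$. The index atoms alone number $|V|^2$. The one point that needs care is the counters: although the bound $Max$ is set to $1+2^n$, which is exponential as an integer, each counter is represented in binary with only $1+n$ bits, so the $|V|+2$ counters contribute only $O(n\cdot|V|)$ fluents. Hence $|F'|=O(n^2)$, and likewise $I'$ and $G'$ are lists of literals of size $O(|F'|+|Q|)$.

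Next I would count the actions in $O'$. The deterministic actions inherited from $O^+$ number at most $|O|$. The actions $a(Y,d)$ for $a\in O^-$ are indexed by the (at most $|V|$) variables $Y$ decremented by $a$ and by the depth $d\in[1,|V|]$, giving $O(|O|\cdot|V|^2)$; the $Push(X,d)$ and $Pop(X,d)$ actions contribute $O(|V|^2)$; and $Move$ is a single action. Each such action's precondition and effect lists are sets of literals over $F'$ together with the propositional part inherited from the corresponding action of $Q$, hence of size $O(|F'|+|Q|)=O(|Q|^2)$. Finally I would account for the binary encoding of the counter operations: an increment $c(d):=c(d)+1$ compiles either to $O(n)$ conditional effects or to $O(n)$ copies of the incrementing action, a reset $c(d):=0$ sets $O(n)$ bits to false, and a test $c(d)<Max$ or $c_T<Max$ becomes a single precondition on the high bit; so this encoding multiplies the action count by at most $O(n)$, leaving $|O'|=O(n\cdot|O|\cdot|V|^2)$ actions, each of polynomial size. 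Multiplying the number of actions by the per-action size and adding $|I'|+|G'|$ yields a total description size polynomial in $|Q|$, and enumerating these polynomially many, polynomially sized objects is a polynomial-time task.

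I expect the only genuine subtlety — the one worth stating carefully rather than waving through — to be the treatment of $Max$: since $Max=1+2^n$ is exponentially large as a number, a naive explicit or unary representation of the counters would blow up the translation, and the argument must lean on the fact that $Max$ and all counter values fit in $O(n)$ bits and that increment, reset, and comparison over those bits are each realizable with a polynomial number of STRIPS actions (or conditional effects), as described in the footnote to the construction. Everything else is routine counting.
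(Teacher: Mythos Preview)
Your proposal is correct and follows essentially the same approach as the paper's own proof: both set $n=|F|+|V|$, count fluents to get $O(n^2)$ propositional variables, count actions to get a polynomial bound (the paper obtains $O(|O|n^2+n^3)$ while your looser $O(n\cdot|O|\cdot|V|^2)$ is also fine), and both single out the binary encoding of the $1+2^n$-capacity counters as the only nontrivial point. Your write-up is slightly more thorough in also bounding the per-action description size and the sizes of $I'$ and $G'$, but the structure and the key observation match the paper's argument.
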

\begin{proof}
  Let $n=|F|+|V|$ be the number of variables, propositional or numerical, in $P=T(Q)$.
  $P$ has $1+n$ counters of capacity $1+2^n$, each one requiring $1+n$ bits: the counter
  $c(d)$ is encoded in binary with bits $c(d,i)$, $i\in[0,n]$.
  $P$ also has $n$ atoms of form $depth(d)$, $|V|=O(n)$ atoms of form $in(X)$, and
  $n|V|=O(n^2)$ atoms of form $index(X,d)$. Therefore, $P$ has $O(|F|+n^2)=O(n^2)$
  propositional variables.

  $P$ has $|V|^2=O(n^2)$ push actions.
  Since $Push(X,d)$ has precondition $c(d)<Max$ and effect $c(d):=c(d)+1$,
  it gets compiled into $n$ actions of the form $Push(X,d,i)$, $i\in[0,n-1]$,
  where precondition $c(d)<Max$ is expressed as $\{\neg c(d,i)\}\cup\{c(d,j):j\in[0,i-1]\}$,
  and effect $c(d):=c(d)+1$ is expressed as $\{c(d,i)\}\cup\{\neg c(d,j):j\in[0,i-1]\}$.
  The pop actions do not modify counters, so there are $O(n^2)$ of them.
  The $Move$ action increments the counter $c_T$ and then, like for $Push(X,d)$,
  it gets compiled into $n$ different actions.
  Actions $a$ in $Q$ that do not decrement variables are translated into
  actions $a$ in $P$. Actions $a$ that decrement a variable get translated
  into actions $a(X,d)$; there are $O(n^2)$ such actions $a(X,d)$ in $P$ for
  each such action $a$ in $Q$.

  In total, $P$ has $O(n^2)$ propositional variables and $O(|O|n^2 + n^3)$ actions,
  where the cubic term accounts for the $Push(X,d,i)$ actions.
  These numbers (polynomially) bound the size of $P$.
  It is clear that producing each action in $P$ is straightforward and can be done
  in polynomial time.
\end{proof}

The second direct property of the translation is that due to the use of the counters,
all strong cyclic  policies $\pi$ for $P$ must terminate:

\begin{theorem}
  \label{thm:terminateT}
  Let $Q$ be a QNP.
  Any strong cyclic policy $\pi$ for $P=T(Q)$ is $P$-terminating.
\end{theorem}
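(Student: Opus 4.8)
The plan is to fix an arbitrary strong cyclic policy $\pi$ for $P=T(Q)$ and an arbitrary \emph{infinite} $\pi$-trajectory $\bar\tau=\bar s_0,\bar s_1,\ldots$ of $P$, and to exhibit a numerical variable of $Q$ that is decremented but not incremented in the loop of $\bar\tau$; by Definition~\ref{def:td:terminatig-trajectories} this makes $\bar\tau$ terminating, and since $\bar\tau$ is arbitrary, $\pi$ is $P$-terminating. Write $L$ for the (finite, non‑empty) loop of $\bar\tau$, i.e.\ its set of recurrent states, and let $\bar\tau'$ be a suffix of $\bar\tau$ all of whose states lie in $L$. Three preliminary observations set the stage: (i)~the top counter $c_T$ is never reset and is bounded by $Max$, so $Move$ can fire only finitely often, hence $\pi(\bar s)\neq Move$ for every $\bar s\in L$; (ii)~in $P=T(Q)$ the only non‑deterministic actions are the $a(X,d)$ obtained from decrementing actions of $Q$ (the $O^+$ actions, $Push$, $Pop$ and $Move$ are all deterministic); and (iii)~an infinite $\pi$-trajectory visits no goal state, so $L$ contains no goal state.

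The core of the argument is bookkeeping on the stack. Let $d^*=\min\{\,|\alpha| : \alpha\text{ is the stack at some state of }L\,\}$, a value attained infinitely often along $\bar\tau'$. Along $\bar\tau'$ the stack depth never drops below $d^*$, and a $Push$ (resp.\ $Pop$) performed at a state of depth $\ge d^*$ changes only the stack entry at a position $>d^*$ — a $Pop$ from depth exactly $d^*$ cannot occur in $\bar\tau'$ since it would make the depth $d^*-1$. Hence the bottom $d^*$ entries of the stack form one \emph{fixed} sequence $\alpha^*$ at every state of $L$. Consequently $in(X)$ holds throughout $L$ for every $X$ occurring in $\alpha^*$, and since every action of $T(Q)$ that increments a variable $X$ carries the precondition $\neg in(X)$ (both the $O^+$ actions with an $Inc(X)$ effect and the $a(\cdot,\cdot)$ actions that increment $X$), no variable of $\alpha^*$ is ever incremented by $\pi$ at a state of $L$. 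It therefore suffices to show that \emph{some} variable of $\alpha^*$ is decremented by $\pi$ at some state of $L$.

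Suppose not. Then along $\bar\tau'$ the counter $c(d^*)$ is never reset: the only actions resetting it are $Push(\cdot,d^*-1)$, which is inapplicable because the depth stays $\ge d^*$, and the actions $a(Y,d'')$ with $d''\le d^*$, which at a state of $L$ would force $Y=\alpha^*[d'']\in\alpha^*$ and would decrement it, contrary to the supposition. Since $c(d^*)\le Max$ and is incremented by one with every $Push(\cdot,d^*)$, only finitely many $Push(\cdot,d^*)$ occur in $\bar\tau'$; and as $Push(\cdot,d^*)$ is the only action raising the depth from $d^*$ to $d^*{+}1$, eventually the depth equals $d^*$ forever and the stack equals $\alpha^*$ forever. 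From that point on, at a state of $L$ the policy can choose neither $Move$ (by (i)), nor any $Push$ or $Pop$ (each would change the depth), nor any $a(Y,d)$ (applicable only with $Y=\alpha^*[d]\in\alpha^*$, which it would then decrement); so it chooses only $O^+$ actions, which are deterministic. Thus $\bar\tau$ ends in a deterministic, goal‑free loop inside $L$: from any such state the unique $\pi$-trajectory never reaches a goal, although that state is reachable from an initial state — contradicting that $\pi$ is strong cyclic. This establishes the claim and hence the theorem.

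I expect the delicate part to be the middle two paragraphs — the stack/counter bookkeeping that pins down exactly which actions can modify each stack level and each counter $c(d)$ (this is precisely where the construction mirrors the order in which \sieve peels off variables), and the derivation that under the negated claim the stack and depth must stabilize. Once that is in place, the two conceptual keys — that $in(X)$ blocks every $Inc(X)$, so the frozen bottom of the stack can never be incremented, and that a decrement‑free tail is deterministic and therefore fatal for a strong cyclic policy — close the argument quickly.
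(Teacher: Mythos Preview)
Your proof is correct and rests on the same mechanism as the paper's: pick a minimal level, use the boundedness of the corresponding counter to show that some portion of the stack is frozen across the loop, and conclude that a variable sitting in that frozen portion is decremented but---because $in(X)$ blocks every $Inc(X)$---never incremented.

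The two proofs differ in how they instantiate ``minimal level'' and in where the strong-cyclicity contradiction is invoked. The paper first observes that some non-deterministic action $a(X,d)$ must occur among the recurrent states (implicitly using that a purely deterministic recurrent loop would already contradict strong cyclicity), picks such an action with \emph{minimal $d$}, and argues directly that $index(X,d)$ holds throughout the loop, so $X$ is the witness. You instead take $d^*$ to be the minimal \emph{stack depth} over the loop, freeze the bottom $d^*$ entries $\alpha^*$, and run a proof by contradiction: if no variable of $\alpha^*$ were decremented, then $c(d^*)$ is never reset, pushes at level $d^*$ die out, the stack stabilises, only deterministic $O^+$ actions survive, and you land in a goal-free deterministic cycle that violates strong cyclicity. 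Your route is slightly longer but more self-contained: it makes the deterministic-loop contradiction explicit rather than front-loading it, and your counter argument cleanly accounts for \emph{both} ways $c(d^*)$ could be reset (the $Push(\cdot,d^*{-}1)$ case is excluded by depth minimality, not just the $a(Y,d'')$ case), whereas the paper's sentence about resetting $c(d-1)$ mentions only the $a(Y,d')$ actions.
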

\begin{proof}
  Let $\pi$ be a strong cyclic policy for $P$ and let $\tau = \bar s_0, \ldots, [\bar s_i, \ldots, \bar s_m]^*$
  be an infinite $\pi$-trajectory. We need to show that there is some variable $X$ that is
  decreased in one of these states and increased in none.
  Clearly, $\pi(\bar s)$ for some $\bar s$ in the recurrent set must be a non-deterministic
  action, and this means it is an action of form $a(X,d)$.
  The actions $a(X,d)$ require $X$ to be in the stack and then resets all counters
  $c(d')$ for $d' \ge d$ back to $0$.

  Let us pick an action $a(X,d)$ in the recurrent states of $\tau$ to be one with smallest
  stack depth $d$, and let $\bar s$ be one of such states where $\pi(\bar s)=a(X,d)$.
  In the state $\bar s$, the variable $X$ is in the stack at level $d$; i.e., $index(X,d)$ is true.
  We show next that this same atom must be true in all the other recurrent states in $\tau$.
  Indeed, if there is a recurrent state where $index(X,d)$ is false, it means that there are
  recurrent states where $X$ is popped from the stack, and others where it is pushed back at
  level $d$, as $\bar s$ is a recurrent state where $index(X,d)$ holds.
  Yet, each occurrence of the action $Push(X,d-1)$ needed to make $index(X,d)$ true increases the counter
  $c(d-1)$ that no action $a(Y,d')$ can reset with $d'<d$, due our choice of the action $a(X,d)$ as
  one with minimum $d$. As a result, it has to be the case that $X$ is in the stack at level $d$ in
  all the recurrent states of $\tau$, and hence no action that increases $X$ is applied while in a
  recurrent state (since increments of $X$ are disabled when $X$ is in the stack).
  Then, since there is a recurrent state where $X$ is decremented, the infinite $\pi$-trajectory
  $\tau$ is terminating.
  Therefore, the policy $\pi$ is $P$-terminating.
\end{proof}

In order to prove soundness and completeness, we establish a correspondence between the strong cyclic
policies of $Q$ and the strong cyclic policies of $P=T(Q)$. The policies cannot be the same, however,
as the reduction $T$, unlike the direct translation $T_D$, adds extra variables and actions.
Indeed, $T$ preserves the atoms $p$ and $\EQ{X}$ from $Q$, the latter being propositional, but adds
boolean variables and actions that ensure that the policies over of $T(Q)$, unlike the policies over
$T_D(Q)$, terminate.

Let $Q_M$ be the QNP obtained from the FOND problem $P=T(Q)$ by 1)~adding the numerical variables $X$ from
$Q$, 2)~replacing the effects $\GT{X}$ by $Inc(X)$, and the non-deterministic effects $\GT{X}\,|\,\EQ{X}$
by $Dec(X)$, and 3)~interpreting the preconditions and goal of the form $\EQ{X}$ and $\GT{X}$ in terms of
such variables (i.e., non-propositionally).

\begin{theorem}
  If $\pi$ solves the FOND problem $P=T(Q)$, $\pi$ solves the QNP $Q_M$.
\end{theorem}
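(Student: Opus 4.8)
The plan is to derive the statement from the soundness-and-completeness theorem for the direct translation, Theorem~\ref{thm:td:main}, together with the termination guarantee of Theorem~\ref{thm:terminateT}. The first step is to observe that $Q_M$ has been defined precisely so that applying the direct translation $T_D$ (Definition~\ref{def:td}) to it recovers $P=T(Q)$. Going from $P$ to $Q_M$ reintroduces the numerical variables $X$ of $Q$, turns the propositional effects $\GT{X}$ back into $Inc(X)$ and the non-deterministic effects $\GT{X}\,|\,\EQ{X}$ back into $Dec(X)$, and reinterprets the literals $\EQ{X}$ and $\GT{X}$ over these numerical variables; applying $T_D$ then re-propositionalizes $\EQ{X}$, replaces each $Inc(X)$ by $\GT{X}$ and each $Dec(X)$ by $\GT{X}\,|\,\EQ{X}$, and leaves the stack atoms $in(X)$, $depth(d)$, $index(X,d)$, the counters $c(d)$ and $c_T$, and all the auxiliary actions ($Push$, $Pop$, $Move$, the $a(X,d)$, \ldots) exactly as they were in $P$. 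Hence $T_D(Q_M)=P$, modulo the inessential convention of writing $\EQ{X}$ for a propositional symbol. Along the way I would also record that $Q_M$ is a legitimate QNP in the sense of Definition~\ref{def:qnp:syntax} --- every action carrying a $Dec(X_i)$ effect inherits the precondition $\GT{X_i}$ from the corresponding $Q$-action --- and that any policy $\pi$ for the FOND problem $P$ is automatically a QNP policy for $Q_M$ in the sense of Definition~\ref{def:qnp:policy}, since the states of $P$ coincide with the boolean states of $Q_M$, so the constraint $\pi(s)=\pi(s')$ whenever $\bar s=\bar s'$ holds trivially.

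With $T_D(Q_M)=P$ in hand, the argument is immediate. By hypothesis $\pi$ solves $P=T(Q)$, i.e.\ $\pi$ is a strong cyclic solution of $P$. By Theorem~\ref{thm:terminateT}, every strong cyclic policy for $P=T(Q)$ is $P$-terminating, so $\pi$ is $P$-terminating. Rewriting $P$ as $T_D(Q_M)$, this says exactly that $\pi$ is a strong cyclic solution of $T_D(Q_M)$ and that $\pi$ is $T_D(Q_M)$-terminating; applying the equivalence of items~1 and~3 in Theorem~\ref{thm:td:main}, now read with $Q_M$ in the role of the QNP, yields that $\pi$ solves $Q_M$.

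The only place where there is genuinely something to verify is the identity $T_D(Q_M)=P$: one must check that the extra propositional machinery that $T(Q)$ attaches to the increment actions (the precondition $\neg in(X)$), the guards $c(d)<Max$ and $c_T<Max$, the counter-update effects, and the stack atoms all survive unchanged both when passing from $P$ to $Q_M$ and when passing back from $Q_M$ to $T_D(Q_M)$, so that no transition is created or destroyed. I expect this to be routine but slightly tedious bookkeeping. One small point I would flag explicitly is that the non-deterministic effect of $a(X,d)$ written as $\GT{X_i}\,|\,\GT{X_i}$ in the construction of $T$ must be read as $\GT{X_i}\,|\,\EQ{X_i}$; this is exactly the shape that $Q_M$'s $Dec(X_i)$ effects, and then $T_D$, reproduce. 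Once the identity is established, all of the decidability-flavored content of the statement is carried by Theorems~\ref{thm:terminateT} and~\ref{thm:td:main}, and no new analysis of $\epsilon$-trajectories is needed --- that work is deferred to the converse direction, which is where the real difficulty of the reduction lies.
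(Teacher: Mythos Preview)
Your proposal is correct and follows exactly the paper's argument: observe that $P=T_D(Q_M)$, invoke Theorem~\ref{thm:terminateT} to get $P$-termination, and then apply Theorem~\ref{thm:td:main} (equivalence of items~1 and~3) to conclude that $\pi$ solves $Q_M$. The paper compresses this into three sentences, while you spell out the bookkeeping behind the identity $T_D(Q_M)=P$ and flag the evident typo $\GT{X_i}\,|\,\GT{X_i}$ for $\GT{X_i}\,|\,\EQ{X_i}$, but the logical skeleton is identical.
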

\begin{proof}
  $P$ is the direct translation of $Q_M$, i.e.\ $P=T_D(Q_M)$.
  $\pi$ is $P$-terminating by Theorem~\ref{thm:terminateT} and strong cyclic for $P$ as it solves it.
  Therefore, by Theorem~\ref{thm:td:main}, $\pi$ solves $Q_M$.
\end{proof}

The QNP $Q_M$ can be thought of as the composition of the original QNP $Q$ and with a
deterministic model that encodes the state of the stack and counters. From this perspective,
the policy $\pi$ that solves $Q_M$ stands for a \emph{controller} $\pi_M$ for $Q$  that
has an internal memory $M$ comprised of the atoms that encode the stack and counters: actions
like $Push(X,d)$, $Pop(X,d)$ and $Move$ only affect the internal memory $M$ of the controller
$\pi_M$, actions $a$ that do not decrement any variable in $Q$, only affect the state of $Q$,
while actions $a(X,d)$ affect both the state of $Q$ and the internal memory $M$.
Due to the correspondence between the application of policy $\pi$ to the QNP $Q_M$ and the
application of the controller with memory $\pi_M$ to the QNP $Q$, it is then direct that:

\begin{theorem}[Soundness]
  \label{thm:policy-memory}
  If policy $\pi$ solves the FOND problem $P=T(Q)$, the controller $\pi_M$ solves $Q$.
\end{theorem}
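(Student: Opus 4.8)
The plan is to deduce the statement from the immediately preceding theorem, which (through Theorems~\ref{thm:terminateT} and~\ref{thm:td:main}) already establishes that $\pi$ solves the QNP $Q_M$. What remains is to make precise the informal reading of $Q_M$ as the synchronous product of $Q$ with a \emph{deterministic} memory automaton $M$ whose states record the contents of the stack and of the counters, and to transport ``solves'' across that product.

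First I would fix the controller $\pi_M$ explicitly. A configuration is a pair $(s,m)$ where $s$ is a state of $Q$ and $m$ is a truth valuation of the memory atoms $c_T$, $depth(d)$, $c(d)$, $in(X)$, and $index(X,d)$. Given $(s,m)$, the controller consults $\pi$ at the boolean state of $Q_M$ determined by $(s,m)$ and applies that action: memory-only actions ($Push$, $Pop$, $Move$) leave $s$ fixed and update $m$ deterministically; actions $a\in O^+$ leave $m$ fixed and act on $s$ exactly as the action $a$ of $Q$; and actions $a(X,d)$ act on $s$ as the underlying action $a$ of $Q$ (decrementing $X$ and the other variables $a$ decrements, incrementing the variables $a$ increments, and setting the propositional part) while on $m$ they reset the counters $c(d')$ for $d'\ge d$. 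A trajectory of $\pi_M$ on $Q$ is the induced sequence of configurations and actions, and ``$\pi_M$ solves $Q$'' means that for every $\epsilon>0$ every maximal such trajectory reaches a configuration whose $Q$-part satisfies $G$.

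The crux is a bijection between the reachable configurations $(s,m)$ and the reachable states of $Q_M$: send $(s,m)$ to the state of $Q_M$ that agrees with $s$ on $F$, has $s[X]$ as the value of each $X$ (so $\EQ{X}$ holds there iff $s[X]=0$), and agrees with $m$ on the memory atoms. This bijection carries each step of $\pi_M$ on $Q$ to exactly one step of $\pi$ on $Q_M$, and conversely: applicability transfers because $a(X,d)$ inherits the preconditions of $a$ --- in particular the precondition $\GT{X}$, which $a$ carries in $Q$ by the QNP syntax requirement and which is read numerically in $Q_M$ --- together with memory preconditions that $m$ satisfies exactly when the corresponding boolean atoms hold in $Q_M$; and the numeric effects on each $X$ are literally the same $Inc$/$Dec$ operations, while memory-only actions change no $X$ at all. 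Hence the sequence of numeric values taken by the variables along a $\pi_M$-trajectory on $Q$ is identical to that along its image under $\pi$ on $Q_M$, so the two are $\epsilon$-trajectories for the \emph{same} $\epsilon$; maximality is preserved (the controller is stuck precisely when $\pi$ is, and since $G'=G$ mentions only atoms of $Q$, the first configuration whose $Q$-part satisfies $G$ maps to the first $Q_M$-goal state); and goal-reaching is preserved.

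Putting this together: take any $\epsilon>0$ and any maximal $\epsilon$-trajectory of $\pi_M$ on $Q$. Its image is a maximal $\epsilon$-trajectory of $\pi$ on $Q_M$, which reaches a $Q_M$-goal because $\pi$ solves $Q_M$; pulling back, the original trajectory reaches a configuration whose $Q$-part satisfies $G$. Therefore $\pi_M$ solves $Q$. I expect the only delicate point to be the verification that the product of $Q$ with the memory automaton genuinely equals $Q_M$ on reachable states --- that every reachable $Q_M$-state lies in the image of the configuration map and that the deterministic memory updates spelled out in the action definitions really implement the intended stack-and-counter bookkeeping --- but this is routine once the definitions are unwound, and the substantive content, namely that all strong cyclic policies of $T(Q)$ are terminating, has already been discharged in Theorem~\ref{thm:terminateT}.
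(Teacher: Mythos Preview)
Your proposal is correct and follows essentially the same approach as the paper: both invoke the preceding theorem to conclude that $\pi$ solves $Q_M$, then exploit the fact that $Q_M$ is the synchronous product of $Q$ with a deterministic memory automaton so that $\pi$-trajectories in $Q_M$ correspond exactly to $\pi_M$-trajectories over $Q$, with the same goals and the same $Inc$/$Dec$ effects on the numerical variables. The paper's proof is two sentences and leaves the product/projection correspondence implicit; your version spells out the bijection, the transfer of applicability and $\epsilon$-boundedness, and the preservation of maximality and goal-reaching, which is a welcome elaboration but not a different argument.
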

\begin{proof}
  Each execution of $\pi$ in $Q_M$ generates a trajectory over $M$ and one over $Q$.
  Since $\pi$ solves $Q_M$, the latter must be terminating and goal reaching, but then
  they must be terminating and goal reaching in $Q$ that shares the same goal as $Q_M$
  and the same $Dec(X)$ and $Inc(X)$ actions.
\end{proof}

The inverse direction of this theorem is also true, but it does not give
us a completeness result. For that, we  need to show  that a policy $\pi$
that solves $Q$ determines  a policy $\pi'$ that solves $P=T(Q)$.

\subsection{Completeness}

We now assume that there is a policy $\pi$ that solves  $Q$ and want
to show that  there is a policy $\bar\pi$ that solves  $P=T(Q)$.
Since $\pi$ solves $Q$, $\pi$ is $Q$-terminating and also $P'$-terminating
where $P'=T_D(Q)$ is the direct translation of $Q$ (cf.\ Theorem~\ref{thm:td:main}).

Let $\G$ be the policy graph associated with $\pi$ in $P'$. By Theorem~\ref{thm:sieve},
\sieve reduces $\G$ to an acyclic graph. For the rest of this section, we assume that
\emph{\sieve is run until all edges that are associated with actions that decrement variables are eliminated}
rather than stopping as soon as the graph becomes acyclic.\footnote{Clearly, the modification
  on the stopping condition for \sieve does not affect its correctness since an acyclic
  graph remains acyclic when one or more edges are removed, and, on the other hand, if the
  original \sieve cannot reduced a component, the modified algorithm is not
  able to reduce it either.}
As a result, since $\pi$ solves $Q$, the resulting acyclic graph has no edge associated with a decrement of a variable.
Each edge removed by \sieve can be identified with a variable, and edges are removed
in batches by \sieve, each such $batch(C)$ associated with a component $C$ and a variable $X$
chosen by \sieve; i.e., in a given iteration, \sieve chooses a component $C$ and a
variable $X$, and removes all edges $(\bar s,\bar s')$ from $C$ such that $\pi(\bar s)$
is a $Dec(X)$ action (cf.\ Figure~\ref{alg:sieve2}).

Let us index the top SCCs processed by \sieve in topological order (i.e., if $C_i$ reaches
$C_j$ for $j\neq i$, then $i<j$), and let $scc(\bar s)$ be the index of the (top) component
that includes $\bar s$ (i.e., the index of the component that includes $\bar s$ in the
graph $\G$).
\sieve decomposes each component $C$ into a collection of \emph{nested} SCCs
that result of recursively removing edges from $C$. For each state $\bar s$,
let $\C_{\bar s}=\{C_{\bar s}^j\}_{j\geq1}$ be the collection of nested SCCs
that contain $\bar s$; i.e.,
\begin{enumerate}[$\bullet$]
  \item $C_{\bar s}^1=C_{k_1}$ where $k_1=scc(\bar s)$ is the index of the (top) component that contains $\bar s$, and
  \item for $j\geq1$, $C_{\bar s}^{j+1}=C_{k_{j+1}}$ where $\bar s$ is in the component $C_{k_{j+1}}$ of the graph that results when all edges in
    $\bigcup\{ batch(C_{k_i}) : i\in[1,j]\}$ have been removed by \sieve.
\end{enumerate}

For each state $\bar s$, let $stack(\bar s)$ be the sequence of variables
chosen by \sieve for each component in $\C_{\bar s}$. Observe that such
sequence contains no repetitions since once \sieve chooses variable $X$
for a component $C$, the same $X$ cannot be chosen later for another
component $C'$ contained in $C$. Also, if the action $\pi(\bar s)$ is
a $Dec(X)$ action for variable $X$, then $stack(\bar s)$ contains $X$
by the assumption that \sieve is run until all edges associated with
decrements of variables are eliminated.

We compare the stack $\alpha$ and $stack(\bar s)$, and say that $\alpha=X_1 \cdots X_n$ is
a \emph{prefix} of $stack(\bar s)=Z_1 \cdots Z_m$ if the latter can be obtained from the
former by pushing variables only; i.e., if $n\leq m$ and $X_i = Z_i$ for $i\in[1,n]$.
A property of the \sieve algorithm that we exploit in the completeness proof is the following:

\begin{theorem}
  \label{thm:scc:stack}
  Let $Q$ be a QNP and let $P'=T_D(Q)$ be its direct translation.
  If $\pi$ solves $Q$ and $\bar\tau=\bar s_0,\ldots,[\bar s_i,\ldots,\bar s_m]^*$ is an infinite
  $\pi$-trajectory in $P'$, there is a variable $X$ and a recurrent state $\bar s$ such that
  $\pi(\bar s)$ is a $Dec(X)$ action, and $X$ is in $stack(\bar s')$ for every recurrent
  state $\bar s'$ in $\bar\tau$.
\end{theorem}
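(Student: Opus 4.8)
The plan is to read off the required variable $X$ and recurrent state $\bar s$ directly from the way \sieve processes the policy graph $\G=\G(P',\pi)$. Since $\pi$ solves $Q$, Theorem~\ref{thm:td:main} gives that $\pi$ is $Q$-terminating, hence $P'$-terminating by Theorem~\ref{thm:termination2}, and so by Theorem~\ref{thm:sieve} the run of \sieve reduces $\G$ to an acyclic graph (here we use the modified stopping condition, under which \sieve keeps going until no decrement edge can be removed, which only removes more edges and preserves the reduction). Two structural facts about \sieve will do the work: it only ever deletes edges associated with $Dec$ actions, and it deletes them in batches, each batch $batch(C,X)$ removing the $Dec(X)$-edges internal to a single SCC $C$, chosen together with a variable $X$ that is decremented but not incremented in $C$.

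First I would isolate the recurrent part of $\bar\tau$. Let $L=\{\bar s_i,\ldots,\bar s_m\}$ be the set of recurrent states of $\bar\tau$ and let $E_L$ be the set of transitions of $\bar\tau$ that occur infinitely often. Past the last occurrence of every non-recurrent state and of every transition outside $E_L$, some tail of $\bar\tau$ is an infinite walk that uses only states in $L$ and only edges in $E_L$; taking, for any $\bar s,\bar s'\in L$, a late occurrence of $\bar s$ followed by the next occurrence of $\bar s'$ shows that the subgraph $(L,E_L)$ is strongly connected, so in particular it contains a cycle. The main step is then: since \sieve reduces $\G$ to acyclic, it must at some point delete an edge of $E_L$; let $batch(C,X)$ be the \emph{first} such batch and let $(\bar s,\bar s')\in E_L$ be the deleted edge. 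Before this batch no edge of $E_L$ has been removed, so $(L,E_L)$ is still strongly connected in \sieve's current graph, and $L$ lies in one SCC $C_L$ of it; since \sieve deletes $(\bar s,\bar s')$ only when both endpoints lie in $C$, and $C$ and $C_L$ are both SCCs of the current graph sharing the vertex $\bar s$, they coincide, i.e.\ $L\subseteq C$. Moreover \sieve deletes $(\bar s,\bar s')$ only when $\pi(\bar s)$ is a $Dec(X)$ action, so $\bar s\in L$ is the required recurrent state and $X$ the required variable.

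It remains to transfer this to the stacks. Fix any recurrent $\bar s'\in L$. Since $L\subseteq C$ we have $\bar s'\in C$, and $C$ is precisely one of the components \sieve refines while passing through $\bar s'$; hence $C$ appears in the nested collection $\C_{\bar s'}$, and since \sieve's chosen variable for $C$ is $X$, we get $X\in stack(\bar s')$. This holds for every recurrent state, including $\bar s$ itself, which is exactly the statement.

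The step I expect to be most delicate is the identification of $C$ in the second paragraph: one must argue carefully both that the SCC \sieve happens to be processing when it first touches $E_L$ is precisely the SCC containing all of $L$ at that moment, and that this same $C$ is one of the components occurring in $\C_{\bar s'}$ rather than a descendant reached only after further refinements. Both follow from the observation that \sieve's batches on pairwise disjoint components do not interfere, so for a fixed state the sequence of SCCs containing it is well defined and $C$ is a member of it; but this non-interference has to be spelled out to make the argument airtight.
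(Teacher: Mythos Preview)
Your proposal is correct and follows essentially the same line as the paper's proof: both use $P'$-termination to guarantee that \sieve succeeds, locate the first moment at which \sieve touches the recurrent set, and read off $X$ and $\bar s$ from the batch processed at that moment. Your version is more explicit than the paper's---you spell out that $(L,E_L)$ is a strongly connected subgraph, pick the first batch removing an edge of $E_L$ rather than the (equivalent but vaguer) ``point at which the recurrent states split,'' and you correctly flag the non-interference of \sieve on disjoint components as the reason $C$ appears in each $\C_{\bar s'}$, a point the paper leaves implicit.
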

\begin{proof}
  If $\pi$ solves $Q$, $\pi$ must be $P'$-terminating. Thus, there must be a variable
  that is decreased by $\pi$ in some recurrent state, and increased by $\pi$ in no recurrent
  state. At the same time, \sieve is complete and must remove variables (edges) in the policy
  graph until it becomes acyclic (cf.\ Theorem~\ref{thm:sieve}).
  Initially, all the recurrent states in $\bar\tau$ are in the same component but at one
  point \sieve removes a variable $X$ and splits the set of recurrent states into different
  and smaller components.
  From its definition, this means that $X$ appears in $stack(\bar s)$ for each
  recurrent state $\bar s$ in $\bar\tau$.
  Moreover, since the removal of $X$ leaves these states into two or more components,
  $\pi(\bar s)$ for one such state must be a $Dec(X)$ action.
\end{proof}

We now define the policy $\pi^*$ for $P=T(Q)$ that is determined by the policy $\pi$ that solves $Q$.
In the definition, we use the two functions $scc(\bar s)$ and $stack(\bar s)$ defined above
in terms of the execution of \sieve on the policy graph for $\pi$ on $T_D(Q)$.
The states over $P$ are denoted by triplets $\tup{\bar s,c,\alpha}$ where $\bar s$ is the state
in $T_D(Q)$, $c$ stands for the state of the counters $c(d)$, $d\in[0,|V|]$, and $c_T$,
and $\alpha$ stands for the state of the stack (given by the atoms $depth(d)$, $in(X)$, $index(X,d)$).
The policy $\pi^*$ for $P$ is defined at triplet $\tup{\bar s,c,\alpha}$ by
\begin{equation}
\label{def:pi}
\begin{cases}
   Pop(X,d) & \text{if $c_T < scc(\bar s)$, $X$ is top variable in $\alpha$, and $d=|\alpha|$, else} \\
      Move  & \text{if $c_T < scc(\bar s)$ and empty stack, else } \\
   Pop(X,d) & \text{if $X$ is top variable in $\alpha$, $d=|\alpha|$, and $\alpha$ is not a prefix of $stack(\bar s)$, else} \\
  Push(X,d) & \text{if $\alpha X$ is a prefix of $stack(\bar s)$ and $d=|\alpha|$, else} \\
          a & \text{if $\pi(\bar s)=a$ decrements no variable, else} \\
     a(X,d) & \text{if $\pi(\bar s)=a$ decrements $X$ at depth $d$ in $\alpha$ but no other var at depth $d' < d$.} \\
\end{cases}
\end{equation}

A first observation is that the policy $\pi^*$ is defined on every triplet $\tup{\bar s,c,\alpha}$
such that $\pi(\bar s)$ is defined.
The policy $\pi^*$ for the FOND problem $P=T(Q)$ on a triplet $\tup{\bar s,c,\alpha}$
advances the $c_T$ counter until it becomes equal to the index $scc(\bar s)$ of the SCC
in $\G$ that contains the node $\bar s$.
It then performs pops and pushes until $\alpha$ becomes equal to $stack(\bar s)$, and
finally applies the action $a$ selected by the policy $\pi$ on $Q$ using the action names
$a$ or $a(X,d)$ according to whether $a$ decrements no variable or decrements a variable,
which must be in $stack(\bar s)$ as discussed above. In the latter case, the variable $X$
for the action $a(X,d)$ is the variable $X$ decremented by $a$ that is
\emph{deepest in the stack}, at depth $d$.
The completeness result can then be expressed as follows:

\begin{theorem}[Completeness]
  \label{thm:qnp->fond:completeness}
  If $\pi$ solves the QNP  $Q$, then the policy $\pi^*$ defined by \eqref{def:pi} solves the FOND problem $P=T(Q)$.
\end{theorem}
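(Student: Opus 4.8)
The plan is to show that every maximal $\pi^*$-trajectory in $P=T(Q)$ reaches a goal state, and since $P$ is the direct translation $T_D(Q_M)$ of a QNP, it suffices (by Theorem~\ref{thm:td:main}) to show that $\pi^*$ is strong cyclic for $P$ and $P$-terminating; the latter will actually follow from Theorem~\ref{thm:terminateT} once we know $\pi^*$ is strong cyclic, so the real work is establishing strong cyclicity. First I would set up the bookkeeping: a state of $P$ is a triplet $\tup{\bar s,c,\alpha}$, and I would observe that from any such state the ``administrative'' clauses of \eqref{def:pi} (the $Move$, $Pop$, $Push$ cases) fire only finitely many times before the genuine QNP action $a$ or $a(X,d)$ is applied. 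This is because (i) $Move$ increments $c_T$ and is applied only while $c_T<scc(\bar s)$, and $\bar s$ does not change under $Move$/$Pop$/$Push$; (ii) once $c_T=scc(\bar s)$, the $Pop$ clause pops variables until $\alpha$ is a prefix of $stack(\bar s)$, which terminates since each pop strictly decreases $|\alpha|$; (iii) then the $Push$ clause pushes the variables of $stack(\bar s)$ one at a time until $\alpha=stack(\bar s)$, which terminates since $stack(\bar s)$ has bounded length and no repetitions. Crucially I must check the counter guards $c(d)<Max$ and $c_T<Max$ never block these administrative actions — this is exactly why $Max=1+2^n$ was chosen, and I would argue that along any $\pi^*$-trajectory between two ``QNP-action'' steps the counters involved are incremented at most once per level, and more globally that the value of $scc(\bar s)$ is bounded by the number of SCCs, which is at most $2^n$, so $c_T$ never needs to exceed $Max$.

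Next I would establish the key invariant that drives the whole argument: \emph{along any $\pi^*$-trajectory, whenever a genuine QNP-action step is taken at $\tup{\bar s,c,\alpha}$, we have $c_T=scc(\bar s)$ and $\alpha=stack(\bar s)$}, and moreover the induced trajectory on the $\bar s$-components projects to a legal $\pi$-trajectory in $P'=T_D(Q)$. Given this, I would prove strong cyclicity by the characterization in the excerpt: for every $\pi^*$-trajectory from the initial state to a state $t=\tup{\bar s,c,\alpha}$, I need a $\pi^*$-trajectory from $t$ to a goal state. Since $\pi$ is a strong cyclic solution of $Q$ (because it solves $Q$; see Theorem~\ref{thm:td:main} clause 2, together with Theorem~\ref{thm:strong-cyclic}), there is a $\pi$-trajectory from $\bar s$ to a goal state $\bar s_G$ in $P'$. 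I would then lift this $P'$-path step by step into a $P$-path under $\pi^*$: each $\pi$-step from $\bar s_i$ to $\bar s_{i+1}$ is simulated by first running the administrative actions (which, by the finiteness argument above, always reach the configuration $c_T=scc(\bar s_i)$, $\alpha=stack(\bar s_i)$) and then applying $a$ or $a(X,d)$. For the lift to be well-defined I must verify that when $\pi(\bar s_i)$ decrements some variable, at least one such variable is in $stack(\bar s_i)$ — this is exactly the content I would extract from Theorem~\ref{thm:scc:stack} and the remark preceding \eqref{def:pi} (since \sieve is run until all decrement edges are removed, any $Dec(X)$ action at $\bar s$ has $X\in stack(\bar s)$). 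Reaching $\bar s_G$ in $P'$, one final administrative phase empties the stack and a goal-achieving action (an $ExitG$-style action is not present here, but $G'=G$ and $\bar s_G$ already satisfies $G$ on the propositional atoms; the numerical goals $\EQ{X}$ are propositional in $P=T(Q)$) yields a goal state of $P$.

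The main obstacle, I expect, is the counter-overflow analysis: showing that the bound $Max=1+2^n$ genuinely suffices so that the guards $c(d)<Max$ and $c_T<Max$ never prevent $\pi^*$ from doing what \eqref{def:pi} tells it to do. Along an arbitrary (possibly long and cycling) $\pi^*$-trajectory, the counter $c(d-1)$ is incremented each time a variable is pushed at level $d$, and is reset only by an action $a(Y,d')$ with $d'\le d-1$; one must argue that between two such resets the number of pushes at level $d$ is bounded, using the topological ordering of the SCCs and the fact that $stack(\bar s)$ is built according to \sieve's nested-SCC decomposition, so the ``block structure'' the counters track has depth and width bounded by the number of SCCs, hence by $2^n$. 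I would isolate this as a separate lemma about $\pi^*$-reachable configurations. Once the overflow lemma and the stack-membership fact (Theorem~\ref{thm:scc:stack}) are in hand, the rest is a routine but careful simulation argument, and $P$-termination comes for free from Theorem~\ref{thm:terminateT}, so $\pi^*$ solves $P=T(Q)$ by Theorem~\ref{thm:td:main}.
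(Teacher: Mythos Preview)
Your plan is essentially the paper's plan: reduce to showing $\pi^*$ is strong cyclic for $P$ by (i) arguing the administrative actions $Move$, $Pop$, $Push$ never loop and always lead to a configuration $c_T=scc(\bar s)$, $\alpha=stack(\bar s)$, after which the genuine action $a$ or $a(X,d)$ fires; (ii) checking applicability of that action (including $\neg in(Y)$ for incremented $Y$, which you do not mention but the paper handles via the observation that $Y$ incremented at $\bar s$ implies $Y\notin stack(\bar s)$ by \sieve's choice rule); and (iii) bounding the counters so the guards $c(d)<Max$ and $c_T<Max$ never block. The $\pi$-trajectory lifting and the appeal to Theorem~\ref{thm:scc:stack} are the same. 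Two small framing points: invoking Theorem~\ref{thm:td:main} and $P$-termination is superfluous, since $P$ is a FOND problem and ``solves'' already means strong cyclic; and no final administrative phase is needed at the goal, since $G'=G$ does not mention stack or counter atoms.

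The one place where your sketch is thinner than the paper is the overflow bound for $c(d)$. Your intuition (``the block structure has width bounded by the number of SCCs, hence by $2^n$'') is the right target but not yet an argument: you must bound the number of $Push(\Box,d)$ actions between two resets of $c(d)$, and resets come only from $a(\Box,d')$ with $d'\le d$ or $Push(\Box,d{-}1)$. The paper cashes this out as a pigeonhole-plus-contradiction: if a fragment contains $1{+}2^n$ such pushes with no intervening reset, then between consecutive pushes a genuine action fires, so some $\bar s$ repeats among those firing points; that yields a $\pi$-loop in $T_D(Q)$, and Theorem~\ref{thm:scc:stack} produces a variable $X$ that lies in every $stack(\bar s')$ along the loop and is decremented somewhere, forcing an $a(X,d')$ with $d'\le d$ in the fragment---a contradiction. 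Your ``separate lemma about $\pi^*$-reachable configurations'' should be exactly this pigeonhole argument; the vaguer SCC-counting picture will not by itself connect the number of pushes to the placement of resets.
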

\begin{proof}
  From Theorem~\ref{thm:td:main}, $\pi$ solves $Q$ iff $\pi$ solves and terminates in $P'=T_D(Q)$.
  We will show that if $\pi$ solves and terminates in $P'$, then $\pi^*$ must solve $P=T(Q)$.
  Therefore, by forward reasoning, given that $\pi$ solves $Q$, then $\pi$ solves and terminates
  in $P'$ from which we obtain that $\pi^*$ solves $P$.

  We need to show that the policy $\pi^*$ is executable in $P$, and more precisely that
  1)~$\pi^*$ cannot generate non-goal states $\tup{\bar s,c,\alpha}$ where $\pi^*$ is not
  defined or defined but non applicable, and
  2)~$\pi^*$ cannot get trapped in a loop that only involves the extra actions $Move$, $Pop(X,d)$ and $Push(X,d)$.
  These two properties ensure that in any $\pi^*$-trajectory over $P$, if a non-goal state
  $\tup{\bar s,c,\alpha}$ is reached, an action $a$ or $a(X,d)$ will be the one
  changing the component $\bar s$ of the state when $\pi(\bar s)=a$, and that this will happen
  after a bounded number of applications of the extra actions $Move$, $Pop(X,d)$ and $Push(X,d)$
  that do not change $\bar s$.
  Since the effect of the actions $a$ or $a(X,d)$ on $\bar s$ in $P$ is the same as
  the effect of $a$ on $s$ in $P'$, it follows that $\pi^*$ will be strong cyclic for $P$
  if $\pi$ is strong cyclic for $P'$.
  Alternatively, 1) and 2) ensure that if $\pi^*$ is executable in $P$, it generate
  trajectories over the $\bar s$ components that are the same as those obtained by the
  policy $\pi$ over $P'$ except for a bounded number of steps where the $\bar s$ component
  in the states $\tup{\bar s,c,\alpha}$ does not change.

  \medskip
  Point 2) is direct.
  $Move$ increases the counter $c_T$ that no other action decreases.
  Pushes and pops are applied in order, either to flush out the stack when
  $c_T < scc(\bar s)$, or to make $\alpha=stack(\bar s)$.
  In the latter case, $\alpha$ is popped until it becomes a prefix of $stack(\bar s)$
  (flushed out in the extreme case), and then pushes take place to make $\alpha$
  equal to $stack(\bar s)$.  Hence, no loops that only involve $Move$, $Pop(X,d)$
  and $Push(X,d)$ actions are possible.

  \medskip
  Point 1) is more subtle.
  The policy $\pi^*$ is defined on all triplets $\tup{\bar s,c,\alpha}$ for which
  $\bar s$ is reachable by $\pi$. We first argue, that except for the preconditions
  on the counters $c(d)$ and $c_T$, the rest of the preconditions are true for the
  actions selected by $\pi^*$.
  Observe that every triplet $\tup{\bar s,c,\alpha}$ reached by $\pi^*$ is such that
  $\bar s$ is reachable by $\pi$, and thus $\pi(\bar s)$ is defined and applicable in
  $\bar s$.
  Second, for an action selected by $\pi^*$, its easy to see, except for $\neg in(Y)$
  when the action is $a$ or $a(X,d)$ and it increments $Y$, that its preconditions hold.
  To see that $\neg in(Y)$ also holds, observe that if the actions increments $Y$,
  then $stack(\bar s)$ cannot contain $Y$; if so, the collection $\C_{\bar s}$ of nested
  components for $\bar s$ has a component $C$ that contains a state where $Y$ is decremented
  while being incremented in $\bar s$, thus making $Y$ ineligible by \sieve.

  The actions that have preconditions on counters are of type $Push(\Box,d)$ with
  precondition $c(d)<Max$, and $Move$ with precondition $c_T<Max$.
  Here, $\Box$ is a placeholder that denotes any variable $X$ in $V$.
  For the top counter, $c_T < Max$ always hold since $c_T$ starts at 0, it is
  only increased to make it equal to $scc(\bar s)$, and the number of components in
  $\G$ is less than or equal the number of subsets of states which is less than $Max$.
  We are thus left to show $c(d) < Max$ by considering the only type of actions
  that increase $c(d)$: $Push(\Box,d)$.

  For this, we show that $\pi^*$ cannot generate a trajectory $\tilde\tau$ in $P$
  that contains a fragment $\tilde\tau'$ with $1 + 2^n$ (i.e.\ $Max$) actions of the
  form $Push(\Box,d)$ while no action of the form $a(\Box,d')$, $d' \leq d$,
  or $Push(\Box,d-1)$ as this would be the only way in which $c(d)$ may grow up to
  $1 + 2^n$: actions of the form $Push(\Box,d)$ increase $c(d)$ by 1, and the only
  actions that decreases $c(d)$, back to $0$, have the form $a(\Box,d')$ for $d'\leq d$,
  or $Push(\Box,d-1)$.

  Indeed, let $\tilde\tau'=\tup{\bar s_1,c_1,\alpha_1}, \tup{\bar s_2,c_2,\alpha_2}, \ldots$
  be such a fragment, and let $1=i_1<i_2<\cdots<i_m$, for $m=1+2^n$, be the indices for
  the triplets in $\tilde\tau'$ on which the policy $\pi^*$ selects an action of type
  $Push(\Box,d)$.
  Observe that between each pair of such indices, there must be one triplet where an action
  of type $a$ or $a(\Box,d')$ is applied: two pushes at the same stack depth must be mediated
  by at least one such action.

  Let $i^*_1<i^*_2<\cdots$ be the indices such that $i^*_k$ is the first index after $i_k$
  where the action selected by $\pi^*$ is of type $a$ or $a(\Box,d')$, $k\in[1,m]$.
  Since the total number of states is less than or equal to $m$, there is some $\bar s$
  that repeats. Without loss of generality, let us assume that $\bar s_{i^*_1}=\bar s_{i^*_m}$.
  The policy $\pi$ loops in $P'$ on a set $\mathcal{R}$ of recurrent
  states that includes $\{\bar s_{i^*_k} : k\in[1,m] \}$.
  By Theorem~\ref{thm:scc:stack}, there is a variable $X$ that is decremented by $\pi$
  while looping in $\mathcal{R}$ such that $X$ belongs to each $stack(\bar s_{i^*_k})$, $k\in[1,m]$.
  We choose $X$ to be such variable appearing deepest in the stacks.
  Therefore, there is index $k\geq 1$ such that $\pi^*(\tup{\bar s_k,c_k,\alpha_k})=a(X,d')$
  where $d'$ is the depth of $X$ in $\alpha_k$.
  Since $X$ also belongs to $\alpha_1$, it must be the case $d'\leq |\alpha_1|=d$,
  the latter inequality since $\pi^*(\tup{\bar s_{i_1},c_{i_1},\alpha_{i_1}})$ is of
  type $Push(\Box,d)$.
  This is a contradiction with the assumption that $\tilde\tau'$ contains no action
  of type $a(\Box,d')$ for $d'\leq d$.
  \Omit{
    We show that $\pi^*$-trajectories with such fragments $\bar\tau'$ are not possible
    by considering three cases. In each case, $\bar\tau'$ contains $1+2^n$ actions of
    type $Push(X,d)$ or $a(d)$ but no action of type $a(X,d')$, $d'\leq d$, or $Push(X,d-1)$.
    Actions of type $Push(X,d)$ or $a(d)$ are referred to as ``offending'' actions.
    The cases are:
    \begin{enumerate}[1.]
      \item $Push(X,d)$ and $a(d)$ actions in $\bar\tau'$ are executed in triplets
        that have the \emph{same} state $\bar s$.
      \item $Push(X,d)$ and $a(d)$ actions in $\bar\tau'$ are executed in triplets
        where all the states $\bar s$ are \emph{different}.
      \item The rest; i.e., $\bar\tau'$ contains at least two different states $\bar s$
        and $\bar s'$ where actions of the form $Push(X,d)$ and $a(d)$ are executed, and
        $\bar s$ repeats in $\bar\tau'$.
    \end{enumerate}

    \textbf{First case.} Observe that the maximum number of $Push(X,d)$ actions that may
    be executed by $\pi^*$ before an action of type $a$, $a(d)$, or $a(X,d)$ executes
    is bounded by $|V|$, which is the case when the stack needs to be flushed out and
    then grown up again.
    Thus, in order to get $1+2^n$ offending actions, some action
    $a(d)$, that increases $c(d)$ must be performed in $\bar\tau'$.
    However, since $\bar s$ does not change and because $\pi^*(\tup{\bar s,c',\alpha'})=a(d)$
    implies $\pi(\bar s)=a$, then $\pi$ induces a self-loop $(\bar s,a,\bar s)$ in $P'=T_D(Q)$.
    By Theorem~\ref{thm:scc:stack} and the fact that $\pi$ is terminating in $P'$,
    it follows that $\pi(\bar s)=a$ must be a $Dec(X)$ action for some variable $X$
    in $stack(\bar s)$ which, by the definition of $\pi^*$, is equal to $\alpha'$.
    This is a contradiction since the action $\pi^*(\tup{\bar s,c',\alpha'})=a(d)$ is
    only applicable when the variables it decrements are not in $\alpha'$.
    Since $\bar\tau'$ is assumed to contain on actions of type $a(\cdot,d')$, $d'\leq d$,
    or $Push(\cdot,d-1)$, it is impossible to have $1+2^n$ offending actions in $\bar\tau'$.

    \textbf{Second case.} Let us assume that the $1+2^n$ offending actions in $\bar\tau'$ are
    executed in triplets $\tup{\bar s,c,\alpha}$ that do not repeat the state $\bar s$.
    Since there cannot be more than $2^n$ different states $\bar s$, this case is also impossible.

    \textbf{Last case.} Let us assume that the $1 + 2^n$ offending actions are executed
    in triplets $\tup{\bar s,c,\alpha}$ where $\bar s$ changes but some is repeated.
    Let $\bar s_i,\ldots,\bar s_m$ denote one such sequence of states $\bar s$ over $P$
    (and within $\bar\tau'$) where $\bar s_i = \bar s_m = \bar s$, $m > i$.
    The policy $\pi$ yields the same loop in $P'$, but this loop must be terminating in
    $P'$ since $\pi$ solves $P'$.
    Hence, by Theorem~\ref{thm:scc:stack}, there must be a state $\bar s_k$ in the loop,
    $i \leq k < m$, such that $\pi(\bar s_k)$ is a $Dec(X)$ action for a variable $X$ that
    appears in every $stack(\bar s_j)$ for the states $\bar s_j$ in the loop.

    Since the action $a(d)$ in states states $\tup{s_j,c_j,\alpha_j}$ where $d=|\alpha|$ and $X$ is in
    $\alpha_j$ that is equal to $stack(s_j)$, it follows that $X$ must be  at  level $d' \leq d$ in
    $\alpha_j$ and throughout the whole loop.

    Since the action $a(d)$ in states states $\tup{s_j,c_j,\alpha_j}$ where $d=|\alpha|$ and $X$ is in
    $\alpha_j$ that is equal to $stack(s_j)$, it follows that $X$ must be  at  level $d' \leq d$ in
    $\alpha_j$ and throughout the whole loop.
    This means that if $\tup{s_k,c_k,\alpha_k}$ is the recurring state  in $\tau$ (i.e. in the cycle of $\tau$)
    for which $\pi'(\tup{s_k,c_k,\alpha_k})$ is not an extra action,
    $\pi'(\tup{s_k,c_k,\alpha_k})$ must be equal to an action of the form  $a(X,d')$ for $d' < d$,
    in contradiction with the assumption that the offending sequence  of actions  $push(X,d)$ or $a(d)$ in $\tau$
    do not contain any intermediate  $a(X,d')$ action with  $d'< d$.
    This establishes that $c(d) \leq 2^{n}$ is always true during the execution of the policy $\pi'$ over  $P$, and hence
    that the policy $\pi'$ is executable in $P$, and like $\pi$ over $P'$, it is a strong cyclic solution of $P$.
  }
\end{proof}

The second reduction from QNPs into FOND problems may be used to compute policies
for a given QNP from policies of the resulting FOND. The reduction is a sound and
complete mapping.
As mentioned above, the resulting QNP policies correspond to
controllers that map states $\bar s$ and controller states, pairs $\tup{c,\alpha}$
that encode the state of the (bounded) counters and stack, into actions.
Hence, there is still the question of whether a QNP solvable by such a controller
is solvable by a flat policy (as given in Definition~\ref{def:qnp:solution}).
In the examples below,  the controllers  found by the FOND planner
over the translation yield  flat policies where the selection of actions
does not depend on the internal controller state, but more generally  the
question remains open and beyond the scope of this paper.

\Omit{%
We have spent time trying to answer it without success.
Indeed, there are arguments that support both an affirmative and a negative answer.
For the former, QNPs contain no hidden information and thus it
seems that they satisfy the Markov property: the current state decouples the
past (history) from the future.
For the latter, it seems that in order to select an appropriate action,
an agent must keep in memory what variables it is currently trying to
get to zero to avoid any increment of them (such information is in the stack).
On the other hand, the other general approach for solving QNPs, based on LTL synthesis
and discussed below, is also incapable of resolving this issue as well as
the inefficient algorithm of brute-force enumeration and termination test
of all strong-cyclic policies.
}

\section{Examples}

Let us illustrate the translation and  its solution with a simple example and two variations.
The base QNP is $Q_1=\tup{F,V,I,O,G}$ where there are two boolean variables $F=\{p,g\}$, two
numerical variables $V=\{n,m\}$, the initial and goal states are $I=\{p,\GT{n},\GT{m}\}$ and
$G=\{g\}$ respectively, and the  four  actions in $O$ are $a_1 = \abst{p, \GT{n}}{\neg p, \DEC{n}}$,
$a_2 = \abst{\neg p}{p}$, and $\textit{fin}_1 = \abst{\EQ{n}}{g}$ and $\textit{fin}_2 = \abst{\EQ{m}}{g}$,
where the last two actions are used to  capture the \emph{disjunctive goal} $\EQ{n} \lor \EQ{m}$.
The QNP $Q_2$ is like $Q_1$ except that $a_2 = \abst{\neg p}{p, \INC{n}}$, while the QNP $Q_3$ is
like $Q_1$ (and $Q_2$) except that $a_2 = \abst{\neg p, \GT{m}}{p, \INC{n}, \DEC{m}}$.

Figure \ref{fig:syn} shows the solutions of the FOND problems $P_1=T(Q_1)$ and $P_3=T(Q_3)$
for $Q_1$ and $Q_3$ that are obtained  with the \qnptofond translator (see below), and a FOND planner.
The QNP $P_2=T(Q_2)$, on the other hand, has no solution.

In order to understand these results, recall that in the   FOND problem $T(Q)$  for any QNP $Q$,
actions that decrement variables may be applied  only when some of the decremented variables are  in the stack,
and actions that increment variables may be applied only when  none of the incremented variables is in the stack.
In addition, the translation prevents  loops  where a variable is pushed and popped from the stack,
except when  the stack contains another variable  throughout the loop that is decremented,
as otherwise the counters would grow without bounds. With these elements in mind, we can turn to the solutions
of $Q_1$ and $Q_3$, and the lack of solutions for $Q_2$.

The solution for $Q_1$ is direct. The idea is to use action $a_1$ to decrement $n$ down
to zero, and then apply $\textit{fin}_1$ to reach the goal. However, $a_1$ is applicable only
if $n$ is in the stack and $p$ is true. Then, $n$ must be pushed into the stack before the first
application of $a_1$, and $p$ must be set to true using $a_2$ after each use of $a_1$ as
observed in Fig.~\ref{fig:syn}(a).

\begin{figure}[t]
  \centering
  \resizebox{\textwidth}{!}{
  \begin{tabular}{ccc}
    \begin{tikzpicture}[thick,>={Stealth[inset=2pt,length=8pt,angle'=33,round]},font={\scriptsize},node distance=1cm,qs/.style={draw=black,fill=gray!20!white},init/.style={qs,fill=yellow!50!white},goal/.style={qs,fill=green!50!white}]
      \node[init]               (n0) { $p, \GT{n}, \GT{m}, \overline{g}$ };
      \node[qs, right =1.6cm of n0]  (n1) { $p, \GT{n}, \GT{m}, \overline{g}$ };
      \node[qs, below = of n1]  (n2) { $\overline{p}, \GT{n}, \GT{m}, \overline{g}$ };
      \node[qs, left =1.6cm of n2]  (n3) { $\overline{p}, \EQ{n}, \GT{m}, \overline{g}$ };
      \node[goal,below = of n3] (ng) { $\overline{p}, \EQ{n}, \GT{m}, g$ };
      \path[->] (n0) edge[] node[above,yshift=-2] { $\textit{Push}(n)$ } (n1);
      \path[->] (n1) edge[transform canvas={xshift=8}] node[right,yshift=0] { $a_1\!:\DEC{n}$ } (n2);
      \path[->] (n1) edge[] node[sloped,yshift=5] { $a_1\!:\DEC{n}$ } (n3);
      \path[->] (n2) edge[transform canvas={xshift=-8}] node[left,yshift=0] { $a_2$ } (n1);
      \path[->] (n3) edge[] node[right,yshift=0] { $\textit{fin}_1$ } (ng);
    \end{tikzpicture} & \qquad\qquad &
    \begin{tikzpicture}[thick,>={Stealth[inset=2pt,length=8pt,angle'=33,round]},font={\scriptsize},node distance=1cm,qs/.style={draw=black,fill=gray!20!white},init/.style={qs,fill=yellow!50!white},goal/.style={qs,fill=green!50!white},qa/.style={qs,fill=cyan!50!white}]
      \node[init, right=5cm of n0]   (m0) { $p, \GT{n}, \GT{m}, \overline{g}$ };
      \node[qs, right =1.8cm of m0]  (m1) { $p, \GT{n}, \GT{m}, \overline{g}$ };
      \node[qs, right =1.8cm of m1]  (m2) { $p, \GT{n}, \GT{m}, \overline{g}$ };
      \node[qa, below= of m2]  (m3) { $\overline{p}, \GE{n}, \GT{m}, \overline{g}$ };
      \node[qa, left =1.8cm of m3]  (m4) { $\overline{p}, \GE{n}, \GT{m}, \overline{g}$ };
      \node[qs, left =1.8cm of m4]  (m5) { $p, \GT{n}, \EQ{m}, \overline{g}$ };
      \node[goal, below = of m5]  (mg) { $p, \GT{n}, \EQ{m}, g$ };
      \path[->] (m0) edge[] node[above,yshift=-2] { $\textit{Push}(m)$ } (m1);
      \path[->] (m1) edge[] node[above,yshift=-2] { $\textit{Push}(n)$ } (m2);
      \path[->] (m2) edge[] node[right,yshift=0] { $a_1\!:\DEC{n}$ } (m3);
      \path[->] (m3) edge[] node[above,yshift=-2,xshift=4] { $\textit{Pop}(n)$ } (m4);
      \path[->] (m4) edge[] node[left,yshift=0,xshift=0] { $a_2\!:\INC{n},\DEC{m}$ } (m1);
      \path[->] (m4) edge[] node[above,yshift=-2,xshift=0] { $a_2\!:\INC{n},\DEC{m}$ } (m5);
      \path[->] (m5) edge[] node[right,yshift=0,xshift=0] { $\textit{fin}_2$ } (mg);
    \end{tikzpicture} \\[1em]
    (a) Solution for QNP $Q_1$ & & (b) Solution for QNP $Q_3$
  \end{tabular}
  }
  \caption{Solutions to  the FOND translations $T(Q_1)$ and $T(Q_3)$  of the QNPs  $Q_1$ and $Q_3$ in the text.
    Nodes represent states in the translations  (i.e., boolean QNP states augmented with stack and counters) but
    only the QNP part is shown.     Edges correspond to actions from Q or actions that manipulate the stack and counters.
    Edges are annotated with action labels and their effect on the numerical variables.
    Blue nodes represent multiple  QNP states; e.g., the node $\{\overline{p}, \GE{n},\GT{m},\overline{g}\}$ for $Q_3$ represents the QNP states where
    $\{\overline{p},\GT{m},\overline{g}\}$ and there is no restriction on the value of $n$.
    In both controllers, the initial state is the top leftmost state (in yellow) and the goal is the rightmost state at the bottom (green).
    The solution for $Q_1$ decrements $n$ with action $a_1$ until it becomes zero, and  $a_1$ requires $p$ and thus $a_1$ is interleaved with $a_2$ that makes $p$ true.
    In $Q_3$, the action $a_2$ is changed to increment $n$ as well, and to  decrement another variable $m$.
    The solution for $Q_3$ found by the solver reaches the goal by decreasing $m$ to zero using action $a_2$, while using $a_1$
    to restore the preconditions of $a_2$, and ignoring the variable $n$. Another solution could be obtained by applying the
    action $\textit{fin}_1$ to the states where $n=0$ but it would involve more controller states (not shown).
  }
  \label{fig:syn}
\end{figure}
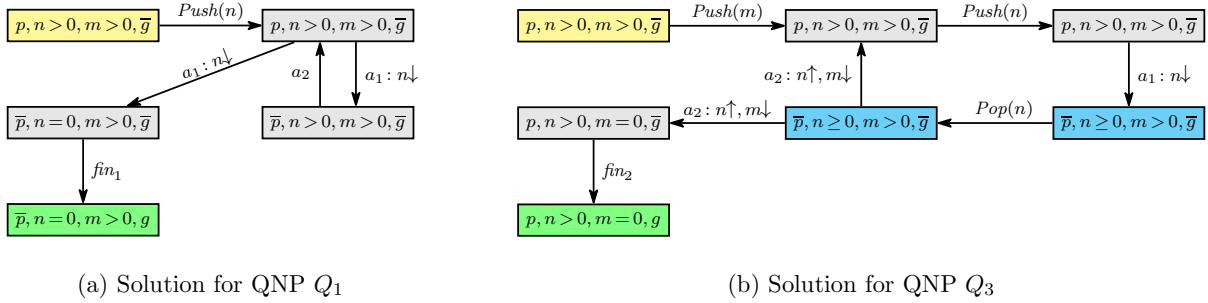

The QNP $Q_2$ has no solution because the action $a_2$ that must be used to restore the
precondition $p$ of $a_1$ in the loop increments the variable $n$ as well. Since action $a_1$ requires
$n$ to be in the stack, which prevents an action like $a_2$ in $Q_2$ to execute, the only possibility
is to pop $n$ from the stack before executing $a_2$, but then variable $n$ should be
pushed and popped from the stack in a loop  without the presence of a another variable in
the stack that is decremented; a condition that is precluded by the translation, and which
is necessary for the loop to terminate.

It is precisely the presence of such extra variable $m$ that is decremented by $a_2$ and not
incremented by any action that  makes the  QNP $Q_3$  solvable. While the goal can be
achieved by either reaching the condition $n=0$ or $m=0$ and then applying the action $\textit{fin}_1$
or $\textit{fin}_2$ respectively, the solution found by the FOND planner over $T(Q_3)$
focuses on  decrementing $m$ down to zero using $a_2$ instead, and then using $\textit{fin}_2$ to reach the goal.
The loop involving an unbounded number of pushes/pops of $n$ in the stack is permitted in the solution
because the variable $m$ is in the stack and it is decremented in each iteration of the loop. In the translation this means
that decrements of $m$ reset the counters associated with the variables like $n$ that are above
$m$ in the stack, cf.\ Fig.~\ref{fig:syn}(b).

\Omit{
In order to do so, $m$ needs to be in the stack and $p$ must be false each time $a_2$ is applied.
The former is achieved with the initial push while the latter by interleaving the executions
of $a_2$ with executions of $a_1$, yet $a_1$ requires $n$ to be in the stack and thus the plan
keeps pushing $n$ into and popping it from the stack.

The solution for $Q_3$ sheds light on the reasons for the unsolvability of $Q_2$.
It is because $n$ would need to enter and leave the stack an unbounded number of times
but there is no other variable to decrement that can be used to permit such unbounded
operations on the stack.}

Finally,  an extra  dummy variable like $z$ together with a dummy action $a_3=\abst{\GT{z}}{\DEC{z}}$
would not make the QNP $Q_2$ solvable either, because while such an action could be used to
render a terminating loop involving actions $a_1$ and $a_2$, along with $a_3$, it would move
the unsolvability to the subproblem that results when the variable $z$ becomes zero.
The policy would need to map states where $z=0$ holds into goal states,
and then, the  same obstacles arise.

\section{Extensions,  Variations, and Memoryless Controllers}

For simplicity, QNPs have been defined with certain syntactic restrictions that do not
limit their expressive power. In particular, there are no actions with non-deterministic
effects on boolean variables, and there are no effects that can map a literal $\EQ{X}$
non-deterministically into the literals $\GT{X}$ and $\EQ{X}$, as decrements require
$\GT{X}$ as a precondition, and increments yield the outcome $\GT{X}$.
Yet, these two restrictions are not essential and can be bypassed.
As we have seen before, non-deterministic effects on boolean variables as found in
strong and strong cyclic FOND planning can be obtained in QNPs by using additional
numerical variables (Section~6).
%
Likewise, a sequence of two consecutive effects $Inc(X)$ and $Dec(X)$ can be used to
emulate the effect of an action that leaves the value of $X$ completely uncertain; i.e.,
it may increase $X$, decrease $X$, or leave the value of $X$ unchanged.

There are also syntactic conditions that when satisfied, make QNPs simpler. For example,
if the numerical variables in a QNP $Q$ can be linearly ordered as $X_1,X_2,\ldots$ so that the actions
that increase a variable $X_i$ also decrease a variable $X_j$ that appears later in the
ordering, then every policy $\pi$ that solves the direct translation $P=T_D(Q)$ of $Q$
will necessarily solve $Q$ as well, as any such policy will terminate in $Q$.
Indeed, if $X_\ell$ is the last variable in the ordering that is decreased in a cycle
induced by $\pi$, the cycle cannot include a different state where the variable $X_\ell$
is increased, as otherwise, the condition implies that another variable $X_j$ appearing
later in the ordering must be decreased in the cycle. For such well-ordered QNPs,
the simpler, direct translation $T_D$ is thus both sound and complete.

Finally, recall that a policy $\pi$ that solves the FOND $P=T(Q)$ obtained from the
translation of a QNP $Q$, defines a policy that can be understood as
a memory-extended controller  that solves $Q$ using extra boolean variables
and actions. Often, however, the policy $\pi(\bar s,m)$ obtained from $P$, where $s$ is
the state over $Q$ and $m$ is the memory state, can be projected onto a
\emph{memoryless controller}  $\pi'$ for $Q$ which does not use the extra variables
or actions. The projection is possible and immediate  when there is no state
$s$ in the controller where the actions $\pi(\bar s,m)$ and $\pi(\bar s,m')$
selected by the policy over two different memory states are associated with different actions in $Q$.
In such a case, all states $s$ can be  associated with a single action $a$ from
$Q$ (there must be one such action as otherwise $\pi$ would not map $\bar s$ into a goal state),
and the  memoryless policy $\pi'$ for $Q$ is then  simply  $\pi'(\bar s)=a$.

\section{QNPs and Generalized Planning}

QNPs were introduced by \citeay{sid:aaai2011} as a useful model for planning with loops and for generalized planning
\shortcite{levesque:loops,bonet:icaps2009,srivastava:generalized,hu:levesque,bonet:ijcai2015}. 
In the basic formulation \shortcite{hu:generalized}, a generalized planning problem
is a  collection $\Q$ of planning instances $P$ that share the same set of
(ground) actions and the same set state features. The solution of the generalized problem $\Q$
is then  a mapping from feature valuations into actions that solves each of the instances in $\Q$.
This basic formulation was then extended to domains where the ground actions
change from instance to instance, as in most relational domains, like Blocksworld,
where  the actions are determined by a small number of action schemas and
object names. This formulation is achieved by means of QNPs \shortcite{bonet:ijcai2018}
where a single QNP is shown to be capable of representing a suitable abstraction
of the concrete instances $P$ involving different ground actions.

A  QNP is a \emph{sound abstraction} of a family $\Q$ of concrete problems $P$ from a common domain
when the boolean and numerical variables $p$ and $n$ in the QNP accurately
represent and track the value changes of certain boolean and numerical state features $\phi_p$ and $\phi_n$
in each of the instances. More precisely, a QNP action $\bar{a}=\abst{Pre}{\Eff}$ is sound relative to $\Q$
if in any (reachable) state $s$ over an instance $P$ in $\Q$, if the formula $Pre$ is true in $s$,  with the
QNP variables $p$ and $n$ replaced by the state feature functions $\phi_p$ and $\phi_n$, then there is
an action in $P$ that induces a state transition $(s,s')$ that agrees with the effects of the abstract
action $a$, once again,  with the QNP variables replaced by the corresponding state features.

For example, in Blocksworld, a QNP action $\bar{a}=\abst{\neg H,\GT{n(x)}}{H,\DEC{n(x)}}$ provides  a suitable abstraction of
the action of picking up a block from above a designated block $x$. In this abstraction, the variable $H$ is associated
with the boolean state feature $\phi_H$ that captures when the arm is empty, and the variable $n(x)$ is
associated with the numerical state feature $\phi_n$ that captures the number of blocks above $x$.
The abstract action is sound in the sense that for any state $s$ of a Blocksworld instance $P$,
if $\phi_H(s)$ is false and $\phi_{n(x)}(s) > 0$, there is an action $b$ in $P$ that induces a state
transition $(s,s')$ such that $\phi_H(s')$ is true and $\phi_{n(x)}(s') < \phi_{n(x)}(s)$, in agreement with $\bar{a}$.
The concrete action $b$ is then said to instantiate the abstract action $\bar{a}$ in the state $s$ of the instance.

In general, if all the QNP actions are sound relative to $\Q$ and suitable conditions apply to the
initial and goal conditions of the QNP in relation to $\Q$,  any policy $\pi$ that solves the QNP
provides  a solution to  $\Q$; i.e., the policy $\pi$ can be applied to any instance $P$ in $\Q$
by interpreting the variables in the QNP in terms of the state features \shortcite{bonet:ijcai2018}.
More recently, it has been shown how these   QNPs can be learned directly
from a  PDDL  description of the domain and a number of sampled  instances and their plans \shortcite{bonet:aaai2019},
and also how to obtain testable logical conditions to check the soundness of a QNP-based abstraction
for an instance $P$ of a PDDL domain description \shortcite{bonet:ijcai2019}.
The QNPs used in the experiments below are variations of QNPs learned from samples.

A final question about  QNPs for generalized planning is what are the generalized planning problems
for which QNPs provide a suitable abstraction and solution method. It turns out that with no
restrictions on the state features $\phi_p$ and $\phi_n$ that can be abstracted into the QNP,  
there is indeed, no limit. The solution to any family $\Q$ of planning problems
can be expressed compactly  in terms of a single QNP   action  $\bar{a}=\abst{\GT{V^*}}{\DEC{V^*}}$
that involves a single numerical variable  $V^*$  associated with the  feature $\phi_{V^*}$ that measures
the optimal cost of reaching the goal from a given state. The QNP action is sound  and just says to move
in the direction of the goal. However, the application of this abstract action  in a concrete state requires the computation of
optimal costs for each successor state, which is in general intractable in the number of problem variables. Thus, a reasonable restriction
is that the QNP variables should represent ``reasonable'' features, and in particular, features
that can be computed in polynomial (perhaps linear) time.

\Omit{
Since the collection of planning instances that define a generalized planning problem $\Q$ is often given in
implicit form, for example, as a PDDL domain description $D$ plus its ``intended instantiations'', it is not
clear how to check when a QNP-based abstraction $Q$ (and its solution) is sound for a given instantiation of $D$.
As an example, consider the above abstract action $\bar a$ for Blocksworld plus the abstract action
$\bar a'=\abst{H}{\neg H}$ to put the block being held away from block $x$.
Both actions together with the features $H$ and $n(x)$ provide a sound abstraction for
the generalized task of clearing a block $x$; indeed, a plan is simply to do $\bar a$
when $\GT{n(x)}$ and $\neg H$, and $\bar a'$ when $H$.
However, the generalized plan only works for instances that correctly encode Blocksworld configurations,
e.g., configurations where there are no ``circular towers'' or no block rests on two different blocks.
In recent work, \citeay{bonet:ijcai2019} show how to obtain from the QNP $Q$ and the domain description $D$, under some assumptions,
a set of logical formulas that when satisfied over the reachable states of an instance $P$
guarantee the soundness of $Q$ for $P$. Often, such formulas can be checked directly using only the initial
state of $P$ without the need to generate the complete state space.
}

\Omit{%
With this restriction, one limitation of QNPs, as defined, for
generalized planning, is that they do not accommodate ``don't care'' effects.
\textcolor{red}{\bf (CHECK THIS: In Blocksworld, the Putaway abstract action is compatible with many concrete actions... Also,
this discussion may be too long and detailed for what's needed. On one hand, we say and show how QNPs can be used for GP. On the other,
examples solve given QNPs, not necessarily faithful but inspired in GP. What can be said is that for making things simpler, QNP actions illustrate the type
of things that can be captured but sometimes they are not sound. Specific comments for unsoundness can be given for each domain...)}
That is,
a QNP policy must select a single action in each state, and each action has very precise
effects on the variables. There are situations where this is too constrained. For example,
if there is an agent that can move in a grid to pick a package to be delivered to a target location,
it is simple to think of a general policy: move to the package, pick it up, move to the target
and drop it (see Delivery example below). Yet in some instances of the problem, the agent will
move away from the target when moving to pick up the package, and other instances, for example
when the package is near the target, it'll be moving towards both the package and the target
at the same time. This means that the language of QNPs or, more conveniently, the form of the policies
need   to be extended to provide a sufficiently rich abstract model for generalized planning.
In this case, for example, there could be three types of action for moving towards the package:
one that increases the distance to the target, one that decreases such a distance, and one that
keeps it equal. Then the policy could say: when not holding the package, apply a concrete action
that delivers the effects of one of these QNP actions. We'll consider such extensions elsewhere.
}

\section{Implementation: \Qnptofond}

The reduction from QNPs to FOND problems has been implemented and it can be found in the
Github repository \url{https://github.com/bonetblai/qnp2fond}. The reduction produces a
FOND problem without conditional effects which is desirable since some FOND planners
do not support them. The reduction may be parametrized in terms of
the maximum range of  counters and the maximum stack depth, but their  default values are
those  used in the proofs that ensure completeness in general.
In some cases, however, the reduction can be made simpler without compromising soundness or completeness.
\Omit{
\textcolor{red}{\bf REVISE THIS PARAGRAPH: direct translation need to be amended with counter
resets for the resulting translation to be sound and complete; that is, when optimizing translation,
if the action decrements $X$ that no other action incs, it is not sufficient to do the
direct translation of the action. Instead, the direct translation need to be augmented
with effects that reset all stack counters. The idea behind the optimization is that such
variable can be initially pushed into the stack and any action that decrements it, has
as side effect to reset all stack counters..}
}
For example, if no numerical variable in the QNP is incremented by an action,
the more compact and efficient  direct translation $T_D$ is sound and complete.
The same holds if the QNP is well-ordered as defined above, where
variables may be ordered as $X_1,\ldots,X_n$ such that actions that increment
a variable $X_i$ decrement a variable $X_j$ for  $i < j$.

On the other hand, if there are actions that increase variables but there are  no actions
that increase a variable $X$, there is no need to add push and pop actions for $X$, nor preconditions for $X$ to
be in the stack when decrementing it.
This is possible since $X$ may be assumed to be always at the bottom of the stack (as if an implicit
action that pushes $X$ has been executed before any other action)
and because there is no need to pop $X$ as no action increments it. However, according to the
translation, every action that decreases $X$ must reset all stack counters.
Interestingly, this and the other simplifications are special cases of a more general simplification
that can be applied when the QNP contains a subset $S$ of well-ordered numerical variables,
as defined above;  indeed, a subset of variables $X$ that are not incremented by any action is such a subset.
For such a subset $S$, one can assume that all variables in $S$ are in the stack and thus
simplify the translation by $(a)$ not generating push/pop actions for the variables in $S$,
$(b)$ not adding extra preconditions to actions that decrement a variable in $S$ (such actions already
fulfill the requirement that one of its decremented variables must be in the stack), and
$(c)$ requiring that actions that decrement any variable in $S$, reset all the stack counters.\footnote{Observe
  that the simplification cannot affect completeness as no action becomes inapplicable since
  preconditions are removed and counters are potentially reset more often.
  For soundness, suppose that there is a non-terminating execution and let $\R$ be the set of
  recurrent actions in such an execution.
  It cannot be the case that $\R$  contains only actions that affect numerical variables outside $S$
  since such actions are not affected by the simplification and thus are subject to the full translation.
  Hence, there is at least one action in $\R$ that either increments or decrements a variable in $S$.
  If the action increments a variable in $S$, by definition, the action also decrements another variable
  in $S$, one that comes later in the ordering. By inductive reasoning, $\R$ contains an action that
  decrements a ``last'' variable $X_\ell$ in $\R$ that is not incremented by any other action in $\R$.
  Hence, such a variable eventually becomes zero and the action cannot be applied afterwards, contradicting
  the choice of $\R$.}
When $S$ contains all the numerical variables in the QNP, then there is no need to have a stack
and the simplified translation $T$ simply becomes the direct translation $T_D$.

\Qnptofond supports the optimization for variables that are not
incremented by any action. The general optimization involving a subset of well ordered variables is
not yet implemented since finding such a subset and ordering is intractable.\footnote{A subset $S$ and
  ordering can be found by solving a simple SAT theory, while a maximum-size subset and ordering can be
  found with a weighted-max SAT solver, or by doing multiple calls to a SAT solver.}
There also options for disabling the optimization, and even to  force the direct translation
whose  solutions need to be checked with \sieve. By default, the options are to  use the optimization
whenever possible, and to use the maximum range for the counters and stack depth that ensure completeness
in general.

\section{Experiments}

We illustrate  the performance of  the QNP  translator and solver
over some QNPs that capture abstraction of generalized planning problems.
There is no useful baseline for evaluating the use of the translator in combination with
FOND planners. The only other complete QNP planner would result from translating QNP
problems into LTL synthesis tasks but the comparison would be unfair because 
LTL synthesis is computationally harder than QNP planning.
There is also no complete generate-and-test QNP planner reported, which would have to
generate the strong cyclic policies of the direct FOND translation, one by one, while
checking them for termination.

In the examples, the resulting FOND problems $T(Q)$  are solved with FOND-SAT \shortcite{geffner:fond-sat},
a general SAT-based FOND planner that is available at \url{https://github.com/tomsons22/FOND-SAT}.
This planner calls a SAT solver multiple times.\footnote{In each call to the SAT solver, FOND-SAT
  tries to find a solution (controller) with a given number of states (budget). If no controller
  is found, the budget is increased by 1, and repeat until one is found.}
The SAT solver used is Glucose 4.1 \shortcite{glucose} which builds on Minisat \shortcite{minisat}.
FOND-SAT solves a FOND problem by constructing a compact controller where each node
represents one or more states. When depicting the  solutions found, controller nodes that
represent more than one state are shown in blue,  while the nodes  that correspond to the
initial  and goal QNP states are  shown in yellow and green respectively.

\subsection{Clearing a Block}

A general plan for clearing a given block $x$ in a Blocksworld instance
can be obtained by solving the following abstraction expressed as the QNP
problem $Q=\tup{F,V,I,O,G}$ where $F=\{H\}$ contains a
boolean variable $H$ that represents if a block is being held, $V=\{n\}$
contains a numerical variable that counts the number of blocks above $x$,
the initial situation $I=\{\neg H, \GT{n}\}$ assumes that block $x$ is
not clear and that gripper is empty, and the goal situation $G=\{\EQ{n}\}$
expresses that there are no blocks on top of $x$. There are four actions
in $O$:\footnote{From the point of view of generalized planning, the last
  QNP  action, $\textit{Pick-other} = \abst{\neg H}{H}$ is not sound in the
  Blocksworld domain because on the states where there is a single tower and
  the gripper is empty, the abstract action is applicable yet no concrete
  action corresponds to it.
  The obtained policy however is sound as it never prescribes the action \textit{Pick-other}.}
%
%
\begin{enumerate}[--]
  \item $\textit{Putaway}        = \abst{H}{\neg H}$ to put the block being held on the table or on a block not above $x$,
  \item $\textit{Pick-above-$x$} = \abst{\neg H, \GT{n}}{H, \DEC{n}}$ to pick the top block above $x$,
  \item $\textit{Put-above-$x$}  = \abst{H}{\neg H, \INC{n}}$ to put the block being held on the top  block above $x$, and
  \item $\textit{Pick-other} = \abst{\neg H}{H}$ to pick a block not above $x$.
\end{enumerate}

\Qnptofond translates the QNP $Q$ into the FOND problem $P=T(Q)$ that has 20 atoms and 16 actions in less than 0.01 seconds.
FOND-SAT solves $P$ in 0.08 seconds after 3 calls to the SAT solver that require 0.01 seconds in total.
The solution produced by FOND-SAT is the controller shown in Figure~\ref{fig:clear}
which depicts actions from the QNP as well as the added actions to manipulate the stack.
The resulting policy is a finite-state controller that can be
converted into the memoryless policy $\pi'$ for the QNP that picks a block above $x$
when the gripper is free, and puts away the block being held otherwise.

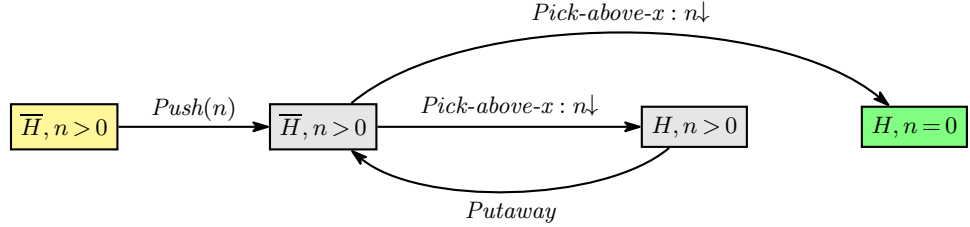
\begin{figure}[t]
  \centering
  \begin{tikzpicture}[thick,>={Stealth[inset=2pt,length=8pt,angle'=33,round]},font={\footnotesize},node distance=2cm,qs/.style={draw=black,fill=gray!20!white},init/.style={qs,fill=yellow!50!white},goal/.style={qs,fill=green!50!white}]
    \node[init] (n0) at  (0,0) { $\overline{H}, \GT{n}$ };
    \node[qs,right = of n0]   (n1) { $\overline{H}, \GT{n}$ };
    \node[qs,right = 3.5cm of n1]   (n2) { $H, \GT{n}$ };
    \node[goal,right = 1.5cm of n2]   (ng) { $H, \EQ{n}$ };
    \path[->] (n0) edge[] node[above,yshift=-2] { $\textit{Push}(n)$ } (n1);
    \path[->] (n1) edge[] node[above,yshift=0] { $\textit{Pick-above-x}:\DEC{n}$ } (n2);
    \path[->] (n1) edge[out=40,in=140,looseness=0.7] node[above,yshift=0] { $\textit{Pick-above-x}:\DEC{n}$ } (ng);
    \path[->] (n2) edge[out=220,in=320,looseness=0.7] node[below,yshift=0] { $\textit{Putaway}$ } (n1);
  \end{tikzpicture}
  \caption{%
    Solution of the FOND translation for the QNP $Q$ for clearing a block $x$.
    Nodes represent states in the translation (i.e., QNP states augmented with stack and counters) but only the QNP part is shown.
    Edges correspond to actions from $Q$ or actions that manipulate the stack and counters. Edges are annotated with action labels and their effect on the numerical variables. 
    The initial state is the leftmost state and the goal is the rightmost one. 
    The policy is strong cyclic and terminating.
  }
  \label{fig:clear}
\end{figure}

\subsection{Placing a Block on Top of Another}

A general plan for placing block $x$ on top of block $y$  may be
obtained by solving a suitable QNP. For simplicity, we only consider the case
when the blocks $x$ and $y$ are initially in \emph{different towers} and remains
so on the non-goal reachable states (i.e., there are no actions to put blocks
above $x$ or $y$ except \textit{Put-$x$-on-$y$} that achieves the goal).
In this case, the QNP $Q=\tup{F,V,I,O,G}$ has boolean and numerical variables
$F=\{E,X,D\}$ and $V=\{n,m\}$ that represent  whether the gripper is empty ($E$)
or holding the block $x$ ($X$), or if the goal has been achieved ($D$), while
the numerical variables $n$ and $m$ count the number of blocks above $x$ and $y$
respectively.
The initial state $I=\{E,\neg X,\neg D,\GT{n}, \GT{m}\}$ describes a configuration
where no block is being held, there are blocks above $x$ and above $y$,
but $x$ and $y$ are in different towers; the goal is simply $G = \{D\}$.
The QNP has six  different actions:\footnote{One reason for why
  this particular QNP is not suitable for dealing with states where
  the blocks $x$ and $y$ are in  the same tower, is that
  the $\textit{Pick-above-$x$}$ and  $\textit{Pick-above-$y$}$
  actions are not sound  relative to the intended  features $\phi_n$ and $\phi_m$ that the variables
  $n$ and $m$ are aimed to track. A needed $\textit{Pick-above-$x$-and-$y$}$ action,
  for example, will decrement  the two variables  $n$ and $m$.}
%
%
\begin{enumerate}[--]
  \item $\textit{Pick-$x$}       = \abst{E, \EQ{n}}{\neg E, X}$ to pick block $x$,
  \item $\textit{Pick-above-$x$} = \abst{E, \GT{n}}{\neg E, \DEC{n}}$ to pick the topmost block that is above $x$,
  \item $\textit{Pick-above-$y$} = \abst{E, \GT{m}}{\neg E, \DEC{m}}$ to pick the topmost block that is above $y$,
  \item $\textit{Putaside}       = \abst{\neg E, \neg X}{E}$ to put aside (not above $x$ or $y$) the block being held,
  \item $\textit{Put-x-aside}    = \abst{\neg E, X}{E, \neg X}$ to put aside the block $x$ (being held), and
  \item $\textit{Put-$x$-on-$y$} = \abst{\neg E, X, \EQ{m}}{E, \neg X, D, \INC{m}}$ to put $x$ on $y$.
\end{enumerate}

Since $Q$ has no action to increment $n$, \qnptofond generates in less than 0.01 seconds
a simplified FOND problem $P=T(Q)$ that has 47 atoms and 35 actions, 42 atoms for encoding
the counters and the stack, and 27 actions that manipulate the stack and the top counter.
%
FOND-SAT finds the solution shown in Figure~\ref{fig:on} in 2.55 seconds;
it makes 9 calls to the SAT solver that require 0.31 seconds in total.
The resulting  controller also defines a  memoryless policy.

\begin{figure}[t]
  \centering
  \resizebox{\textwidth}{!}{
  \begin{tikzpicture}[thick,>={Stealth[inset=2pt,length=8pt,angle'=33,round]},font={\footnotesize},node distance=2cm,qs/.style={draw=black,fill=gray!20!white},init/.style={qs,fill=yellow!50!white},goal/.style={qs,fill=green!50!white}]
    \node[init]   (n0) { $E, \overline{X}, \overline{D}, \GT{n}, \GT{m}$ };
    \node[qs, right = 2.75cm of n0] (n2) { $\overline{E}, \overline{X}, \overline{D}, \GT{n}, \GT{m}$ };
    \node[qs, below = 1.4cm of n0] (n1) { $\overline{E}, \overline{X}, \overline{D}, \EQ{n}, \GT{m}$ };
    \node[qs, right =2.75cm of n1] (n3) { $\overline{E}, \overline{X}, \overline{D}, \EQ{n}, \GT{m}$ };
    \node[qs, right = of n3] (n4) { $E, \overline{X}, \overline{D}, \EQ{n}, \GT{m}$ };
    \node[qs, right = 2.75cm of n4] (n5) { $\overline{E}, \overline{X}, \overline{D}, \EQ{n}, \EQ{m}$ };
    \node[qs, below = 1.4cm of n5] (n6) { $\overline{E}, \overline{X}, \overline{D}, \EQ{n}, \EQ{m}$ };
    \node[qs, left = 2.75cm of n6]  (n7) { $E, \overline{X}, \overline{D}, \EQ{n}, \EQ{m}$ };
    \node[qs, left = of n7] (n8) { $\overline{E}, X, \overline{D}, \EQ{n}, \EQ{m}$ };
    \node[goal, left =2.75cm of n8] (ng) { $E, \overline{X}, D, \EQ{n}, \GT{m}$ };
    \path[->] (n0) edge[transform canvas={xshift=-15}] node[right,yshift=0] { $\textit{Pick-above-$x$}:\DEC{n}$ } (n1);
    \path[->] (n0) edge[] node[above,yshift=0] { $\textit{Pick-above-$x$}:\DEC{n}$ } (n2);
    \path[->] (n2) edge[out=140,in=40,looseness=0.7] node[above,yshift=0] { $\textit{Putaside}$ } (n0);
    \path[->] (n1) edge[] node[above,yshift=-2] { $\textit{Push}(m)$ } (n3);
    \path[->] (n3) edge[] node[above,yshift=0] { $\textit{Putaside}$ } (n4);
    \path[->] (n4) edge[] node[above,yshift=-1] { $\textit{Pick-above-y}:\DEC{m}$ } (n5);
    \path[->] (n4) edge[out=140,in=40,looseness=0.7] node[above,yshift=-1] { $\textit{Pick-above-y}:\DEC{m}$ } (n3);
    \path[->] (n5) edge[] node[right,yshift=-2] { $\textit{Pop}(m)$ } (n6);
    \path[->] (n6) edge[] node[above,yshift=0] { $\textit{Putaside}$ } (n7);
    \path[->] (n7) edge[] node[above,yshift=0] { $\textit{Pick-x}$ } (n8);
    \path[->] (n8) edge[] node[above,yshift=-1] { $\textit{Put-x-on-y}:\INC{m}$ } (ng);
  \end{tikzpicture}}
  \caption{%
    Solution of the FOND translation for the QNP $Q$ for placing a block $x$ on top of another block $y$.
    Nodes represent states in the translation (i.e., QNP states augmented with stack and counters) but only the QNP part is shown.
    Edges correspond to actions from $Q$ or actions that manipulate the stack and counters. Edges are annotated with action labels and their effect on the numerical variables. 
    The initial state is the top leftmost state and the goal is the leftmost one at the bottom. 
    The policy is strong cyclic and terminating.
  }
  \label{fig:on}
\end{figure}
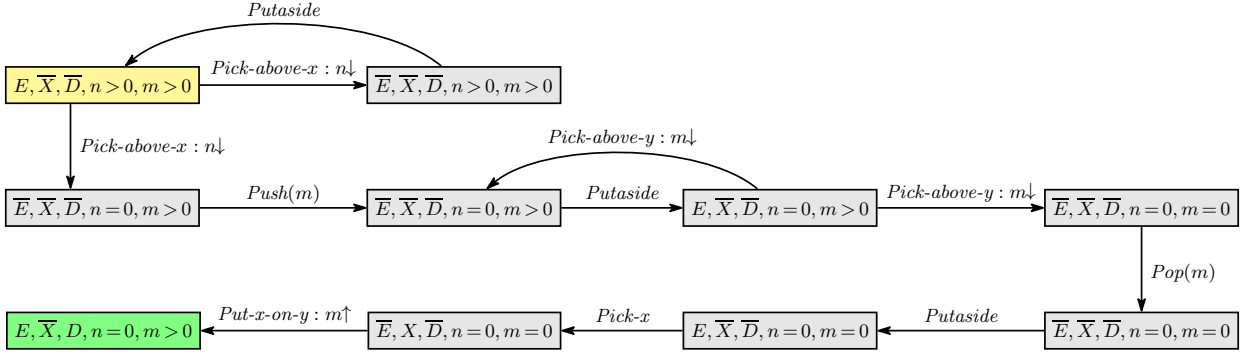

\subsection{Gripper}

The task involves a robot with grippers whose goal is to move a number of balls from
one room into a target room.  Each gripper may carry one ball at a time.
An abstraction for generalized plan may be obtained with a QNP $Q=\tup{F,V,I,O,G}$
that involves one boolean feature $T$ that indicates whether the robot is in the target
room, and three numerical features that count the number of balls still to be moved ($b$),
the number of balls being carried ($c$), and the number of empty grippers ($g$).
The initial state $I=\{T,\GT{b},\EQ{c},\GT{g}\}$ places the robot at the target
room, carrying no balls, and with some balls in the other room. The goal description
is simply $G=\{\EQ{c},\EQ{b}\}$ saying that the number of balls being carried 
and the number of balls in the other room are both zero.
The set of (abstract) actions in $Q$ is:
%
%
\begin{enumerate}[--]
  \item $\textit{Drop-at-source} = \abst{\neg T, \GT{c}}{\INC{b},\DEC{c},\INC{g}}$ to drop balls in the other room,
  \item $\textit{Drop-at-target} = \abst{T, \GT{c}}{\DEC{c}, \INC{g}}$ to drop balls in the target room,
  \item $\textit{Pick-at-source} = \abst{\neg T, \GT{b}, \GT{g}}{\DEC{b}, \INC{c}, \DEC{g}}$ to pick balls in the other room,
  \item $\textit{Move}           = \abst{\neg T}{T}$ to move to the target room, and
  \item $\textit{Leave}          = \abst{T}{\neg T}$ to move to the other room.
\end{enumerate}
It is easy to see that this abstraction  captures
instances  involving any number of balls and robots with any positive number of grippers.

The translator runs in less than 0.01 seconds and generates a FOND problem
$P=T(Q)$ with 54 atoms and 47 actions (50 atoms for encoding the counters and
the stack, and 29 actions to manipulate the stack and move the top counter).
FOND-SAT finds the solution shown in Figure~\ref{fig:gripper} in 11.25 seconds;
it makes 10 calls to the SAT solver that require 3.00 seconds in total.
The resulting  controller also defines a  memoryless policy.

\begin{figure}[t]
  \resizebox{\textwidth}{!}{
  \begin{tikzpicture}[thick,>={Stealth[inset=2pt,length=8pt,angle'=33,round]},font={\footnotesize},node distance=2cm,qs/.style={draw=black,fill=gray!20!white},init/.style={qs,fill=yellow!50!white},goal/.style={qs,fill=green!50!white},qa/.style={qs,fill=cyan!50!white}]
    \useasboundingbox (-2.5,-5.15) rectangle (18.20, 1.25);

    \node[init]  (n0) { $T,\GT{b},\EQ{c},\GT{g}$ };
    \node[qs, right= of n0]  (n1) { $T,\GT{b},\EQ{c},\GT{g}$ };
    \node[qa, right= of n1]  (n2) { $\overline{T},\GT{b},\GE{c},\GT{g}$ };
    \node[qa, right=3.75cm of n2]  (n3) { $\overline{T},\EQ{b},\GT{c},\GE{g}$ };
    \node[qa, below= 1.4cm of n3]  (n5) { $T,\EQ{b},\GT{c},\GE{g}$ };
    \node[qa, below= 1.4cm of n5]  (n7) { $T,\EQ{b},\GT{c},\GE{g}$ };
    \node[goal, left=3.75cm of n7]  (ng) { $T,\EQ{b},\EQ{c},\GT{g}$ };
    \node[qs, below= 1.4cm of n2]  (n4) { $\overline{T},\GT{b},\GT{c},\EQ{g}$ };
    \node[qs,  left= of n4]  (n6) { $\overline{T},\GT{b},\GT{c},\EQ{g}$ };
    \node[qs,  left= of n6]  (n9) { $T,\GT{b},\EQ{c},\GT{g}$ };
    \node[qa, below= 1.4cm of n9]  (n8) { $T,\GT{b},\GT{c},\GE{g}$ };

    \path[->] (n0) edge[] node[above,yshift=-2] { $\textit{Push}(b)$ } (n1);
    \path[->] (n1) edge[] node[above,yshift=0] { $\textit{Leave}$ } (n2);
    \path[->] (n2) edge[] node[above,yshift=0,xshift=0] { $\textit{Pick-at-source}:\DEC{b},\INC{c},\DEC{g}$ } (n3);
    \path[->] (n3) edge[] node[right,yshift=0] { $\textit{Move}$ } (n5);
    \path[->] (n5) edge[] node[right,yshift=0] { $\textit{Push}(c)$ } (n7);
    \path[->] (n7) edge[] node[above,yshift=0] { $\textit{Drop-at-target}:\DEC{c},\INC{g}$ } (ng);
    \path[->] (n2) edge[] node[right,yshift=0] { $\textit{Pick-at-source}:\DEC{b},\INC{c},\DEC{g}$ } (n4);
    \path[->] (n4) edge[] node[above,yshift=-2] { $\textit{Push}(c)$ } (n6);
    \path[->] (n6) edge[transform canvas={xshift=10}] node[sloped,yshift=6,xshift=-4] { $\textit{Move}$ } (n8);
    \path[->] (n8) edge[transform canvas={xshift=20}] node[left,yshift=0] { $\textit{Drop-at-target}:\DEC{c},\INC{g}$ } (n9);
    \path[->] (n9) edge[transform canvas={xshift=10}] node[sloped,yshift=6,xshift=-4] { $\textit{Pop}(c)$ } (n1);

    \path[->] (n2) edge[out=140,in=40,looseness=4] node[above,yshift=0] { $\textit{Pick-at-source}:\DEC{b},\INC{c},\DEC{g}$ } (n2);
    \path[->] (n7) edge[out=220,in=320,looseness=4] node[below,yshift=2] { $\textit{Drop-at-target}:\DEC{c},\INC{g}$ } (n7);
    \path[->] (n8) edge[out=220,in=320,looseness=4] node[below,yshift=2] { $\textit{Drop-at-target}:\DEC{c},\INC{g}$ } (n8);
  \end{tikzpicture}}
  \caption{%
    Solution of the FOND translation for the QNP $Q$ for Gripper.
    Nodes represent states in the translation (i.e., QNP states augmented with stack and counters) but only the QNP part is shown.
    Blue nodes in the controller represent one or more QNP states; 
    e.g., the node $\{\overline{T},\GT{b},\GE{c},\GT{g}\}$ in the first row represents the QNP states where $\{\overline{T},\GT{b},\GT{g}\}$ hold
    and there is no restriction on the value of $c$.
    Edges correspond to actions from $Q$ or actions that manipulate the stack and counters. Edges are annotated with action labels and their effect on the numerical variables. 
    The initial state is the top leftmost state and the goal is the second one at the bottom row. 
    The policy is strong cyclic and terminating because each self loop is terminating, and the outer loop terminates since
    the variable $b$ decreases by at least one in each iteration, and no action in the plan increases it.
  }
  \label{fig:gripper}
\end{figure}

\subsection{Delivery}

The last example involves an agent that navigates a grid and whose job is to look for packages and deliver them at a target location
subject to the constraint that it can carry one package at a time.
The generalized problem consists of all instances with a finite but unbounded grid, and a finite but unbounded number of
packages. The generalized problem can be captured with the QNP $Q=\tup{F,V,I,O,G}$ that involves one boolean feature $H$
that tells whether the agent is holding a package, and 3 numerical features that measure the distance to the next package ($d$),
the distance to the target location ($t$), and the number of packages that still need to be delivered ($p$).
The initial state $I=\{\neg H,\GT{d},\GT{t},\GT{p}\}$ corresponds to a state where the agent holds no package and is neither
at a package or the target location, while the goal description $G=\{\neg H,\EQ{p}\}$ indicates that all packages have been delivered.
The QNP $Q$  has five actions:\footnote{From %
  the point of view of generalized planning,  the QNP  actions \textit{Move} and \textit{Home} are not  sound
  over this domain  since the agent may move towards the next package without moving away from the target location, and vice versa.}
%
%
\begin{enumerate}[--]
  \item $\textit{Move}      = \abst{\GT{d}, \GT{p}}{\DEC{d}, \INC{t}}$ to move towards next package and away from target location,
  \item $\textit{Home}      = \abst{\GT{t}}{\INC{d}, \DEC{t}}$ to move towards target location and away from next package,
  \item $\textit{Pick}      = \abst{\neg H,\EQ{d}}{H}$ to pick a package,
  \item $\textit{Drop}      = \abst{H,\GT{t}}{\neg H}$ to drop a package not in target location, and
  \item $\textit{Deliver}   = \abst{H,\EQ{t},\GT{p}}{\neg H, \INC{d}, \DEC{p}}$ to deliver a package in target location.
\end{enumerate}

Since the variable $p$ is not incremented by any action, the translator generates
a simplified FOND problem $P=T(Q)$, in less than 0.01 seconds, that has 54 atoms and
40 actions (50 atoms for encoding the counters and the stack, and 29 actions to manipulate
the stack and move the top counter).
FOND-SAT finds the solution shown in Figure~\ref{fig:delivery} in 2.99 seconds;
it makes 8 calls to the SAT solver that require 0.30 seconds in total.
As in previous examples, the resulting controller defines a memoryless policy.

\begin{figure}[t]
  \centering
  \resizebox{\textwidth}{!}{
  \begin{tikzpicture}[thick,>={Stealth[inset=2pt,length=8pt,angle'=33,round]},font={\footnotesize},node distance=2cm,qs/.style={draw=black,fill=gray!20!white},init/.style={qs,fill=yellow!50!white},goal/.style={qs,fill=green!50!white},qa/.style={qs,fill=cyan!50!white}]
    \node[init, diagonal fill={cyan!50!white}{yellow!50!white}] (n0) { $\overline{H},\GT{d},\GE{t},\GT{p}$ };
    \node[qa, right= of n0]       (n1) { $\overline{H},\GT{d},\GE{t},\GT{p}$ };
    \node[qs, right=2.75cm of n1]       (n2) { $\overline{H},\EQ{d},\GT{t},\GT{p}$ };
    \node[qs, right= of n2]       (n3) { $\overline{H},\EQ{d},\GT{t},\GT{p}$ };
    \node[qs, below=1.4cm of n3]  (n4) { $H,\EQ{d},\GT{t},\GT{p}$ };
    \node[qa, left= of n4]        (n5) { $H,\GE{d},\GT{t},\GT{p}$ };
    \node[qs, left=2.75cm of n5]        (n6) { $H,\GT{d},\EQ{t},\GT{p}$ };
    \node[qs, left= of n6]        (n7) { $\overline{H},\GT{d},\EQ{t},\GT{p}$ };
    \node[goal, below=1.4cm of n7](ng) { $\overline{H},\GT{d},\EQ{t},\EQ{p}$ };

    \path[->] (n0) edge[] node[above,yshift=-2] { $\textit{Push}(d)$ } (n1);
    \path[->] (n1) edge[] node[above,yshift=-1] { $\textit{Move}:\DEC{d},\INC{t}$ } (n2);
    \path[->] (n2) edge[] node[above,yshift=-2] { $\textit{Pop}(d)$ } (n3);
    \path[->] (n3) edge[] node[right,yshift=0] { $\textit{Pick}$ } (n4);
    \path[->] (n4) edge[] node[above,yshift=-2] { $\textit{Push}(t)$ } (n5);
    \path[->] (n5) edge[] node[above,yshift=-2] { $\textit{Home}:\INC{d},\DEC{t}$ } (n6);
    \path[->] (n6) edge[] node[above,yshift=-2] { $\textit{Pop}(t)$ } (n7);
    \path[->] (n7) edge[transform canvas={xshift=10}] node[left,yshift=-1] { $\textit{Deliver}:\INC{d},\DEC{p}$ } (n0);
    \path[->] (n7) edge[transform canvas={xshift=10}] node[left,yshift=-1] { $\textit{Deliver}:\INC{d},\DEC{p}$ } (ng);
    \path[->] (n1) edge[out=140,in=40,looseness=4] node[above,yshift=-1] { $\textit{Move}:\DEC{d},\INC{t}$ } (n1);
    \path[->] (n5) edge[out=220,in=320,looseness=4] node[below,yshift=2] { $\textit{Home}:\INC{d},\DEC{t}$ } (n5);
  \end{tikzpicture}}
  \caption{%
    Solution of the FOND translation for the QNP $Q$ for Delivery.
    Nodes represent states in the translation (i.e., QNP states augmented with stack and counters) but only the QNP part is shown.
    Blue nodes in the controller represent one or more QNP states; 
    e.g., the top leftmost node $\{\overline{H},\GT{d},\GE{t},\GT{p}\}$ represents the QNP initial state and a similar state except $\EQ{t}$.
    Edges correspond to actions from $Q$ or actions that manipulate the stack and counters. Edges are annotated with action labels and their effect on numerical variables. 
    The initial state is the top leftmost state and the goal is the one at the bottom row. 
    The policy is strong cyclic and terminating because each self loop is terminating, and the outer loop terminates since
    the variable $p$ decreases by one in each iteration, and no action in the plan increases it.
  }
  \label{fig:delivery}
\end{figure}
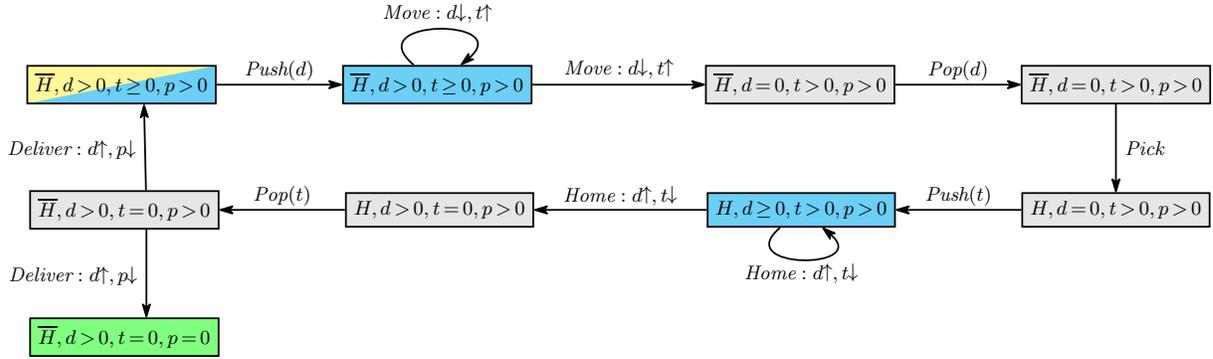

\section{Related Work}

QNPs have been introduced as a decidable planning model able to account for plans with loops
\shortcite{sid:aaai2011,sid:aaai2015}. In addition, by defining the boolean and numerical variables
of QNPs as suitable general boolean and numerical features over a given domain, it has been
shown that QNPs can be used to express abstract models for generalized planning, in particular
when the ground actions change from instance to instance \shortcite{bonet:ijcai2018}.
More recently, it has been shown that these QNP abstractions can be learned automatically
from a given planning domain and sampled plans \shortcite{bonet:aaai2019}. QNPs thus provide
a convenient language for a \textbf{model-based approach} to  the computation of general
plans where such plans are derived from a (QNP) planning model. If the model is sound,
the general plans are guaranteed to be correct \shortcite{bonet:ijcai2018,bonet:aaai2019}.
This is in contrast with the more common \textbf{inductive} or \textbf{learning-based approaches}
where plans computed to solve a few sampled instances are assumed to generalize to
other instances by virtue of the compact form of the plans \shortcite{khardon:action,martin:applied,fern:generalized}.
These learning approaches do not construct or solve a suitable abstraction of the
problems as expressed by QNPs. Inductive approaches have been used recently to learn
general plans in the form of finite-state controllers \shortcite{bonet:icaps2009,hu:synthesis},
finite programs \shortcite{javier:procedures}, and deep neural nets learned in a supervised
manner \shortcite{trevizan:dl,sanner:dl,fern:dl,mausam:dl}. A key difference between learning-based
and model-based approaches is that the correctness of the latter follows from the soundness
of the model.
Deep reinforcement learning methods have also been used recently for computing generalized
plans with no supervision \shortcite{sid:sokoban,mazebase}, yet by not using first-order symbolic
representations, they have difficulties in dealing with relational domains that involve
objects and relations \shortcite{shanahan:review}. Forms of generalized planning have also been
formulated using first-order logic \shortcite{srivastava:generalized,sheila:generalized2019},
and general plans over finite horizons have been derived using first-order regression as
well \shortcite{boutilier2001symbolic,wang2008first,van2012solving,sanner:practicalMDPs}.
The use of QNPs for expressing (or learning) abstractions for generalized planning problems,
combined with the compilation of QNPs into FOND problems, allows us to benefit from the
performance of propositional off-the-shelf FOND planners like PRP \shortcite{prp}, MyND \shortcite{mynd},
and FOND-SAT \shortcite{geffner:fond-sat} in order to  find  generalized plans.

QNP problems can be easily translated into LTL planning problems with FOND domains,
reachability goals, and a particular type of trajectory constraints that can be expressed
as compact LTL formulas \shortcite{bonet:ijcai2017}. The trajectory constraints use a fragment of LTL
\shortcite{ltl} to express the QNP fairness constraints; namely, that \emph{trajectories where
a variable $X$ is decremented an infinite number of times and incremented a finite
number of times, are not fair} and can thus be ignored.\footnote{The LTL formula used by \citeay{bonet:ijcai2017}
  is slightly different but logically equivalent,   and states that always, if a variable is decreased infinitely often
  and increased finitely often, it must eventually have value zero. They are equivalent because
  once the value zero is reached, the variable cannot be decreased without being increased.}
As a result, QNP planning can be translated quite efficiently (linear time) into LTL synthesis.
The  translation, however, is not particularly useful computationally, as QNP planning,
like FOND planning, is EXP-Complete, while LTL synthesis is 2EXP-Complete (doubly exponential in time) \shortcite{ltl:2exp}.
In LTL planning, i.e., FOND planning with LTL goals and trajectory constraints, the double
exponential growth is in the number of variables that appear in such formulas
\shortcite{camacho:ltlplanning,sasha:ltlplanning}.
Tight complexity bounds for the specific type of LTL trajectory constraints that QNPs
convey have not been settled. 
In any case, such methods
need to compute in explicit form the QNP transition system, and thus require exponential
space in the total number of variables.
This lower bound, that does not consider the LTL formulas associated with the trajectory
constrains, already matches the upper bound of the brute-force algorithm that uses \sieve\ as subroutine
(cf.\ proof of Theorem~\ref{thm:plan-existence:expspace}).

\section{Conclusions}

QNPs are convenient abstract models for generalized planning.
In this work we have studied QNPs and placed them on firmer ground
by studying their theoretical foundations further.
We have also shown that FOND problems can be reduced into QNPs,
and vice versa, that QNPs can reduced into FOND problems.
Both translations are new and polynomial-time computable,
hence establishing that the two models have the same expressive power and the same complexity.
The previous, direct translation $T_D(Q)$ for QNPs $Q$ also yields a FOND problem but with
fairness assumptions that do not match those underlying strong cyclic FOND planning,
and this is why solutions to this translation need to be checked for termination.
QNPs can be reduced to LTL synthesis and planning but these are harder computational tasks.
In the future, it would be interesting to study more general types of fairness assumptions and
the fragments of LTL that can be handled efficiently with methods similar to the ones
developed for QNPs that are based on polynomial-time translations and off-the-shelf FOND planners.

\section*{Acknowledgements}

We thank the associate editor, Patrik Haslum, and the anonymous reviewers
for useful comments that helped us to improve the paper.
This work was performed while B.\ Bonet was at sabbatical leave at Universidad
Carlos III de Madrid under a UC3M-Santander C\'atedra de Excelencia Award.
H.\ Geffner is also a Guest WASP professor at Link\"oping University, Sweden,
and his work is partially funded by a grant TIN-2015-67959-P from MINECO, Spain,
and a grant from the Knut and Alice Wallenberg (KAW) Foundation, Sweden.

\bibliographystyle{theapa}
\bibliography{control}

\end{document}